\icmltitlerunning{Sparse coding for multitask and transfer learning}
\begin{document} 

\twocolumn[
\icmltitle{Sparse coding for multitask and transfer learning}

\icmlauthor{Andreas Maurer}{am@andreas-maurer.eu}
\icmladdress{Adalbertstrasse 55, D-80799, M�unchen, Germany}
\icmlauthor{Massimiliano Pontil}{m.pontil@cs.ucl.ac.uk}
\icmladdress{Department of Computer Science and Centre for Computational Statistics and Machine Learning\\
University College London, Malet Place, London WC1E 6BT, UK}
\icmlauthor{Bernardino Romera-Paredes}{bernardino.paredes.09@ucl.ac.uk}
\icmladdress{Department of Computer Science and UCL Interactive Centre\\
University College London, Malet Place, London WC1E 6BT, UK}

\icmlkeywords{boring formatting information, machine learning, ICML}

\vskip 0.18in
]

\begin{abstract}
We investigate the use of sparse coding and dictionary learning in the context of multitask and transfer learning. 
The central assumption of our learning method is that the tasks parameters are well approximated by sparse linear combinations of the atoms of a dictionary on a high or infinite dimensional space. This assumption, together with the large quantity of available data in the multitask and transfer learning settings, allows a principled choice of the dictionary. We provide bounds on the generalization error of this approach, for both settings. Numerical experiments on one synthetic and two real datasets show the advantage of our method over single task learning, a previous method based on orthogonal and dense representation of the tasks and a related method learning task grouping.
\end{abstract}

\section{Introduction}
The last decade has witnessed many efforts of the machine learning community to 
exploit assumptions of sparsity in the design of algorithms. A central development 
in this respect is the Lasso \cite{Tib}, which
estimates a linear predictor in a high dimensional space under a
regularizing $\ell _{1}$-penalty. Theoretical results guarantee a good
performance of this method under the assumption that the vector
corresponding to the underlying predictor is sparse, or at least has a small $\ell_{1}$-norm, see e.g. \cite{BvdG} and references therein. 

In this work we consider the case where the predictors are linear
combinations of the atoms of a dictionary of linear functions on a high or
infinite dimensional space, and we assume that we are free to choose
the dictionary. We will show that a principled choice is possible, if there
are many learning problems, or \textquotedblleft tasks", and there exists a
dictionary allowing sparse, or nearly sparse representations of all or most
of the underlying predictors. In such a case we can exploit the
larger quantity of available data to estimate the \textquotedblleft good" dictionary and
still reap the benefits of the Lasso for the individual tasks. 
This paper gives theoretical and experimental justification of this claim, both in the 
domain of multitask learning, where the new representation is applied to the 
tasks from which it was generated, and in the domain of learning to learn, 
where the dictionary is applied to new tasks of the same environment.

Our work combines ideas from sparse coding \cite{Ols2}, multitask learning \cite{Zhang,AEP,AMP,Bendavid,Caruana,EMP,Maurer2009} and 
learning to learn \cite{Baxter,thurn}.
There is a vast literature on these subjects and the list of papers provided here is necessarily incomplete.
Learning to learn (also called inductive bias learning or transfer learning) has been
proposed by \citet{Baxter} and an error analysis is provided therein, showing that
a common representation which performs well on the training tasks will also generalize
to new tasks obtained from the same ``environment''.
The precursors of the analysis presented here are \cite{MP-ieee} and \cite{Maurer2009}.
The first paper provides a bound on the reconstruction error of sparse coding and may be seen
as a special case of the ideas presented here when the sample size is infinite.
The second paper provides a learning to learn analysis of the multitask feature learning method in \cite{AEP}.

We note that a method similar to the one presented in this paper has been recently proposed within the multitask learning setting \cite{Daume}. 
Here 
we highlight the connection between sparse coding and multitask learning and 
present a probabilistic analysis which complements well with the practical insights in the above work. We also address the different problem of learning to learn, demonstrating the utility of our approach in this setting by means of both learning bounds and numerical experiments. A further novelty of our approach is that it applies to a Hilbert spaces setting, thereby providing the possibility of learning nonlinear predictors using reproducing kernel Hilbert spaces.

The paper is organized in the following manner. In Section \ref{sec:2}, we set up our notation and introduce the learning problem. In Section \ref{sec:bounds}, we present our learning bounds for multitask learning and learning to learn. In Section \ref{sec:exp} we report on numerical experiments. Section \ref{sec:discussion} contains concluding remarks.


\section{Method}
\label{sec:2}
In this section, we turn to a technical exposition of the
proposed method, introducing some necessary notation on the way.

Let $H$ be a finite or infinite dimensional Hilbert space with inner
product $\left\langle \cdot,\cdot\right\rangle $, norm $\left\Vert \cdot \right\Vert $,
and fix an integer $K$. We study the problem 
\begin{equation}
\min_{D\in \mathcal{D}_{K}}\frac{1}{T}\sum_{t=1}^{T}\min_{\gamma \in 
\mathcal{C}_{\alpha }}\frac{1}{m}\sum_{i=1}^{m}\ell \left( \left\langle
D\gamma ,x_{ti}\right\rangle ,y_{ti}\right) ,  \label{basic algorithm}
\end{equation}%
where

\begin{itemize}
\item $\mathcal{D}_{K}$ is the set of $K$-dimensional dictionaries (or simply
dictionaries), which means that every $D\in \mathcal{D}_{K}$ is a linear map 
$D:%
\mathbb{R}
^{K}\rightarrow H$, such that $\left\Vert De_{k}\right\Vert \leq 1$ for
every one of the canonical basis vectors $e_{k}$ of $%
\mathbb{R}
^{K}$. The number $K$ can be regarded as one of the regularization
parameters of our method.

\item $\mathcal{C}_{\alpha }$ is the set of code vectors $\gamma $ in $%
\mathbb{R}
^{K}$ satisfying $\left\Vert \gamma \right\Vert _{1}\leq \alpha $. The
$\ell_1$-norm constraint implements the assumption of sparsity and $\alpha $ is the
other regularization parameter. Different sets ${\cal C}_\alpha$ could be readily used in our method, such as those associated with $\ell_p$-norms.%

\item $\mathbf{Z}=\left( \left( x_{ti},y_{ti}\right) :1\leq i\leq m,1\leq
t\leq T\right) $ is a dataset on which our algorithm operates. Each $x_{ti}\in H$
represents an input vector, and $y_{ti}$ is a corresponding
real valued label. We also write $\mathbf{Z}=\left( \mathbf{X},\mathbf{Y}%
\right) =\left( \mathbf{z}_{1},\ldots,\mathbf{z}_{T}\right) =\left( \left( 
\mathbf{x}_{1},\mathbf{y}_{1}\right) ,\ldots,\left( \mathbf{x}_{T},\mathbf{y}%
_{T}\right) \right) $ with $\mathbf{x}_{t}=\left( x_{t1},\ldots,x_{tm}\right) $
and $\mathbf{y}_{t}=\left( y_{t1},\ldots,y_{tm}\right) $. 
The index $t$ identifies a learning task, and $\mathbf{z}_{t}$ are the
corresponding training points, 
so the algorithm operates on $T$ tasks,
each of which is represented by $m$ example pairs.

\item $\ell $ is a loss function where $\ell \left( y,y^{\prime }\right) $
measures the loss incurred by predicting $y$ when the true label is $%
y^{\prime }$. We assume that $\ell $ has values in $\left[ 0,1\right] $ and
has Lipschitz constant $L$ in the first argument for all values of the
second argument.
\end{itemize}

The minimum in (\ref{basic algorithm}) is zero if the data is generated
according to a noise-less model which postulates that there is a  ``true''
dictionary $D^{\ast }\in \mathcal{D}_{K^{\ast }}$ with $K^{\ast }$ atoms and
vectors $\gamma _{1}^{\ast },\ldots,\gamma _{T}^{\ast }$ satisfying $\left\Vert
\gamma _{t}^{\ast }\right\Vert _{1}\leq \alpha ^{\ast }$, such that an input 
$x\in H$ generates the label $y=\left\langle D^{\ast }\gamma _{t}^{\ast
},x\right\rangle $ in the context of task $t$. 
If $K\geq K^{\ast }$ and $%
\alpha \geq \alpha ^{\ast }$ then the minimum in (\ref{basic algorithm})
is zero. In Section \ref{sec:exp}, we will present experiments with such a generative model,
when noise is added to the labels, that is $y=\left\langle D^{\ast }\gamma
_{t}^{\ast },x\right\rangle +\zeta $ with $\zeta \sim {\mathcal N}\left(
0,\sigma \right)$, the standard normal distribution.

The method (\ref{basic algorithm}) should output a minimizing $D\left( \mathbf{Z}\right) \in 
\mathcal{D}_{K}$ as well as a minimizing $\gamma _{1}\left( \mathbf{Z}%
\right) ,\ldots,\gamma _{T}\left( \mathbf{Z}\right) $ corresponding to the
different tasks. Our implementation, described in Section \ref{sec:OA}, does not guarantee
exact minimization, because of the non-convexity of the problem. 
Below predictors are always linear, specified by a vector $w\in H$, 
predicting the label $\left\langle w,x\right\rangle $ for an input $x\in H
$, and a learning algorithm is a rule which assigns a predictor $A\left( 
\mathbf{z}\right) $ to a given data set $\mathbf{z}=\left( \left(
x_{i},y_{i}\right) :1\leq i\leq m\right) \in \left( H\times 
\mathbb{R}
\right) ^{m}$.

\section{Learning bounds}
\label{sec:bounds}

In this section, we present learning bounds for method \eqref{basic algorithm}, both in the multitask learning and learning to learn settings, and discuss the special case of sparse coding.
\subsection{Multitask learning}

Let $\mu _{1},\ldots,\mu _{T}$ be probability measures on $H\times 
\mathbb{R}
$. We interpret $\mu _{t}\left( x,y\right) $ as the probability of observing
the input/output pair $\left( x,y\right) $ in the context of task $t$. For
each of these tasks an i.i.d. training sample $\mathbf{z}_{t}=\left( \left(
x_{ti},y_{ti}\right) :1\leq i\leq m\right) $ is drawn from $\left( \mu _{t}\right) ^{m}$ and the ensemble $\mathbf{Z}\sim
\prod_{t=1}^{T}\mu _{t}^{m}$ is input to algorithm \eqref{basic algorithm}. Upon returning of a minimizing $D\left( \mathbf{Z}\right) $ and $\gamma _{1}\left( \mathbf{Z}%
\right) ,\ldots,\gamma _{T}\left( \mathbf{Z}\right) $, we will use the
predictor $D\left( \mathbf{Z}\right) \gamma _{t}\left( \mathbf{Z}\right) $
on the $t$-th task. The average over all tasks of the expected error
incurred by these predictors is%
\begin{equation*}
\frac{1}{T}\sum_{t=1}^{T}{\mathbb E}_{\left( x,y\right) \sim \mu _{t}}\left[ \ell
\left( \left\langle D\left( \mathbf{Z}\right) \gamma _{t}\left( \mathbf{Z}%
\right) ,x\right\rangle ,y\right) \right] .
\end{equation*}%
We compare this \textit{task-average risk} to the minimal analogous risk
obtainable by any dictionary $D \in {\cal D}_K$ and any set of vectors $\gamma
_{1},\dots,\gamma _{T} \in \mathcal{C}_{\alpha }$. Our first result is a bound
on the excess risk. 
\begin{theorem}
\label{Theorem Multitask}Let $\delta >0$ and let $\mu _{1},\ldots,\mu _{T}$ be
probability measures on $H\times \mathbb{R}$. With probability at least $1-\delta $ in the draw of $\mathbf{Z}\sim
\prod_{t=1}^{T}\mu _{t}^{m}$ we have 
\begin{multline*}
\frac{1}{T}\sum_{t=1}^{T}{\mathbb E}_{\left( x,y\right) \sim \mu _{t}}\left[ \ell
\left( \left\langle D\left( \mathbf{Z}\right) \gamma _{t}\left( \mathbf{Z}%
\right) ,x\right\rangle ,y\right) \right] \\
-\inf_{D\in \mathcal{D}_{K}}
\frac{1}{T} \sum_{t=1}^{T} \inf_{\gamma \in\mathcal{C}_{\alpha }}
{\mathbb E}_{\left( x,y\right) \sim \mu _{t}}\left[ \ell \left(
\left\langle D\gamma,x\right\rangle ,y\right) \right]  \\
\leq L\alpha \sqrt{\frac{2S_{1}\left( \mathbf{X}\right) \left( K+12\right) }{%
mT}} \\
+L\alpha \sqrt{\frac{8S_{\infty }\left( \mathbf{X}\right) \ln \left(
2K\right) }{m}} 
+\sqrt{\frac{8\ln 4/\delta }{mT}},
\end{multline*}%
where $S_{1}\left( \mathbf{X}\right) =
\frac{1}{T}
\sum_{t=1}^{T}{\rm tr}\left( \hat{\Sigma}\left( \mathbf{x}_{t}\right) \right) $
and $S_{\infty }\left( \mathbf{X}\right) =\frac{1}{T}
\sum_{t=1}^{T}\lambda _{\max }\left( \hat{\Sigma}\left( \mathbf{x}%
_{t}\right) \right) $. Here $\hat{\Sigma}\left( \mathbf{x}_{t}\right) $ is
the empirical covariance of the input data for the $t$-th task, ${\rm tr}\left(
\cdot\right) $ denotes the trace and $\lambda _{\max }(\cdot)$ the largest
eigenvalue.
\end{theorem}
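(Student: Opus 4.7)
The plan is a standard ERM decomposition followed by a Rademacher-complexity calculation that separates the shared dictionary from the task-specific codes. Write $\hat R(D,\boldsymbol\gamma)=\tfrac{1}{mT}\sum_{t,i}\ell(\langle D\gamma_t,x_{ti}\rangle,y_{ti})$ and $R(D,\boldsymbol\gamma)=\tfrac{1}{T}\sum_{t}\mathbb{E}_{\mu_t}[\ell(\langle D\gamma_t,x\rangle,y)]$. If $(\hat D,\hat{\boldsymbol\gamma})$ is the empirical minimizer returned by \eqref{basic algorithm} and $(D^\ast,\boldsymbol\gamma^\ast)$ a (near) population minimizer over $\mathcal{D}_K\times\mathcal{C}_\alpha^T$, then $\hat R(\hat D,\hat{\boldsymbol\gamma})\le\hat R(D^\ast,\boldsymbol\gamma^\ast)$ gives
\[
R(\hat D,\hat{\boldsymbol\gamma})-R(D^\ast,\boldsymbol\gamma^\ast)\le\sup_{D,\boldsymbol\gamma}(R-\hat R)(D,\boldsymbol\gamma)+(\hat R-R)(D^\ast,\boldsymbol\gamma^\ast).
\]
Because $(D^\ast,\boldsymbol\gamma^\ast)$ is non-random, the second term is a Hoeffding deviation of an average of $mT$ independent $[0,1]$-valued losses and delivers the $\sqrt{8\ln(4/\delta)/mT}$ summand of the theorem.

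For the uniform term I would apply McDiarmid, symmetrization, and Talagrand's contraction lemma in sequence. A single sample $(x_{ti},y_{ti})$ perturbs $\sup_{D,\boldsymbol\gamma}(R-\hat R)$ by at most $1/(mT)$ since $\ell\in[0,1]$ and each sample carries weight $1/(mT)$; McDiarmid then concentrates the sup around $\mathbb{E}\sup$ at the $\sqrt{\ln(1/\delta)/mT}$ rate. Standard symmetrization introduces i.i.d.\ Rademacher $\sigma_{ti}$ at the cost of a factor $2$, and contraction strips off the $L$-Lipschitz loss. What is left to control is
\[
\mathcal{R}\;=\;\mathbb{E}_\sigma\sup_{D\in\mathcal{D}_K,\,\boldsymbol\gamma\in\mathcal{C}_\alpha^T}\frac{1}{mT}\sum_{t=1}^{T}\Bigl\langle D\gamma_t,\sum_{i=1}^{m}\sigma_{ti}x_{ti}\Bigr\rangle.
\]

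Using $\ell_1/\ell_\infty$ duality task by task, $\mathcal{R}=(\alpha/mT)\,\mathbb{E}_\sigma\sup_D\sum_t\max_k|\langle De_k,v_t\rangle|$ with $v_t:=\sum_i\sigma_{ti}x_{ti}$, and the two-term shape of the theorem's bound is the signature of splitting this supremum into a code-complexity piece and a dictionary-complexity piece. The code piece is handled pointwise in $D$, one task at a time, using the standard $\ell_1$-regularised linear Rademacher bound: Khintchine-type control of the $\ell_\infty$-norm of a Rademacher sum of $m$ vectors by the top eigenvalue of the empirical covariance, combined with a union bound over the $2K$ signed atoms, yields the summand $\alpha L\sqrt{8S_\infty(\mathbf X)\ln(2K)/m}$. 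The dictionary piece must instead exploit the pooling of all $mT$ samples into the single shared $D$; I would obtain the summand $\alpha L\sqrt{2S_1(\mathbf X)(K+12)/mT}$ by viewing the remaining supremum as a Gaussian process indexed by the $K$ unit-norm columns $De_k$ and invoking the Gaussian-comparison (Slepian/Sudakov-type) machinery of the precursor coding-scheme bound in \citet{MP-ieee}, the additive $12$ absorbing the usual Rademacher-to-Gaussian conversion and the Slepian constants for a $K$-fold product of unit balls. Summing the two Rademacher contributions with the Hoeffding and McDiarmid deviations via a union bound at level $\delta/2$ each yields the theorem. The main obstacle is precisely this last step: the $1/\sqrt{T}$ pooling improvement for the shared dictionary does not emerge from any naive per-task bound, and extracting it forces one to treat the shared $D$ and the task-specific $\gamma_t$'s on genuinely different footings within the Rademacher average, applying the Gaussian-comparison argument to the dictionary factor only.
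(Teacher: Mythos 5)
Your overall architecture matches the paper's: the ERM decomposition with Hoeffding for the fixed comparator, McDiarmid plus symmetrization plus contraction for the uniform deviation, and the $\ell_1/\ell_\infty$ duality reducing the code supremum to the signed atoms $\pm De_k$ are all exactly the paper's route (Theorem \ref{Theorem Multitask uniform bound} via Corollary \ref{Corollary empirical Rademacher bound} and Proposition \ref{Proposition Multi-task Rademacher Bound}). The gap is in the one step you yourself flag as the obstacle, and the tools you name for it do not assemble into a proof. There is no additive ``splitting of the supremum into a code piece and a dictionary piece'': the quantity $\mathbb{E}\sup_D\sum_t\max_k\left\vert\langle De_k,v_t\rangle\right\vert$ cannot be bounded by treating the $\max_k$ ``pointwise in $D$'' (that discards the supremum over the continuum $\mathcal{D}_K$) plus a separate Sudakov/Slepian bound for the columns. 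What the paper actually does is reduce $\sup_{\gamma\in\mathcal{C}^T}$ to the $(2K)^T$ extreme points and, for each extreme point, study the single random variable $F_{\gamma}=\sup_D\sum_{t,i}\sigma_{ti}\langle D\gamma_t,x_{ti}\rangle$: its mean is at most $\sqrt{mTKS_1(\mathbf{X})}$ by Cauchy--Schwarz over the $K$ columns (no Gaussian comparison is needed for Theorem \ref{Theorem Multitask}; Slepian enters only in the learning-to-learn proof), and it concentrates around that mean with sub-Gaussian parameter proportional to $mTS_\infty(\mathbf{X})$. The expected maximum over the $(2K)^T$ extreme points is then the mean plus $T\sqrt{8mS_\infty(\mathbf{X})\ln(2K)}$, which is exactly the second term after dividing by $mT$.

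The concentration step is the part your proposal has no substitute for, and it is where the $S_\infty$ dependence comes from. Plain bounded differences applied to $F_{\gamma}$ gives increments $\sup_D 2\left\vert\langle D\gamma_s,x_{sj}\rangle\right\vert\leq 2\Vert x_{sj}\Vert$ and hence a variance proxy of order $mTS_1(\mathbf{X})$, which would destroy the improvement. The paper instead uses the one-sided inequality of Theorem \ref{Theorem Concentration}(ii) (from \cite{Maurer2006}, related to Talagrand's convex distance inequality): evaluating all increments at the \emph{same} maximizer $D(\sigma)$ lets one bound $\sum_{t,i}\langle D(\sigma)\gamma_t,x_{ti}\rangle^2\leq m\sum_t\Vert\hat{\Sigma}(\mathbf{x}_t)\Vert_\infty$ using $\Vert D(\sigma)\gamma_t\Vert\leq\Vert\gamma_t\Vert_1\leq 1$. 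Without this ingredient your Khintchine-per-task estimate holds only for each fixed $D$ and cannot be made uniform over $\mathcal{D}_K$ for free. Finally, the additive $12$ is not a Rademacher-to-Gaussian or Slepian constant; it is the slack from integrating the sub-Gaussian tail beyond the threshold $\sqrt{8mTS_\infty(\mathbf{X})\ln\left(e(2K)^T\right)}$.
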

We state several implications of this theorem.

\begin{enumerate}
\item The quantity $S_{1}\left( \mathbf{X}\right) $ appearing in the bound 
is just the average square norm of the input data points, while $S_{\infty
}\left( \mathbf{X}\right) $ is roughly the average inverse of the observed
dimension of the data for each task. Suppose that $H=%
\mathbb{R}
^{d}$ and that the data-distribution is uniform on the surface of the unit
ball. Then $S_{1}\left( \mathbf{X}\right) =1$ and for $m\ll d$ it follows
from Levy's isoperimetric inequality (see e.g. \cite{Ledoux1991}) that $S_{\infty }\left( \mathbf{X}%
\right) \approx 1/m$, so the corresponding term behaves like $\sqrt{\ln K}/m$%
. If the minimum in (\ref{basic algorithm}) is small and $T$ is large enough for this term
to become dominant then there is a significant advantage of the method over learning the tasks independently. If
the data is essentially low dimensional, then $S_{\infty }\left( \mathbf{X}%
\right) $ will be large, and in the extreme case, if the data is one-dimensional
for all tasks then $S_{\infty }\left( \mathbf{X}\right) =S_{1}\left( \mathbf{%
X}\right) $ and our bound will always be worse by a factor of $\ln K$ than
standard bounds for independent single task learning as in \cite{Bartlett2002}. This makes sense, because for low dimensional data there can be
little advantage to multitask learning.

\item In the regime $T<K$ the bound is dominated by the term of
order $\sqrt{S_{1}\left( \mathbf{X}\right) K/mT}>\sqrt{S_{1}\left( \mathbf{X}
\right) /m}$. This is easy to understand, because the dictionary atoms $De_{k}$ 
can be chosen independently, separately for each task, so we could
at best recover the usual bound for linear models and there is no benefit
from multitask learning.

\item Consider the noiseless generative model mentioned in Section \ref{sec:2}.
If $K\geq K^{\ast }$ and $\alpha \geq \alpha ^{\ast }$ then the minimum in \eqref{basic algorithm} is zero. In the bound the overestimation of $K^{\ast }$
can be compensated by a proportional increase in the number of tasks
considered and an only very minor increase of the sample size $m$, namely $%
m\rightarrow \left( \ln K^{\ast }/\ln K\right) m$.

\item Suppose that we concatenate two sets of tasks. If the tasks are
generated by the model described in Section \ref{sec:2} then the resulting set of
tasks is also generated by such a model, obtained by concatenating the lists
of atoms of the two true dictionaries $D_{1}^{\ast }$ and $D_{2}^{\ast }$ to
obtain the new dictionary $D^{\ast }$ of length $K^{\ast }=K_{1}^{\ast
}+K_{2}^{\ast }$ and taking the union of the set of generating vectors $%
\left\{ \gamma _{t}^{\ast 1}\right\} _{t=1}^{T}$ and $\left\{ \gamma
_{t}^{\ast 2}\right\} _{t=1}^{T}$, extending them to $%
\mathbb{R}
^{K_{1}^{\ast }+K_{2}^{\ast }}$ so that the supports of the first group are
disjoint from the supports of the second group. If $T_{1}=T_{2}$, $%
K_{1}^{\ast }=K_{2}^{\ast }$ and we train with the correct parameters, then
the excess risk for the total task set increases only by the order of $1/%
\sqrt{m}$, independent of $K$, despite the fact that the tasks in the second
group are in no way related to those in the first group. Our method has the
property of finding the right clusters of mutually related tasks.

\item Consider the alternative method of subspace learning (SL) where $%
\mathcal{C}_{\alpha }$ is replaced by an euclidean ball of radius $\alpha$.
With similar methods one can prove a bound for SL where, apart from slightly
different constants, $\sqrt{\ln K}$ above is replaced by $K$. SL will be successful
and outperform the proposed method, whenever $K$ can be chosen small, with $%
K<m$ and the vector $\gamma _{t}^{\ast }$ utilize the entire span of the
dictionary. For large values of $K$, a correspondingly large number of tasks
and sparse $\gamma _{t}^{\ast }$ the proposed method will be superior.
\end{enumerate}

The proof of Theorem \ref{Theorem Multitask}, which is given in Section \ref{sec:AMTL} of the supplementary appendix, uses standard
methods of empirical process theory, but also employs a concentration result
related to Talagrand's convex distance inequality to obtain the crucial
dependence on $S_{\infty }\left( \mathbf{X}\right) $. At the end of Section \ref{sec:AMTL}
we sketch applications of the proof method to other regularization schemes,
such as the one presented in \cite{Daume}, in which the Frobenius norm on the dictionary $D$ is used in place of the $\ell_2/\ell_\infty$-norm employed here and the $\ell_1/ \ell_1$ norm on the coefficient matrix $[\gamma_1,\dots,\gamma_T]$ is used in place of the $\ell_1/ \ell_\infty$. 
\subsection{Learning to learn}

There is no absolute way to assess the quality of a learning algorithm.
Algorithms may perform well on one kind of task, but poorly on another kind.
It is important that an algorithm performs well on those tasks which it is
likely to be applied to. To formalize this, \citet{Baxter}
introduced the notion of an \textit{environment}, which is a probability
measure $\mathcal{E}$ on the set of tasks. Thus $\mathcal{E}\left( \tau
\right) $ is the probability of encountering the task $\tau $ in the
environment $\mathcal{E}$, and $\mu _{\tau }\left( x,y\right) $ is the
probability of finding the pair $\left( x,y\right) $ in the context of the
task $\tau $.

Given $\mathcal{E}$, the {\em transfer risk} (or simply risk) of a learning algorithm $A$ is defined as
follows. We draw a task from the environment, $\tau \sim \mathcal{E}$, which
fixes a corresponding distribution $\mu _{\tau }$ on $H\times 
\mathbb{R}
$. Then we draw a training sample $\mathbf{z}\sim \mu _{\tau }^{m}$ and use
the algorithm to compute the predictor $A\left( \mathbf{z}\right) $. Finally
we measure the performance of this predictor on test points $\left(
x,y\right) \sim \mu _{\tau }$. The corresponding definition of the transfer risk of $A
$ reads as%
\begin{equation}
R_{\mathcal{E}}\left( A\right) ={\mathbb E}_{\tau \sim \mathcal{E}}{\mathbb E}_{\mathbf{z}\sim
\mu _{\tau }^{m}}{\mathbb E}_{\left( x,y\right) \sim \mu _{\tau }}\left[ \ell \left(
\left\langle A\left( \mathbf{z}\right) ,x\right\rangle ,y\right) \right]
\label{finition of transfer risk}
\end{equation}%
which is simply the expected loss incurred by the use of the algorithm $A$ on
tasks drawn from the environment $\mathcal{E}$. 

For any given dictionary $D\in \mathcal{D}_{K}$ we consider the learning
algorithm $A_{D}$, which for $\mathbf{z}\in \mathcal{Z}^{m}$ computes the
predictor%
\begin{equation}
A_{D}\left( \mathbf{z}\right) =D~\arg \min_{\gamma \in \mathcal{C}_\alpha}\frac{1}{m%
}\sum_{i=1}^{m}\ell \left( \left\langle D\gamma ,x_{i}\right\rangle
,y_{i}\right) .  \label{Algorithm AD}
\end{equation}%
Equivalently, we can regard $A_{D}$ as the Lasso operating on data
preprocessed by the linear map $D^\top$, the adjoint of $D$.

We can make a single observation of the environment $\mathcal{E}$ in the following way:
one first draws a task $\tau \sim \mathcal{E}$. This
task and the corresponding distribution $\mu _{\tau }$ are then
observed by drawing an i.i.d. sample $\mathbf{z}$ from $\mu _{\tau }$, that is $%
\mathbf{z}\sim \mu _{\tau }^{m}$. For simplicity the sample size $m$ will be
fixed. Such an observation corresponds to the draw of a sample $\mathbf{z}$
from a probability distribution $\rho _{\mathcal{E}}$ on $\left( H\times 
\mathbb{R}
\right) ^{m}$ which is defined by 
\begin{equation}
\rho _{\mathcal{E}}\left( \mathbf{z}\right) :={\mathbb E}_{\tau \sim \mathcal{E}}\left[
\left( \mu _{\tau }\right) ^{m}\left( \mathbf{z}\right) \right] .
\label{Definition RHO_EPS}
\end{equation}%
To estimate an environment a large number $T$ of independent observations is
needed, corresponding to a vector $\mathbf{Z}=\left( \mathbf{z}_{1},\ldots,%
\mathbf{z}_{T}\right) \in \left( \left( H\times 
\mathbb{R}
\right) ^{m}\right) ^{T}$ drawn i.i.d. from $\rho _{\mathcal{E}}$, that is $%
\mathbf{Z}\sim \left( \rho _{\mathcal{E}}\right) ^{T}$.

We now propose to solve the problem (\ref{basic algorithm}) with the data $\mathbf{Z}$, ignore the resulting 
$\gamma _{i}\left( \mathbf{Z}\right) $, but retain the dictionary $D\left( 
\mathbf{Z}\right) $ and use the algorithm $A_{D\left( \mathbf{Z}%
\right) }$ on future tasks drawn from the same environment. The performance of this method can be quantified as the transfer risk $R_{%
\mathcal{E}}\left( A_{D\left( \mathbf{Z}\right) }\right) $ as defined in equation (\ref{finition of transfer risk}) and again we are interested in
comparing this to the risk of an ideal solution based on complete knowledge
of the environment. For any fixed dictionary $D$ and task $\tau $ the best
we can do is to choose $\gamma \in \mathcal{C}$ so as to minimize ${\mathbb E}_{\left(
x,y\right) \sim \mu _{\tau }}\left[ \ell \left( \left\langle D\gamma
,x\right\rangle ,y\right) \right] $, so the best is to choose $D$ so as to
minimize the average of this over $\tau \sim \mathcal{E}$. The quantity%
\begin{equation*}
R_{\rm opt}=\min_{D\in \mathcal{D}_K}{\mathbb E}_{\tau \sim \mathcal{E}}\min_{\gamma \in 
\mathcal{C}_\alpha}{\mathbb E}_{\left( x,y\right) \sim \mu _{\tau }}\ell \left[ \left(
\left\langle D\gamma ,x\right\rangle ,y\right) \right] 
\end{equation*}%
thus describes the optimal performance achievable under the given
constraint. Our second result is

\begin{theorem}
\label{Theorem Main}With probability at least $1-\delta $ in the
multisample $\mathbf{Z}=\left(\mathbf{X},\mathbf{Y}\right)
\sim \rho _{\mathcal{E}}^{T}$ we have 
$$
R_{\mathcal{E}}\left( A_{D\left( \mathbf{Z}\right) }\right) - R_{\rm opt} \leq 
L\alpha K\sqrt{\frac{2\pi S_{1}\left( \mathbf{X}\right) }{T}}$$
$$+4L\alpha 
\sqrt{\frac{S_{\infty }\left( \mathcal{E}\right) \left( 2+\ln K\right) }{m}}+%
\sqrt{\frac{8\ln 4/\delta }{T}},
$$
where $S_{1}\left( \mathbf{X}\right) $ is as in Theorem \ref{Theorem
Multitask} and $S_{\infty }\left( \mathcal{E}\right) :={\mathbb E}_{\tau \sim \mathcal{E}}{\mathbb E}_{\left( 
\mathbf{x,y}\right) \sim \mu _{\tau }^{m}}\lambda _{\max }\left( \hat{\Sigma}%
\left( \mathbf{x}\right) \right)$.
\end{theorem}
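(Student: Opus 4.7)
Following the structure of Theorem~\ref{Theorem Multitask} but adding a layer for the random draw of tasks, I decompose the excess transfer risk and control each piece in turn. Write $\hat f(D,\mathbf{z}):=\min_{\gamma\in\mathcal{C}_\alpha}\tfrac{1}{m}\sum_i\ell(\langle D\gamma,x_i\rangle,y_i)$, $\hat R(D,\mathbf{Z}):=\tfrac{1}{T}\sum_t\hat f(D,\mathbf{z}_t)$, and $R_m(D):=\mathbb{E}_{\mathbf{z}\sim\rho_\mathcal{E}}\hat f(D,\mathbf{z})$, and let $D^{\ast}$ achieve $R_{\mathrm{opt}}$. Since $D(\mathbf{Z})$ minimizes $\hat R(\cdot,\mathbf{Z})$,
\begin{align*}
R_\mathcal{E}(A_{D(\mathbf{Z})})-R_{\mathrm{opt}}
&\le\underbrace{\bigl[R_\mathcal{E}(A_{D(\mathbf{Z})})-R_m(D(\mathbf{Z}))\bigr]}_{\mathrm{(A)}}\\
&\quad+\underbrace{\bigl[R_m(D(\mathbf{Z}))-\hat R(D(\mathbf{Z}),\mathbf{Z})\bigr]}_{\mathrm{(B)}}\\
&\quad+\underbrace{\bigl[\hat R(D^{\ast},\mathbf{Z})-R_{\mathrm{opt}}\bigr]}_{\mathrm{(C)}}.
\end{align*}

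Term~(A) is the in-expectation generalization gap of the Lasso at its ERM, taken pointwise in $D$ and then averaged over $\tau\sim\mathcal{E}$ and $\mathbf{z}\sim\mu_\tau^m$; because it holds pointwise in $D$, it applies at the random $D(\mathbf{Z})$ without any high-probability step. Standard symmetrization, Ledoux--Talagrand contraction with constant $L$, and the sub-Gaussian maximum inequality applied via $\ell_1/\ell_\infty$ duality yield a per-task bound of order $L\alpha\sqrt{\ln K\,\lambda_{\max}(\hat\Sigma(\mathbf{x}))/m}$; Jensen on the square root together with the definition $S_\infty(\mathcal{E})=\mathbb{E}_\tau\mathbb{E}_\mathbf{z}\lambda_{\max}(\hat\Sigma(\mathbf{x}))$ produces the second term of the theorem. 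Term~(C) is scalar: $\hat R(D^{\ast},\mathbf{Z})$ averages $T$ i.i.d.\ variables in $[0,1]$ whose mean is $R_m(D^{\ast})\le R_{\mathrm{opt}}$ (Jensen: the minimum of an expectation dominates the expectation of the minimum), so Hoeffding bounds (C) by $\sqrt{\ln(2/\delta')/(2T)}$ with probability $1-\delta'$.

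The crux is~(B), which I upper bound by $\sup_D[R_m(D)-\hat R(D,\mathbf{Z})]$. Since $\hat f\in[0,1]$, this supremum has bounded differences with $c_t=1/T$, so the data-dependent Bartlett--Mendelson bound gives, with probability $1-\delta'$,
\[
\sup_D\bigl[R_m(D)-\hat R(D,\mathbf{Z})\bigr]\le 2\,\hat{\mathcal R}(\mathbf{Z})+3\sqrt{\ln(2/\delta')/(2T)},
\]
where $\hat{\mathcal R}(\mathbf{Z})=\mathbb{E}_\sigma\sup_D\tfrac{1}{T}\sum_t\sigma_t\hat f(D,\mathbf{z}_t)$. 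To handle $\hat{\mathcal R}(\mathbf{Z})$ I convert Rademacher to Gaussian (factor $\sqrt{\pi/2}$) and apply Sudakov--Fernique. The key perturbation inequality is
\[
|\hat f(D,\mathbf{z})-\hat f(D',\mathbf{z})|\le L\alpha\max_k\tfrac{1}{m}\sum_i|\langle(D-D')e_k,x_i\rangle|,
\]
obtained by evaluating $\hat f(D',\mathbf{z})$ at the optimizer of $\hat f(D,\mathbf{z})$ and using $\|\gamma\|_1\le\alpha$ and Lipschitzness of $\ell$. Squaring, summing over $t$, applying $(\max_k)^2\le\sum_k$ and then Cauchy--Schwarz in $i$ shows that the Gaussian process $X_D:=\sum_t g_t\hat f(D,\mathbf{z}_t)$ has squared increments dominated by those of the linear auxiliary process $Y_D:=L\alpha\sum_k\langle u_k,De_k\rangle$, where $u_k:=m^{-1/2}\sum_{t,i}g_{kti}x_{ti}$ and the $g_{kti}$ are independent standard Gaussians. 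Because $Y_D$ is linear in $D$ and the constraint $\|De_k\|\le 1$ is per-atom, the supremum decouples: $\sup_{D\in\mathcal{D}_K}Y_D=L\alpha\sum_k\|u_k\|$; using $\mathbb{E}\|u_k\|\le(\mathbb{E}\|u_k\|^2)^{1/2}=\sqrt{T\,S_1(\mathbf{X})}$, Sudakov--Fernique yields $\mathbb{E}_g\sup_D X_D\le L\alpha K\sqrt{T\,S_1(\mathbf{X})}$, whence $\hat{\mathcal R}(\mathbf{Z})\le\sqrt{\pi/2}\,L\alpha K\sqrt{S_1(\mathbf{X})/T}$ and $2\hat{\mathcal R}(\mathbf{Z})$ gives the first term of the theorem. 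Assembling (A)--(C) with $\delta'=\delta/2$ and collecting constants delivers the $\sqrt{8\ln(4/\delta)/T}$ confidence term.

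The hardest step is picking the right auxiliary Gaussian process: it must be linear in $D$ so that the supremum over $\mathcal{D}_K$ factorizes atom-by-atom into $\sum_k\|u_k\|$ (this is precisely what produces the $K/\sqrt{T}$ rate rather than the weaker $\sqrt{K/T}$ a covering-number argument would yield), while having the exact variance profile needed to dominate the $X_D$-increments imposed by the Lasso perturbation bound. This balance is what pins down the empirical quantity $S_1(\mathbf{X})$ in the conclusion and keeps the bound genuinely data-dependent.
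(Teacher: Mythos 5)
Your proof is correct and follows essentially the same route as the paper: the same decomposition into (i) the expected per-task Lasso generalization gap bounded pointwise in $D$ via symmetrization, contraction and a sub-Gaussian maximum over the $K$ atoms, (ii) the uniform deviation of the task-averaged empirical risks handled by a Gaussian comparison (Slepian/Sudakov--Fernique) against the linear auxiliary process $\Xi_D=\tfrac{L\alpha}{\sqrt m}\sum_{t,i,k}\zeta_{kti}\langle De_k,x_{ti}\rangle$, and (iii) a Hoeffding term at the optimal dictionary, with matching constants throughout. The only cosmetic differences are that you use H\"older plus $(\max_k)^2\le\sum_k$ where the paper uses Cauchy--Schwarz in the increment comparison, and you evaluate $\sup_D Y_D=L\alpha\sum_k\Vert u_k\Vert$ exactly where the paper applies Cauchy--Schwarz over the atoms; both yield the identical bound $L\alpha K\sqrt{TS_1(\mathbf{X})}$.
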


We discuss some implications of the above theorem.
\ref{Theorem Multitask}. 

\begin{enumerate}
\item The interpretation of $S_{\infty }\left( \mathcal{E}\right) $ is
analogous to that of $S_{\infty }\left( \mathbf{X}\right) $ in the bound for Theorem \ref{Theorem Multitask}. The same applies to Remark 6 following Theorem \ref{Theorem Multitask}. 
\item In the regime $T\leq K^{2}$ the result does not imply any useful
behaviour. On the other and, if $T\gg K^{2}$ the dominant term in the bound 
is of order $\sqrt{S_{\infty }\left( \mathcal{E}\right) /m}$.

\item There is an important difference with the multitask learning bound, namely 
in Theorem \ref{Theorem Main}  we have $\sqrt{T}$ in the denominator of the first term of the excess
risk, and not $\sqrt{mT}$ as in Theorem \ref{Theorem Multitask}. This is because in the
setting of learning to learn there is always a possibility of being misled
by the draw of the training tasks. This possibility can only decrease as $T$ increases -- increasing $m$ does not help.

\end{enumerate}

The proof of Theorem \ref{Theorem Main} is given in Section \ref{sec:ALTL} of the supplementary appendix and follows the method outlined in \cite{Maurer2009}: one first bounds the estimation error for the expected empirical risk on future tasks, and then combines this with a
bound of the expected true risk by said expected empirical risk. The term $K/\sqrt{T}$ may be an artefact of our method of proof and the
conjecture that it can be replaced by $\sqrt{K/T}$ seems plausible. 

\subsection{Connection to sparse coding}\label{sec:Sparse Coding}
We discuss a special case of Theorem \ref{Theorem Main} in the limit $m \rightarrow \infty$, showing that it subsumes the sparse coding result in \cite{MP-ieee}. To this end, we assume the noiseless generative model $y_{ti} = \langle w_t,x_{ti} \rangle$ described in Section \ref{sec:2}, 
that is $\mu(x,y) = p(x) \delta(y,\langle w,x\rangle)$, where $p$ is the uniform distribution on the sphere in ${\mathbb R}^d$ (i.e. the Haar measure). In this case the environment of tasks is fully specified by a measure $\rho$ on the unit ball in ${\mathbb R}^d$ from which a task $w \in {\mathbb R}^d$ is drawn and 
the measure $\mu$ is identified with the vector $w$.
Note that we do not assume that these tasks are obtained as sparse combinations of some dictionary.
Under the above assumptions and choosing $\ell$ to be the square loss, we have that ${\mathbb E}_{(x,y) \sim\mu_t} \ell(\langle w,x\rangle,y) = \|w_t-w\|^2$. Consequently, in the limit of $m \rightarrow \infty$ method \eqref{basic algorithm} reduces to a constrained version of sparse coding \cite{Ols2}, namely
$$
\min_{D \in {\cal D}_K} \frac{1}{T} \sum_{t=1}^T \min_{\gamma \in {\cal C}_\alpha} \|D\gamma - w_t\|^2.
$$
In turn, the transfer error of a dictionary $D$ is given by the quantity $%
R(D):=\min_{\gamma \in \mathcal{C}_{\alpha }}\Vert D\gamma -w\Vert ^{2}$ and
$R_{\mathrm{opt}}=\min_{D\in \mathcal{D}_{K}}{\mathbb{E}}_{w\sim \rho
}\min_{\gamma \in \mathcal{C}_{\alpha }}\Vert D\gamma -w\Vert ^{2}$. Given
the constraints $D\in \mathcal{D}_{K}$, $\gamma \in \mathcal{C}_{\alpha }$
and $\left\Vert x\right\Vert \leq 1$, the square loss $\ell \left(
y,y^{\prime }\right) =\left( y-y^{\prime }\right) ^{2}$, evaluated at $%
y=\left\langle D\gamma ,x\right\rangle $, can be restricted to the interval $%
y\in \left[ -\alpha ,\alpha \right] $, where it has the Lipschitz constant $%
2\left( 1+\alpha \right) $ for any $y^{\prime }\in \left[ -1,1\right] $, as
is easily verified. Since $S_{1}(\mathbf{X})=1$ and $S_{\infty }\left( 
\mathcal{E}\right) <\infty $, the bound in Theorem \ref{Theorem Main}
becomes 
\begin{equation}
R(D)-R_{\mathrm{opt}}\leq 2\alpha (1+\alpha )K\sqrt{\frac{2\pi }{T}}+8\sqrt{%
\frac{\ln 4/\delta }{T}}  \label{eq:goggo}
\end{equation}%
in the limit $m\rightarrow \infty $. The typical choice for $\alpha$ is $\alpha \leq 1$, which ensures that $\|D\gamma\| \leq 1$. In this case inequality \eqref{eq:goggo} provides an improvement over the sparse coding bound in \cite{MP-ieee} (cf. Theorem 2 and Section 2.4 therein), which contains an additional term of the order of $\sqrt{(\ln T)/T}$ and the same leading term in $K$ as in \eqref{eq:goggo} but with slightly worse constant ($14$ instead of $4 \sqrt{2\pi}$). The connection of our method to sparse
coding is experimentally demonstrated in Section \ref{sec:Real experiments} and illustrated in Figure \ref{fig:dictionaries}.

\section{Experiments}
\label{sec:exp}

In this section, we present experiments on a synthetic and two real datasets. 
The aim of the experiments is to study the statistical performance of the proposed method, in both settings of multitask learning and learning to learn. We compare our method, denoted as Sparse Coding Multi Task Learning (SC-MTL), with independent ridge regression (RR) as a base line and multitask feature learning (MTFL) \cite{AEP} and GO-MTL \cite{Daume}.  We also report on sensitivity analysis of the proposed method versus different number of parameters involved. 


\subsection{Optimization algorithm}
\label{sec:OA}
We solve problem \eqref{basic algorithm} by alternating minimization over
the dictionary matrix $D$ and the code vectors ${\mathbf{\gamma }}$. The
techniques we use are very similar to standard methods for sparse coding and
dictionary learning, see e.g. \cite{Jenatton} and references therein for more
information. Briefly, assuming that the loss function $\ell $ is convex
and has Lipschitz continuous gradient, either minimization problem is
convex and can be solved efficiently by proximal gradient methods, see e.g. \cite%
{BT,combettes}. The key ingredient in each step is the computation of the
proximity operator, which in either problem has a closed form expression. %

\subsection{Toy experiment}

\begin{figure}[t]
\begin{center}
\includegraphics[width=0.38\textwidth]{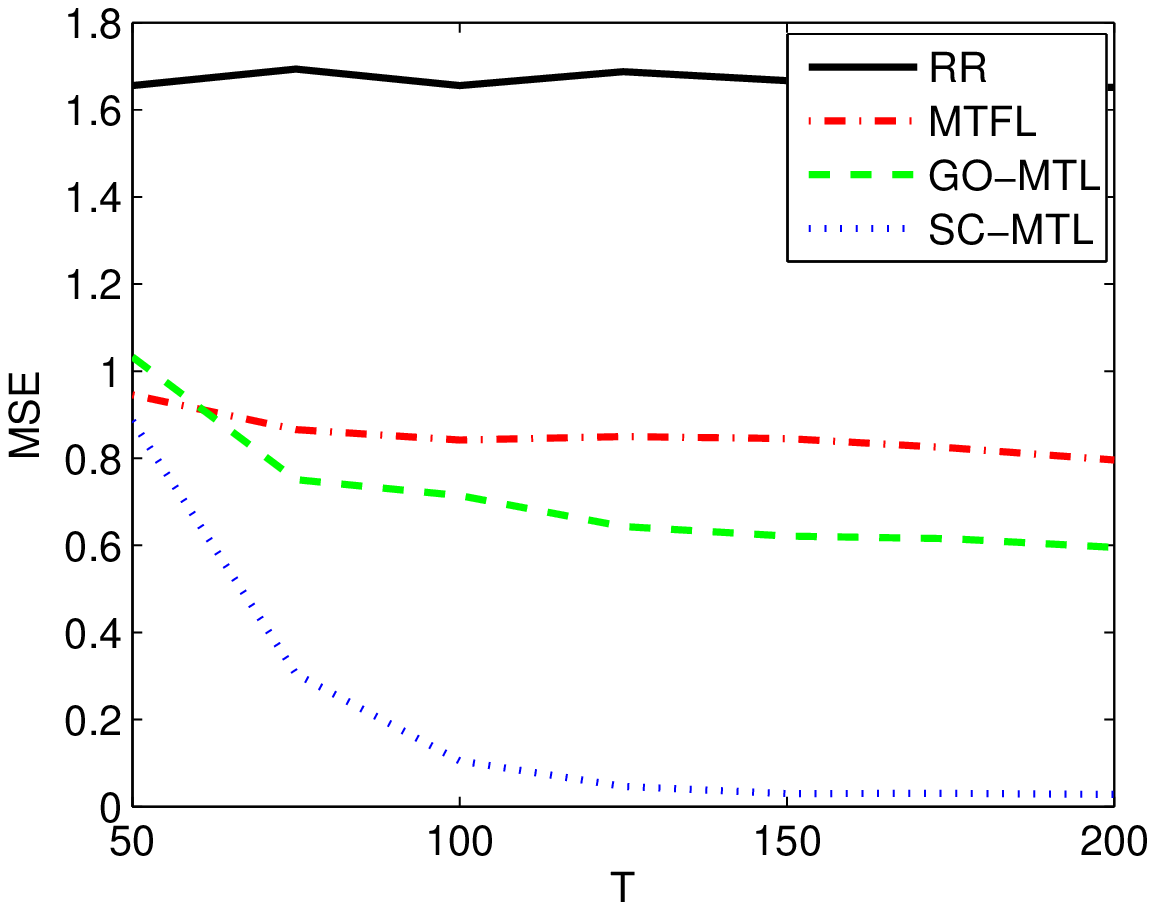}
\vspace{-.48truecm}
\includegraphics[width=0.38\textwidth]{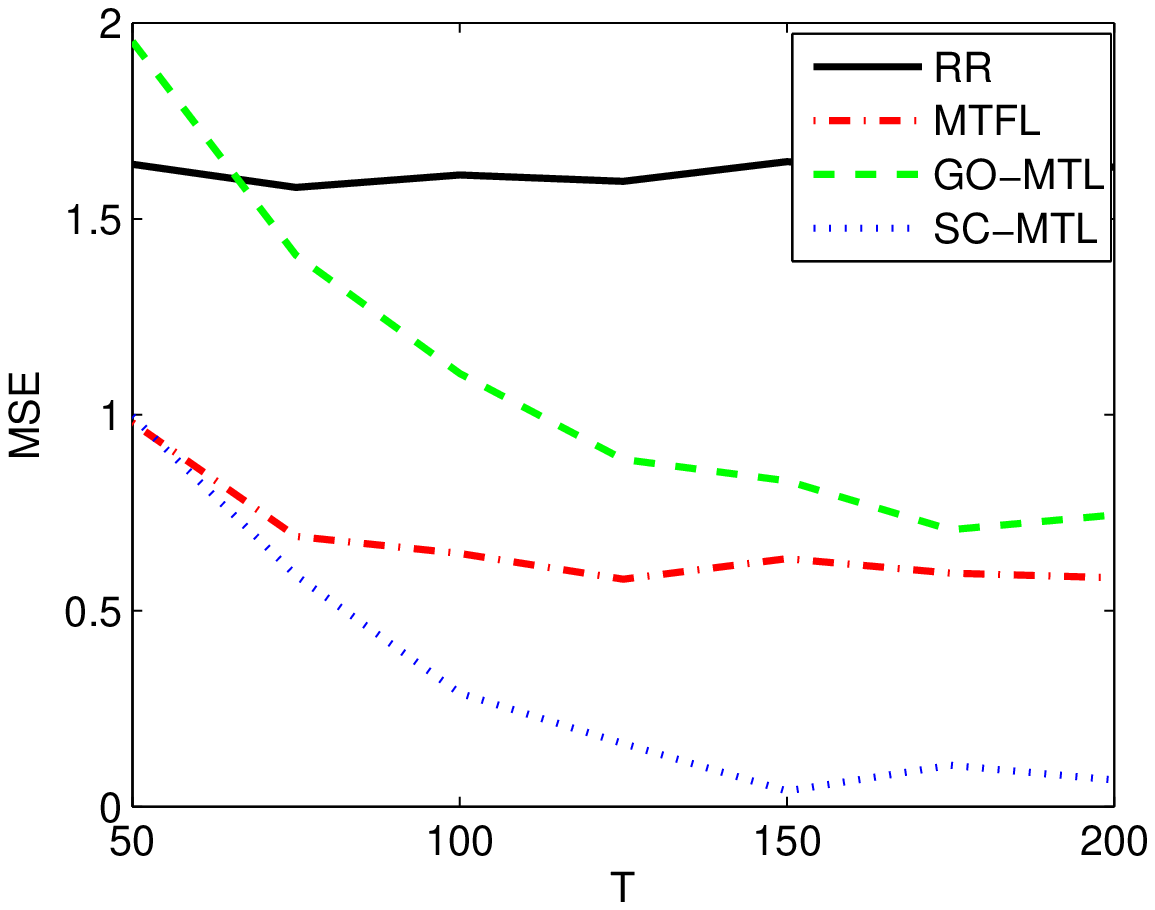}
\end{center}
\vspace{-.2truecm}
\caption{Multitask error (Top) and Transfer error (Bottom) vs. number of
training tasks $T$.}
\label{fig:1}
\end{figure}

\begin{figure}[th]
\begin{center}
\includegraphics[width=0.38\textwidth]{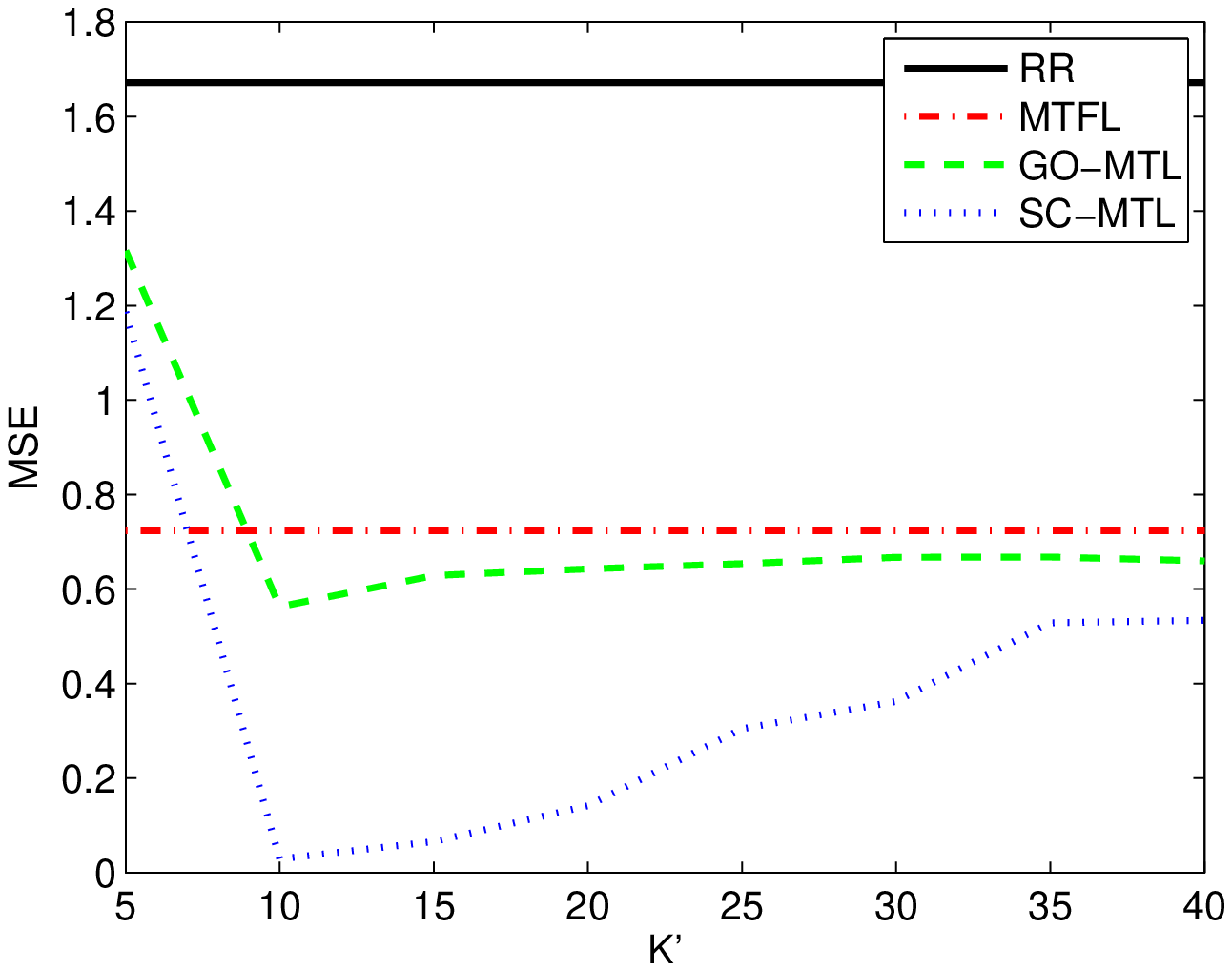}
\vspace{-.48truecm}
\includegraphics[width=0.38\textwidth]{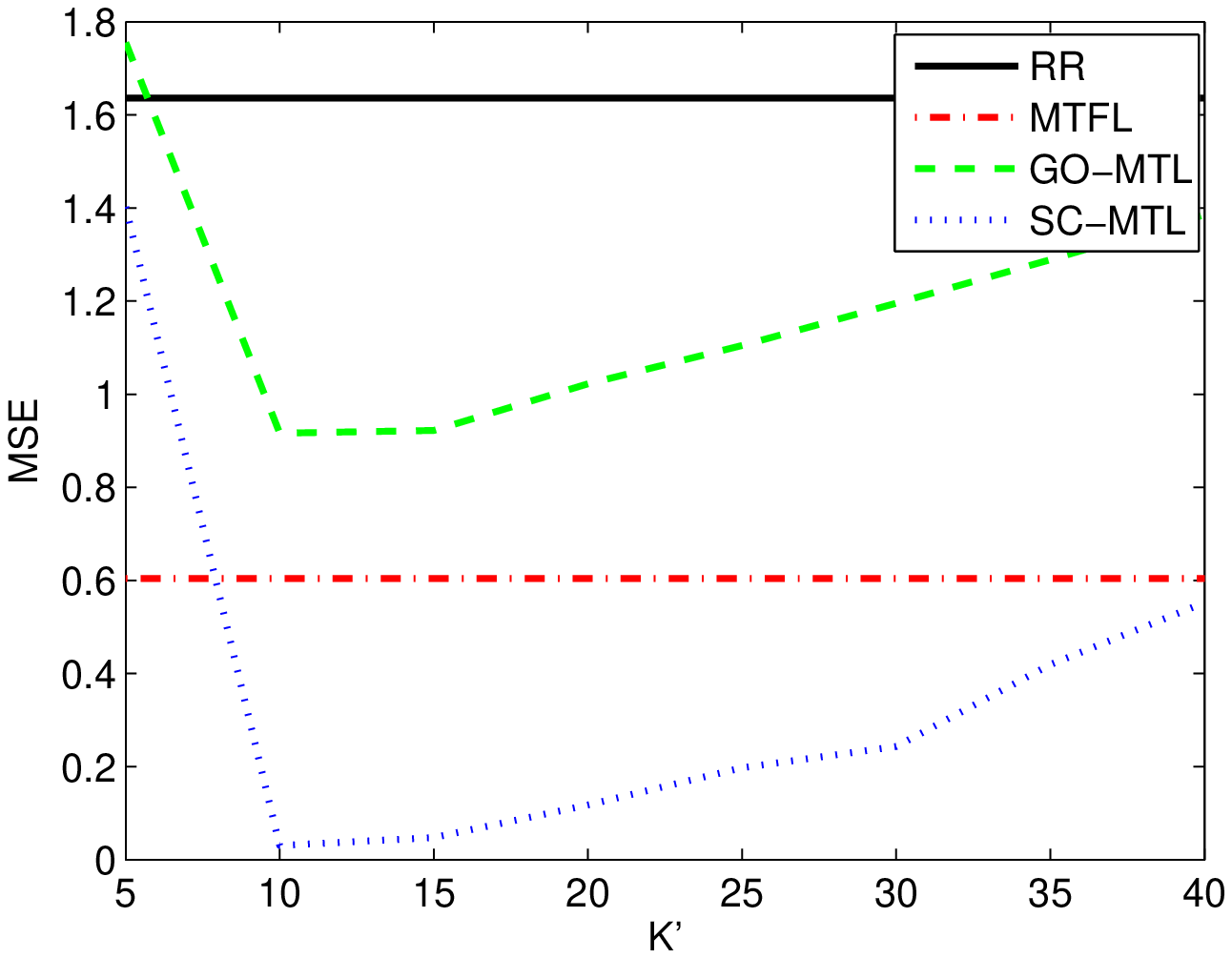}
\end{center}
\vspace{-.2truecm}
\caption{Multitask error (Top) and Transfer error (Bottom) vs. number of
atoms $K^{\prime }$ used by dictionary-based methods.}
\label{fig:2}
\end{figure}

\begin{figure}[th]
\begin{center}
\includegraphics[width=0.358\textwidth]{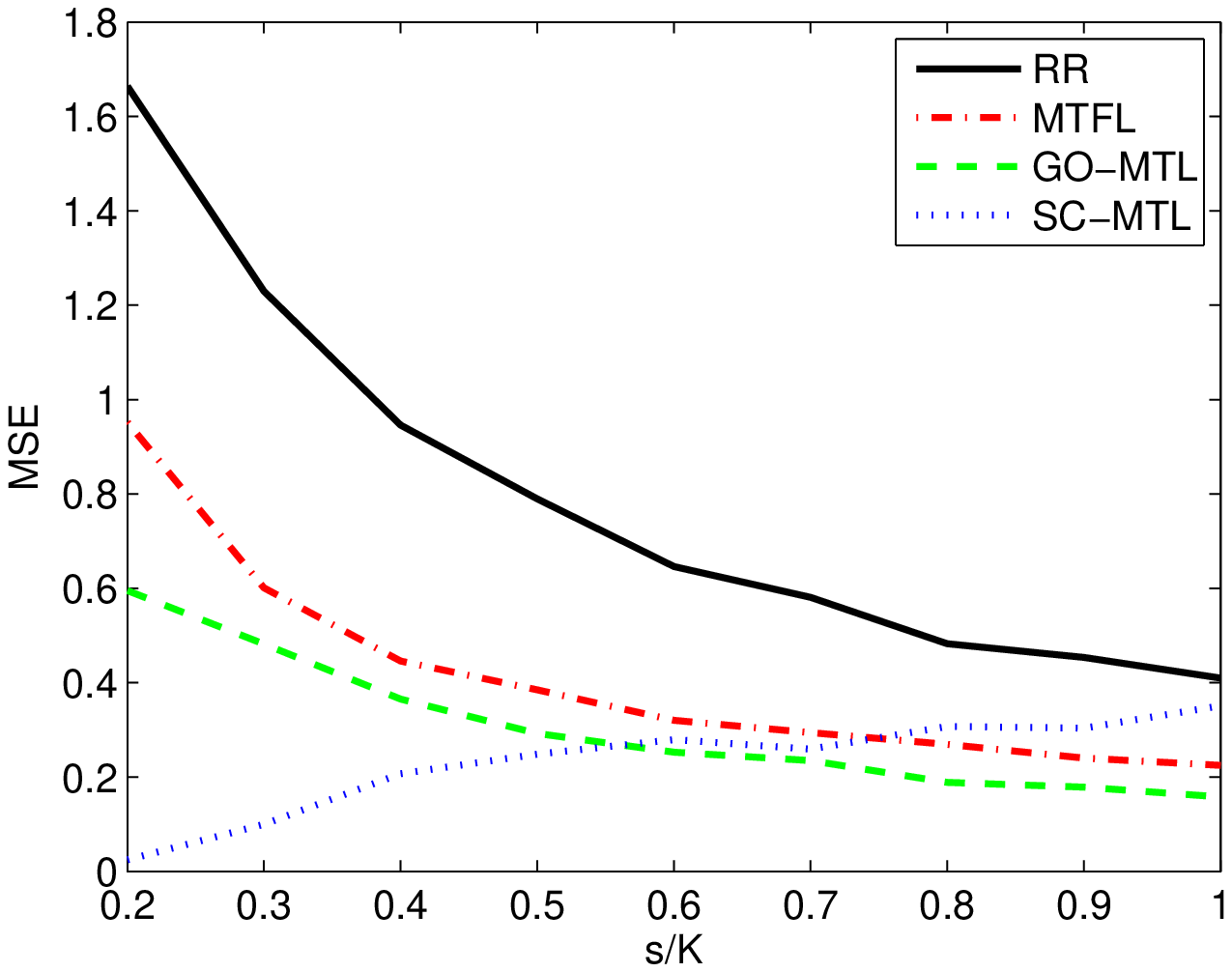}
\vspace{-.48truecm}
\includegraphics[width=0.358\textwidth]{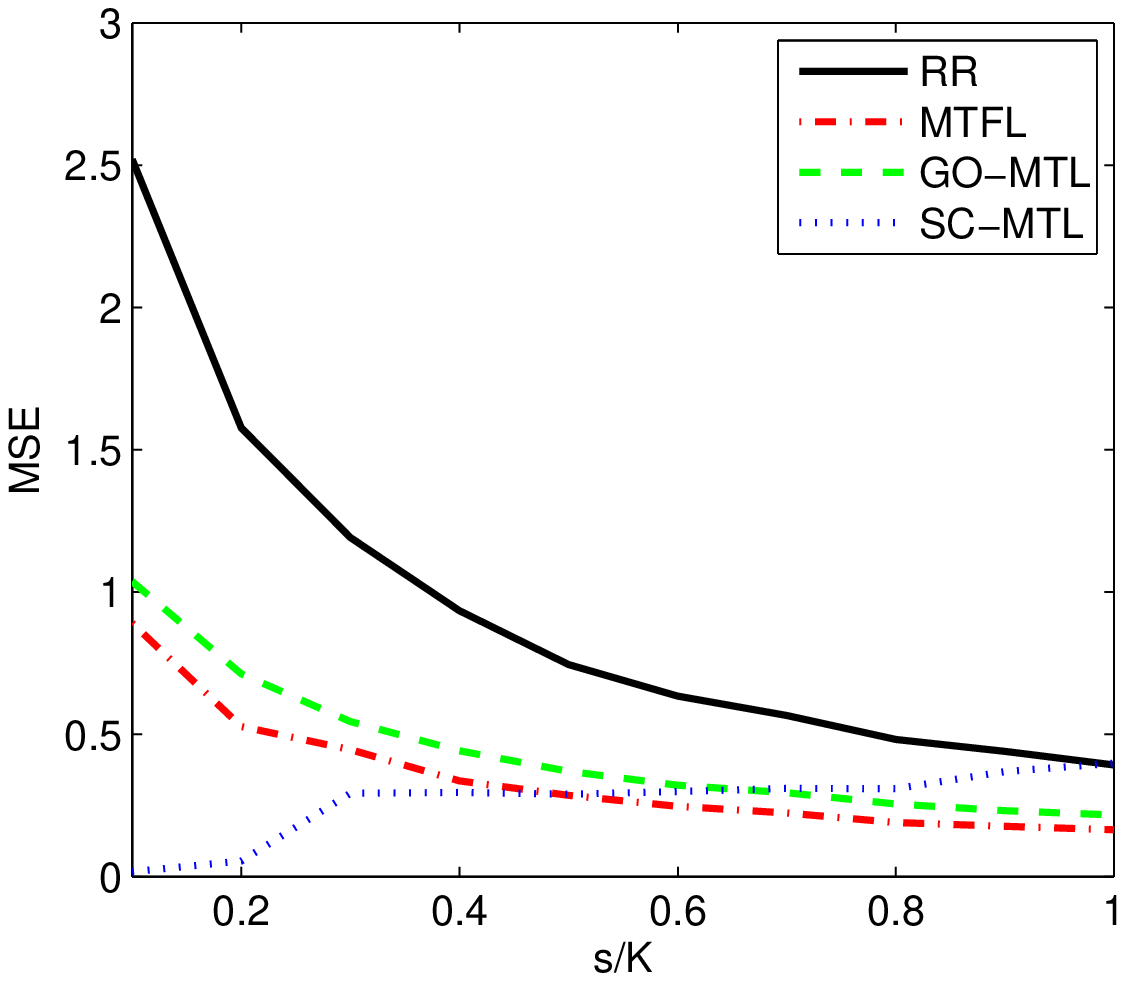}
\end{center}
\vspace{-.2truecm}
\caption{Multitask error (Top) and Transfer error (Bottom) vs. sparsity
ratio $s/K$.}
\label{fig:3}
\end{figure}
We generated a synthetic environment of tasks as follows. We choose a $%
d\times K$ matrix $D$ by sampling its columns independently from the
uniform distribution on the unit sphere in ${\mathbb{R}}^{d}$. Once $D$
is created, a generic task in the environment is given by $w=D\gamma $,
where $\gamma $ is an $s$-sparse vector obtained as follows. First, we
generate a set $J\subseteq \{1,\dots ,K\}$ of cardinality $s$, whose
elements (indices) are sampled uniformly without replacement from the set $\{%
1,\dots ,K\}$. We then set $\gamma _{j}=0$ if $j\notin J$ and otherwise
sample $\gamma _{j}\sim \mathcal{N}(0,0.1)$. Finally, we normalize $\gamma $
so that it has $\ell _{1}$-norm equal to some prescribed value $\alpha $.
Using the above procedure we generated $T$ tasks $w_{t}=D\gamma _{t}$, $%
t=1,\dots ,T$. Further, for each task $t$ we generated a training set $%
\mathbf{z}_{t}=\{(x_{ti},y_{ti})\}_{i=1}^{m}$, sampling $x_{ti}$ i.i.d. from
the uniform distribution on the unit sphere in ${\mathbb{R}}^{d}$. We
then set $y_{ti}=\langle w_{t},x_{ti}\rangle +\xi _{ti}$, with $\xi
_{ti}\sim \mathcal{N}(0,\sigma ^{2})$, where $\sigma $ is the variance of the noise.
This procedure also defines the generation of new tasks in the transfer learning experiments below.%

The above model depends on seven parameters: the number $K$ and
the dimension $d$ of the atoms, the sparsity $s$ and the $\ell _{1}$-norm $%
\alpha $ of the codes, the noise level $\sigma $, the sample size per task $m
$ and the number of training tasks $T$. In all experiments we report both
the multitask learning (MTL) and learning to learn (LTL) performance of the
methods. For MTL, we measure performance by the estimation error $%
1/T\sum_{t=1}^{T}\Vert w_{t}-{\hat{w}}_{t}\Vert ^{2}$, where ${\hat{w}}%
_{1},\dots ,{\hat{w}}_{T}$ are the estimated task vectors (in the case of SC-MTL, ${\hat{w}}_{t}={{D}({\mathbf Z})}{{\gamma}({\mathbf Z})}_{t}$ -- see the discussion in Section \ref{sec:2}. For LTL, we use the
same quantity but with a new set of tasks generated by the
environment (in the experiment below we generate $100$ new tasks). The
regularization parameter of each method is chosen by cross validation.
Finally, all experiments are repeated $50$ times, and the average
performance results are reported in the plots below.

In the first experiment, we fix $K=10,d=20,s=2,\alpha =10,m=10,\sigma =0.1$
and study the statistical performance of the methods as a function of the
number of tasks. The results, shown in Figure \ref{fig:1}, clearly indicate that the
proposed method outperforms the remaining approaches. In this experiment the number of atoms used by dictionary-based approaches, which
here we denote by $K^{\prime }$ to avoid confusion with the number of atoms $K$
of the target dictionary, was equal to $K=10$. This gives an advantage to
 both GO-MTL and SC-MTL. We therefore also studied the performance of those methods in
dependence on $K^{\prime }$. Figure \ref{fig:2}, reporting this result, is
in qualitative agreement with our theoretical analysis: the performance of
SC-MTL is not too sensitive to $K^{\prime }$ if $K^{\prime }\geq K$, and
the method still outperforms independent RR and MTFL if $K^{\prime }=4K$. On the other hand if $K^{\prime }<K$ the
performance of the method quickly degrades. In the last experiment we study
performance vs. the sparsity ratio $s/K$. Intuitively we would expect our
method to have greater advantage over MTL if $s\ll K$%
. The results, shown in Figure \ref{fig:3}, confirm this fact, also
indicating that SC-MTL is outperformed by both GO-MTL and MTFL as sparsity
becomes less pronounced ($s/K>0.6$).

\subsection{Learning to learn optical character recognition}

We have conducted experiments on real data to study the performance of
our method in a learning to learn / transfer learning setting. To this end, we employed the NIST dataset%
\footnote{The NIST dataset is available at http://www.nist.gov/srd/nistsd19.cfm%
}, which is composed of a set of $14\times14$ pixels images of handwritten
characters (digits and lower and capital case letters, for a total of 52 characters).

We considered the following experimental protocol. First, a set of $20$ characters are chosen
randomly as well as $n$ instances for each character. These are
used to learn all possibilities of $1$-vs-$1$ train tasks, which makes
$T = 190$, each of which having $m=2n$ instances.
The knowledge learned in this stage is employed to learn another set of target tasks. In our approach,
the assumption that is made is that some of the components in the dictionary learned from the training tasks, 
can also be useful for representing the target tasks. 
In order to create the target tasks, another set of $10$ characters are chosen among the remaining set of characters in the dataset, inducing a set of $45$ $1$-vs-$1$ classification tasks.
Since we are interested in the case where the training set size of the
target tasks is small, we sample only $3$ instances for each character, hence $6$ examples per task. 

In order to tune the hyperparameters of all compared approaches, we have also created another set of $45$ validation tasks by following
the process previously described, simulating the target set of tasks. Note that there is not overlapping between the digits associated to the train,
target and validation tasks.

We have run $50$ trials of the above process for different values
of $m$ and the average multiclass accuracy on the target tasks is reported in Figure 
\ref{fig:RealData_LTL}.

\begin{figure}[t]
\begin{centering}
\includegraphics[width=0.7\columnwidth]{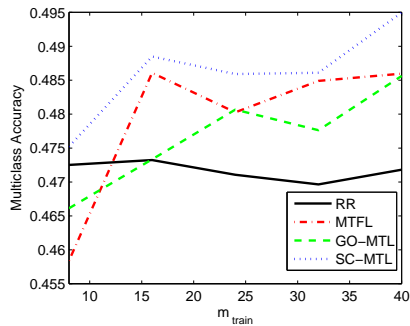}
\par\end{centering}
\vspace{-.2truecm}
\caption{\label{fig:RealData_LTL} Multiclassification accuracy of RR, MTFL
GO-MTL and SC-MTL vs. the number of training instances in the transfer
tasks, $m$. }
\end{figure}

\subsection{Sparse coding of images with missing pixels}
\label{sec:Real experiments}

In the last experiment we consider a sparse coding problem \cite{Ols2} of optical character images, with missing pixels.
We employ the Binary Alphadigits dataset\footnote{Available at {\em http://www.cs.nyu.edu/~roweis/data.html}.}, which is composed of a set of binary $20 \times16$ images of all digits and capital letters (39 images for each character). In the following experiment only the digits are used. 
We regard each image as a task, hence the input  space is the set of $320$ possible pixels indices, while the output space is the real interval $[0,1]$, representing the gray level. We sample $T = 100,130,160,190, 220, 250$ images, equally divided among the $10$ possible digits. For each of these, a corresponding random set of $m=160$ pixel values are sampled (so the set of sample pixels varies from one image to another).

\begin{figure}[t]
\begin{center}
\includegraphics[width=0.358\textwidth]{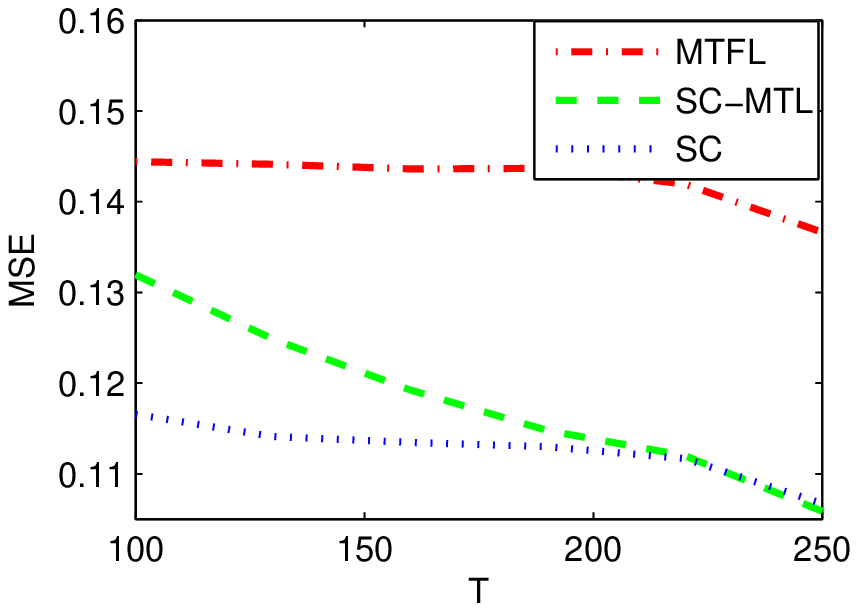}
\vspace{-.48truecm}
\includegraphics[width=0.358\textwidth]{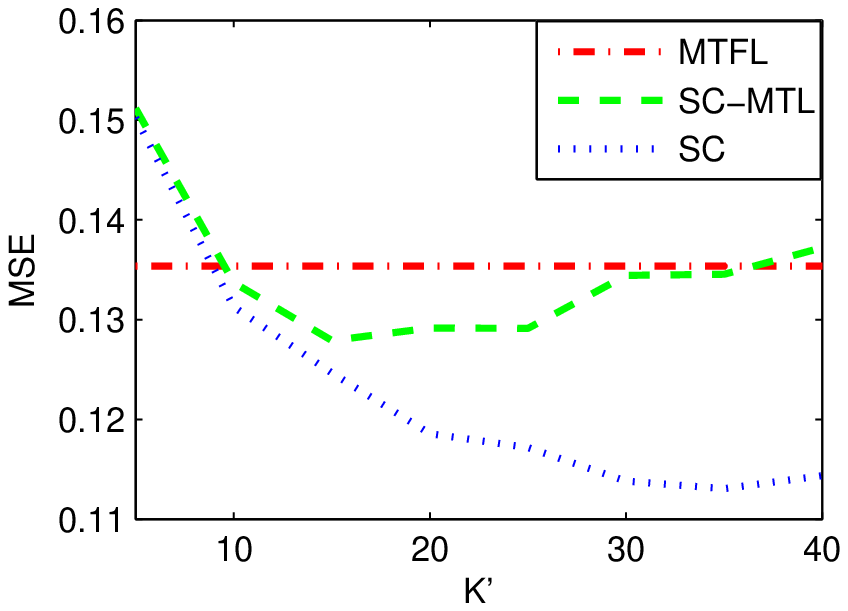}
\end{center}
\vspace{-.2truecm}
\caption{\label{fig:pixels}Transfer error vs. number of tasks T (Top) and vs. number of atoms K (Bottom) on the Binary Alphadigits dataset.}
\end{figure}

We test the performance of the dictionary learned by method \eqref{basic algorithm} in a learning to learn setting, by choosing $100$ new images. The regularization parameter for each approach is tuned using cross validation.
The results, shown in Figure \ref{fig:pixels}, indicate some advantage of the proposed method over trace norm regularization. A similar trend, not reported here due to space constraints, is obtained in the multitask setting. Ridge regression performed significantly worse and is not shown in the figure.  We also show as a reference the performance of sparse coding (SC) applied when all pixels are known.

With the aim of analyzing the atoms learned by the algorithm, we have carried out another experiment where we assume that there are $10$ underlying atoms (one for each digit). We compare the resultant dictionary to that obtained by sparse coding, where all pixels are known. The results are shown in Figure \ref{fig:dictionaries}.

\begin{figure}[th]
\begin{center}
\includegraphics[width=0.458\textwidth]{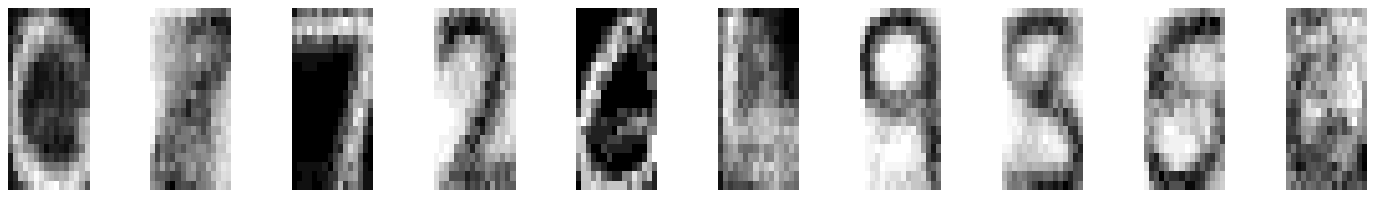}
\includegraphics[width=0.458\textwidth]{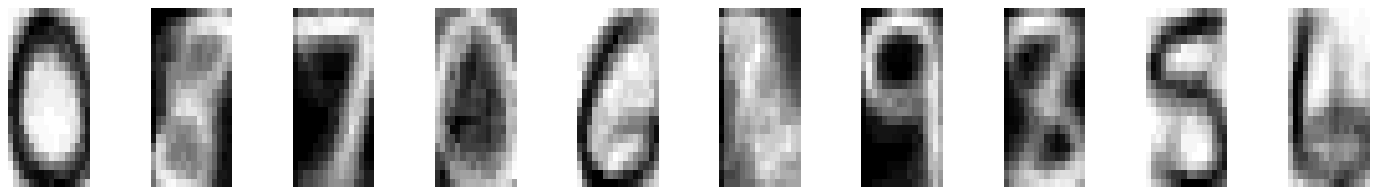}
\end{center}
\vspace{-.2truecm}
\caption{\label{fig:dictionaries} Dictionaries found by SC-MTL using $m=240$ pixels (missing $25\%$ pixels) per image (top) and by Sparse Coding employing all pixels (bottom).}
\end{figure}

\section{Summary}

\label{sec:discussion} In this paper, we have explored an
application of sparse coding, which has been widely used in
unsupervised learning and signal processing, to the domains
of multitask learning and learning to learn. Our learning bounds provide a
justification of this method and offer insights into its advantage over
independent task learning and learning dense representation of the tasks.
The bounds, which hold in a Hilbert space setting, depend on data dependent
quantities which measure the intrinsic dimensionality of the data. Numerical
simulations presented here indicate that sparse coding is a promising approach to multitask
learning and can lead to significant improvements over competing methods.

In the future, it would be valuable to study extensions of our analysis to
more general classes of code vectors. For example, we could use code sets $%
\mathcal{C}_{\alpha }$ which arise from structured sparsity norms, such as
the group Lasso, see e.g. \cite{Jenatton,Lounici}
or other families of regularizers. 
A concrete example which comes to mind is to choose $K=Qr$, $%
Q,r\in {\mathbb{N}}$ and a partition ${\mathcal{J}}=\{\{(q-1)r+1,\dots
,qr\}:q=1,\dots ,Q\}$ of the index set $\{1,\dots ,K\}$ into contiguous
index sets of size $r$. Then using a norm of the type $\Vert \gamma \Vert
=\Vert \gamma \Vert _{1}+\sum_{J\in \mathcal{J}}\Vert \gamma _{J}\Vert _{2}$
will encourage codes which are sparse and use only few of the groups in ${%
\mathcal{J}}$. Using the ball associated with this norm as our set of codes
would allow to model sets of tasks which are divided into groups.
A further natural extension of our method is nonlinear dictionary learning in which the dictionary columns correspond to functions in a reproducing kernel Hilbert space and the tasks are expressed as sparse linear combinations of such functions.

\section*{Acknowledgments} 
This work was supported in part by EPSRC Grant EP/H027203/1 and Royal Society International Joint
Project Grant 2012/R2.

%


\begin{thebibliography}{25}
\providecommand{\natexlab}[1]{#1}
\providecommand{\url}[1]{\texttt{#1}}
\expandafter\ifx\csname urlstyle\endcsname\relax
  \providecommand{\doi}[1]{doi: #1}\else
  \providecommand{\doi}{doi: \begingroup \urlstyle{rm}\Url}\fi

\bibitem[Ando \& Zhang(2005)Ando and Zhang]{Zhang}
Ando, R.K. and Zhang, T. A framework for learning predictive structures from multiple tasks
  and unlabeled data. {\em J. of Machine Learning Research}, 6:1817--1853, 2005.
  
\bibitem[Argyriou, Evgeniou, Pontil(2008)]{AEP}
Argyriou, A., Evgeniou, T., and Pontil, M. Convex multi-task feature learning. {\em Machine Learning}, 73(3):243--272, 2008.

\bibitem[Argyriou, Maurer, Pontil(2008)]{AMP}
Argyriou, A., Maurer, A., and Pontil, M. An algorithm for transfer learning in a heterogeneous environment. 
{\em Proc. European Conf. Machine Learning}, pp. 71--85, 2008.
              
\bibitem[Bartlett \& Mendelson(2002)Bartlett and Mendelson]{Bartlett2002}
Bartlett, P.L. and Mendelson, S.
\newblock Rademacher and gaussian complexities: risk bounds and structural
  results.
\newblock \emph{J. of Machine Learning Research}, 3:463--482, 2002.


\bibitem[Baxter(2000)]{Baxter}
Baxter, J.
\newblock A model for inductive bias learning.
\newblock \emph{J. of Artificial Intelligence Research}, 12:149--198, 2000.


\bibitem[Beck \& Teboulle(2009)Beck and Teboulle]{BT}
Beck, A. and Teboulle, M.
\newblock A fast iterative shrinkage-thresholding algorithm for linear inverse
  problems.
\newblock \emph{SIAM Journal of Imaging Sciences}, 2(1):183--202,
  2009.


\bibitem[Ben-David \& Schuller(2003)Ben-David and Schuller]{Bendavid}
Ben-David, S. and Schuller, R.
\newblock Exploiting task relatedness for multiple task learning.
\newblock \emph{Proceedings of Computational Learning Theory (COLT)}, 2003.




\bibitem[B\"uhlmann \& van~de Geer(2011)B\"uhlmann and van~de Geer]{BvdG}
B\"uhlmann, P. and van~de Geer, S.
\newblock \emph{Statistics for High-Dimensional Data: Methods, Theory and
  Applications}.
\newblock Springer, 2011.


\bibitem[Caruana(1997)]{Caruana}
Caruana, R.
\newblock Multi-task learning.
\newblock \emph{Machine Learning}, 28:41--75, 1997.


\bibitem[Combettes \& Wajs(2006)Combettes and Wajs]{combettes}
Combettes, P.L. and Wajs, V.R.
\newblock Signal recovery by proximal forward-backward splitting.
\newblock \emph{Multiscale Modeling and Simulation}, 4(4):1168--1200,
  2006.


\bibitem[Evgeniou, Micchelli, Pontil(2005)Evgeniou, Micchelli, and Pontil]{EMP}
Evgeniou, T., Micchelli, C.A., and Pontil, M.
\newblock Learning multiple tasks with kernel methods.
\newblock \emph{J. of Machine Learning Research}, 6:615--637, 2005.


\bibitem[Jenatton et~al.(2011)Jenatton, Mairal, Obozinski, and Bach]{Jenatton}
Jenatton, R., Mairal, J., Obozinski, G., and Bach, F.
\newblock Proximal methods for hierarchical sparse coding.
\newblock \emph{J. of Machine Learning Research}, 12:2297--2334, 2011.


\bibitem[Koltchinskii \& Panchenko(2002)Koltchinskii and
  Panchenko]{Koltchinskii2002}
Koltchinskii, V. and Panchenko, D.
\newblock Empirical margin distributions and bounding the generalization error
  of combined classifiers.
\newblock \emph{Annals of Statistics}, 30(1):1--50, 2002.


\bibitem[Kumar \& Daum\'e III(2012)Kumar and Daum\'e III]{Daume}
Kumar, A. and Daum\'e III, H.
\newblock Learning task grouping and overlap in multitask learning.
\newblock \emph{International Conference on Machine Learning (ICML)}, 2012.


\bibitem[Ledoux \& Talagrand(1991)Ledoux and Talagrand]{Ledoux1991}
Ledoux, M. and Talagrand, M.
\newblock \emph{Probability in Banach Spaces}.
\newblock Springer, 1991.

\bibitem[Lounici et al.(2011)Lounici et al.]{Lounici}
Lounici, K., Pontil, M., Tsybakov, A.B. and van de Geer, S.
Oracle inequalities and optimal inference under group sparsity 
{\em Annals of Statistics}, 39(4): 2164-2204, 2011.

\bibitem[Maurer(2006)]{Maurer2006}
Maurer, A.
\newblock Concentration inequalities for functions of independent variables.
\newblock \emph{Random Structures and Algorithms}, 29:121--138, 2006.


\bibitem[Maurer(2009)]{Maurer2009}
Maurer, A.
\newblock Transfer bounds for linear feature learning.
\newblock \emph{Machine Learning}, 75(3):327--350, 2009.


\bibitem[Maurer \& Pontil(2010)Maurer and Pontil]{MP-ieee}
Maurer, A. and Pontil, M.
\newblock K-dimensional coding schemes in Hilbert spaces.
\newblock \emph{IEEE Transactions on Information Theory}, 56(11):5839--5846, 2010.


\bibitem[McDiarmid(1998)]{McDiarmid1998}
McDiarmid, C.
\newblock \emph{Probabilistic Methods of Algorithmic Discrete Mathematics}.
\newblock Springer, 1998.


\bibitem[Olshausen \& Field(1996)Olshausen and Field]{Ols2}
Olshausen, B.A. and Field, D.J.
\newblock Emergence of simple-cell receptive field properties by learning a
  sparse code for natural images.
\newblock \emph{Nature}, 381:607--609, 1996.


\bibitem[Slepian(1962)]{Slepian}
Slepian, D.
\newblock The one-sided barrier problem for gaussian noise.
\newblock \emph{Bell System Tech. J.}, 41:463--501, 1962.


\bibitem[Thrun \& Pratt(1998)Thrun and Pratt]{thurn}
Thrun, S. and Pratt, L.
\newblock \emph{Learning to Learn}.
\newblock Springer, 1998.


\bibitem[Tibshirani(1996)]{Tib}
Tibshirani, R.
\newblock Regression shrinkage and selection via the lasso.
\newblock \emph{J. R. Statist. Soc. B}, 58(1):267--288, 1996.


\end{thebibliography}

\clearpage

\newpage

\begin{center}
{\Large \bf Appendix}
\end{center}

In this appendix, we present the proof of Theorems \ref{Theorem Multitask} and \ref{Theorem Main}. We begin by introducing some more notation and auxiliary results. 

\appendix
\section{Notation and tools}

Issues of measurability will be ignored throughout, in particular, if $%
\tciFourier $ is a class of real valued functions on a domain $\mathcal{X}$
and $X$ a random variable with values in $\mathcal{X}$ then we will always
write ${\mathbb E}\sup_{f\in \tciFourier }f\left( X\right) $ to mean $\sup \left\{
{\mathbb E}\max_{f\in \tciFourier _{0}}f\left( X\right) :\tciFourier _{0}\subseteq
\tciFourier \text{, }\tciFourier _{0}\text{ finite}\right\} $.

In the sequel $H$ denotes a finite or infinite dimensional Hilbert space
with inner product $\left\langle \cdot ,\cdot\right\rangle $ and norm $\left\Vert
\cdot \right\Vert $. If $T$ is a bounded linear operator on $H$ its operator norm
is written $\left\Vert T\right\Vert _{\infty }=\sup \left\{ \left\Vert
Tx\right\Vert :\left\Vert x\right\Vert =1\right\}$.

Members of $H$ are denoted with lower case italics such as $x,v,w$, vectors
composed of such vectors are in bold lower case, i.e. $\mathbf{x}=\left(
x_{1},\ldots,x_{m}\right) $ or $\mathbf{v=}\left( v_{1},\ldots,v_{n}\right) $,
where $m$ or $n$ are explained in the context.

Let $B$ be the unit ball in $H$. An \textit{example} is a pair $z=\left( x,y\right) \in B\times 
\mathbb{R}
=:\mathcal{Z}$, a sample is a vector of such pairs $\mathbf{z}=\left(
z_{1},\ldots,z_{m}\right) =\left( \left( x_{1},y_{1}\right) ,\ldots,\left(
x_{m},y_{m}\right) \right) $. Here we also write $\mathbf{z}=\left( \mathbf{x%
},\mathbf{y}\right) $, with $\mathbf{x}=\left( x_{1},\ldots,x_{m}\right) \in
H^{m}$ and $\mathbf{y}=\left( y_{1},\ldots,y_{m}\right) \in 
\mathbb{R}
^{m}$.

A multisample is a vector $\mathbf{Z}=\left( \mathbf{z}_{1},\ldots,\mathbf{z}%
_{T}\right) $ composed of samples. We also write $\mathbf{Z}=\left( \mathbf{X%
},\mathbf{Y}\right) $ with $\mathbf{X=}\left( \mathbf{x}_{1},\ldots,\mathbf{x}%
_{T}\right) $.

For members of $%
\mathbb{R}
^{K}$ we use the greek letters $\gamma $ or $\beta $. Depending on context
the inner product and euclidean norm on $%
\mathbb{R}
^{K}$ will also be denoted with $\left\langle \cdot ,\cdot\right\rangle $ and $%
\left\Vert .\right\Vert $. The $\ell_1$-norm $\left\Vert \cdot \right\Vert _{1}$ on $%
\mathbb{R}
^{K}$ is defined by $\left\Vert \beta \right\Vert _{1}=\sum_{k=1}^{K}\left\vert \beta_{k}\right\vert$. 

In the sequel we denote with $\mathcal{C}_{\alpha }$ the set $\left\{ \beta \in 
\mathbb{R}
^{K}:\left\Vert \beta \right\Vert _{1}\leq \alpha \right\}$, abbreviate $%
\mathcal{C}$ for the $\ell _{1}$-unit ball $\mathcal{C}_{1}$. The canonical
basis of $%
\mathbb{R}
^{K}$ is denoted $e_{1},\ldots,e_{K}$. Unless otherwise specified the summation
over the index $i$ will always run from $1$ to $m$, $t$ will run from $1$ to $T$, and 
$k$ will run from $1$ to $K$.

\subsection{Covariances}

For $\mathbf{x\in }H^{m}$ the empirical covariance operator $\hat{\Sigma}%
\left( \mathbf{x}\right) $ is specified by%
\begin{equation*}
\left\langle \hat{\Sigma}\left( \mathbf{x}\right) v,w\right\rangle =\frac{1}{%
m}\sum_{i}\left\langle v,x_{i}\right\rangle \left\langle
x_{i},w\right\rangle ,\text{ }v,w\in H\text{.}
\end{equation*}%
The definition implies the inequality%
\begin{equation}
\sum_{i}\left\langle v,x_{i}\right\rangle ^{2}=m\left\langle \hat{\Sigma}%
\left( \mathbf{x}\right) v,v\right\rangle \leq m\left\Vert \hat{\Sigma}%
\left( \mathbf{x}\right) \right\Vert _{\infty }\left\Vert v\right\Vert ^{2}%
\text{.}  \label{Empirical covariance useful identity}
\end{equation}%
It also follows that ${\rm tr}\left( \hat{\Sigma}\left( \mathbf{x}\right) \right)
=\left( 1/m\right) \sum_{i}\left\Vert x_{i}\right\Vert ^{2}$.

For a multisample $\mathbf{X}\in H^{mT}$ we will consider two quantities
defined in terms of the empirical covariances.%
\begin{eqnarray*}
S_{1}\left( \mathbf{X}\right)  &=& \frac{1}{T} \sum_{t}\left\Vert \hat{%
\Sigma}\left( \mathbf{x}_{t}\right) \right\Vert _{1}:= \frac{1}{T}
\sum_{t}{\rm tr}\left( \hat{\Sigma}\left( \mathbf{x}_{t}\right) \right)  \\
S_{\infty }\left( \mathbf{X}\right)  &=& \frac{1}{T} \sum_{t}\left\Vert 
\hat{\Sigma}\left( \mathbf{x}_{t}\right) \right\Vert _{\infty }:=
\frac{1}{T} \sum_{t}\lambda _{\max }\left( \hat{\Sigma}\left( \mathbf{x}%
_{t}\right) \right)
\end{eqnarray*}%
where $\lambda _{\max }$ is the largest eigenvalue. If all data points $%
x_{ti}$ lie in the unit ball of $H$ then $S_{1}\left( \mathbf{X}\right) \leq
1$. Of course $S_{1}\left( \mathbf{X}\right) $ can also be written as the
trace of the total covariance $\left( 1/T\right) \sum_{t}\hat{\Sigma}\left( 
\mathbf{x}_{t}\right) $, while $S_{\infty }\left( \mathbf{X}\right) $ will
always be at least as large as the largest eigenvalue of the total
covariance. We always have $S_{\infty }\left( \mathbf{X}\right) \leq
S_{1}\left( \mathbf{X}\right) $, with equality only if the data is
one-dimensional for all tasks. The quotient $S_{1}\left( \mathbf{X}\right)
/S_{\infty }\left( \mathbf{X}\right) $ can be regarded as a crude measure of
the effective dimensionality of the data. If the data have a high
dimensional distribution for each task then $S_{\infty }\left( \mathbf{X}\right) $
can be considerably smaller than $S_{1}\left( \mathbf{X}\right) $.

\subsection{Concentration inequalities}

Let $\mathcal{X}$ be any space. For $\mathbf{x}\in \mathcal{X}^{n}$, $1\leq
k\leq n$ and $y\in \mathcal{X}$ we use $\mathbf{x}_{k\leftarrow y}$ to
denote the object obtained from $\mathbf{x}$ by replacing the $k$-th
coordinate of $\mathbf{x}$ with $y$. That is 
\begin{equation*}
\mathbf{x}_{k\leftarrow y}=\left( x_{1},\dots ,x_{k-1},y,x_{k+1},\dots
,x_{n}\right) \text{.}
\end{equation*}%
The concentration inequality in part (i) of the following theorem, known as
the bounded difference inequality is given in \cite{McDiarmid1998}. A proof
of inequality (ii) is given in \cite{Maurer2006}.

\begin{theorem}
\label{Theorem Concentration}Let $F:\mathcal{X}^{n}\rightarrow 
\mathbb{R}
$ and define $A$ and $B$ by%
\begin{eqnarray*}
A^{2} &=&\sup_{\mathbf{x}\in \mathcal{X}^{n}}\sum_{k=1}^{n}\sup_{y_{1},y_{2}%
\in \mathcal{X}}\left( F\left( \mathbf{x}_{k\leftarrow y_{1}}\right)
-F\left( \mathbf{x}_{k\leftarrow y_{2}}\right) \right) ^{2} \\
B^{2} &=&\sup_{\mathbf{x}\in \mathcal{X}^{n}}\sum_{k=1}^{n}\left( F\left( 
\mathbf{x}\right) -\inf_{y\in \mathcal{X}}F\left( \mathbf{x}_{k\leftarrow
y}\right) \right) ^{2}.
\end{eqnarray*}%
Let $\mathbf{X}=\left( X_{1},\dots ,X_{n}\right) $ be a vector of
independent random variables with values in $\mathcal{X}$, and let $\mathbf{X%
}^{\prime }$ be i.i.d. to $\mathbf{X}$. Then for any $s>0$

(i) $\Pr \left\{ F\left( \mathbf{X}\right) >{{\mathbb{E}}}F\left( \mathbf{X}%
^{\prime }\right) +s\right\} \leq e^{-2s^{2}/A^{2}};$

(ii) $\Pr \left\{ F\left( \mathbf{X}\right) >{{\mathbb{E}}}F\left( \mathbf{X}%
^{\prime }\right) +s\right\} \leq e^{-s^{2}/\left( 2B^{2}\right) }.$
\end{theorem}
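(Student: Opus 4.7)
The plan is to prove both parts by decomposing $F(\mathbf{X})-\mathbb{E}F(\mathbf{X}')$ along the Doob martingale associated with the filtration generated by $X_1,\ldots,X_n$, and then to estimate the moment generating function by bounding, one coordinate at a time, the conditional MGFs of the martingale differences. The asymmetry between $A^2$ (symmetric oscillation) and $B^2$ (drop from $F(\mathbf{x})$ to the coordinate-wise infimum) forces us to use a different conditional bound for each part.

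For part (i) I would set $F_k=\mathbb{E}[F(\mathbf{X})\mid X_1,\ldots,X_k]$, so that $F_0=\mathbb{E}F(\mathbf{X}')$, $F_n=F(\mathbf{X})$, and $D_k:=F_k-F_{k-1}$ are martingale differences. Conditioned on $X_1,\ldots,X_{k-1}$, $D_k$ depends only on the independent variable $X_k$ and, by Jensen, has range bounded by $a_k(X_1,\ldots,X_{k-1}):=\sup_{y_1,y_2}(F(\mathbf{x}_{k\leftarrow y_1})-F(\mathbf{x}_{k\leftarrow y_2}))$ taken also over the future coordinates. Hoeffding's lemma then gives $\mathbb{E}[e^{\lambda D_k}\mid X_1,\ldots,X_{k-1}]\leq e^{\lambda^2 a_k^2/8}$. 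The tower property and the hypothesis $\sum_k a_k^2\leq A^2$ yield $\mathbb{E}e^{\lambda(F(\mathbf{X})-\mathbb{E}F(\mathbf{X}'))}\leq e^{\lambda^2 A^2/8}$; Markov's inequality and the choice $\lambda=4s/A^2$ deliver the $e^{-2s^2/A^2}$ tail.

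Part (ii) is the serious one. Write $Z=F(\mathbf{X})$ and $Z^{(k)}=\inf_{y}F(\mathbf{X}_{k\leftarrow y})$, so $V:=\sum_k(Z-Z^{(k)})^2\leq B^2$ almost surely, with $Z-Z^{(k)}\geq 0$. The martingale proof of (i) cannot be used directly because the conditional range control is only one-sided here. Instead I would invoke the entropy method: by tensorization of entropy along independent coordinates, for every $\lambda>0$ one has $\mathrm{Ent}(e^{\lambda Z})\leq \sum_k\mathbb{E}[e^{\lambda Z}\,\phi(\lambda(Z-Z^{(k)}))]$ for a suitable convex $\phi$, and the one-sided self-bounding form $(Z-Z^{(k)})_+=Z-Z^{(k)}$ lets one reduce this to the clean inequality $\mathrm{Ent}(e^{\lambda Z})\leq (\lambda^2/2)\,\mathbb{E}[Ve^{\lambda Z}]$. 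Combining this with $V\leq B^2$ and applying the Herbst argument to $\psi(\lambda)=\log \mathbb{E}e^{\lambda(Z-\mathbb{E}Z)}$ gives $\psi(\lambda)\leq \lambda^2 B^2/2$; Markov and $\lambda=s/B^2$ finish.

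The main obstacle is exactly the entropy-based step in (ii): producing the one-sided logarithmic Sobolev-type inequality without extra boundedness hypotheses. The symmetrization trick which proves (i) from Hoeffding fails because $Z-Z^{(k)}$ has no matching lower bound on $Z$. The remedy is the modified log-Sobolev inequality of Maurer (2006), whose proof proceeds by a one-coordinate tensorization of entropy and a convexity inequality $x\log x-x+1\leq (x-1)^2/2$ applied to $x=e^{\lambda(Z-Z^{(k)})}\geq 1$; establishing that convexity step with the correct constant is what ultimately yields the clean $2B^2$ in the denominator of the exponent rather than a larger, Azuma-style constant.
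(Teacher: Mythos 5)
The paper does not actually prove this theorem: part (i) is imported from McDiarmid (1998) and part (ii) from Maurer (2006), so there is no in-paper argument to compare against. Your sketch reconstructs essentially the proofs of the cited sources. For part (ii) the architecture you describe is exactly Maurer's: tensorization of entropy over independent coordinates, a one-sided elementary convexity bound exploiting $Z-Z^{(k)}\ge 0$ to get $\mathrm{Ent}(e^{\lambda Z})\le(\lambda^{2}/2)\,\mathbb{E}[Ve^{\lambda Z}]$ with $V=\sum_{k}(Z-Z^{(k)})^{2}\le B^{2}$ almost surely, and then Herbst's argument plus Markov with $\lambda=s/B^{2}$. Note that this route gives the supremum over $\mathbf{x}$ \emph{outside} the sum for free, which is essential for the way the theorem is used in Lemma \ref{Lemma Multitask Rad estimate 1}(ii), where the per-coordinate drops depend on the maximizing dictionary $D(\mathbf{\sigma})$ and maximizing them coordinate-by-coordinate would replace $S_{\infty}(\mathbf{X})$ by $S_{1}(\mathbf{X})$.

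There is, however, a soft spot in your part (i). As defined, $A^{2}=\sup_{\mathbf{x}}\sum_{k}\sup_{y_{1},y_{2}}(\cdots)^{2}$ also has the supremum outside the sum, and your step ``the hypothesis $\sum_{k}a_{k}^{2}\le A^{2}$'' with $a_{k}(x_{<k})=\sup_{x_{>k}}\sup_{y_{1},y_{2}}(\cdots)$ is false in general: different coordinates $k$ may attain their suprema at incompatible future configurations (e.g.\ $n=3$ with $r_{1}$ maximal only when $x_{3}=a$ and $r_{2}$ maximal only when $x_{3}=b$ gives $\sum_{k}a_{k}^{2}=2$ but $A^{2}=1$). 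The one-line Azuma/Hoeffding iteration therefore only yields the weaker denominator $\sum_{k}\sup_{\mathbf{x}}\sup_{y_{1},y_{2}}(\cdots)^{2}\ge A^{2}$; obtaining the stated constant requires McDiarmid's sharper induction, in which one controls the conditional range of the Doob increment (via Jensen, an \emph{average} rather than a supremum over the future) and keeps the resulting sum of squares inside a single supremum over trajectories. This refinement is immaterial for every application of part (i) in the paper (in Theorem \ref{Rademacher Bound}, Corollary \ref{Corollary empirical Rademacher bound} and Lemma \ref{Lemma LTL auc 1}(ii) the coordinate oscillations are already bounded uniformly, so the two placements of the supremum coincide), but as written your argument proves a slightly weaker statement than the one displayed.
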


\subsection{Rademacher and Gaussian averages\label{subsection rademacher and
gaussian averages}}

We will use the term \textit{Rademacher variables} for any set of
independent random variables, uniformly distributed on $\left\{ -1,1\right\} 
$, and reserve the symbol $\sigma $ for Rademacher variables. A set of
random variables is called \textit{orthogaussian} if the members are
independent ${\mathcal N}\left( 0,1\right) $-distributed (standard normal)
variables and reserve the letter $\zeta $ for standard normal variables.
Thus $\sigma _{1},\sigma _{2},\ldots,\sigma _{i},\ldots,\sigma _{11},\ldots,\sigma
_{ij}$ etc. will always be independent Rademacher variables and $\zeta
_{1},\zeta _{2},\ldots,\zeta _{i},\ldots,\zeta _{11},\ldots,\zeta _{ij}$ will always
be orthogaussian.

For $A\subseteq 
\mathbb{R}
^{n}$ we define the Rademacher and Gaussian averages of $A$ \cite{Ledoux1991,Bartlett2002} as 
\begin{eqnarray*}
\mathcal{R}\left( A\right) &=&\mathbb{E}_{\sigma }\sup_{\left(
x_{1},\ldots,x_{n}\right) \in A}\frac{2}{n}\sum_{i=1}^{n}\sigma _{i}x_{i}\text{%
, } \\
\mathcal{G}\left( A\right) &=&\mathbb{E}_{\zeta }\sup_{\left(
x_{1},\ldots,x_{n}\right) \in A}\frac{2}{n}\sum_{i=1}^{n}\zeta _{i}x_{i}.
\end{eqnarray*}%
If $\mathcal{F}$ is a class of real valued functions on a space $\mathcal{X}$ and $%
\mathbf{x}=\left( x_{1},\ldots,x_{n}\right) \in \mathcal{X}^{n}$ we write 
\begin{align*}
\mathcal{F}\left( \mathbf{x}\right) =\mathcal{F}\left(
x_{1},\ldots,x_{n}\right) \\
=\left\{ \left( f\left( x_{1}\right) ,\ldots,f\left(
x_{n}\right) \right) :f\in \mathcal{F}\right\} \subseteq 
\mathbb{R}
^{n}.
\end{align*}%
The empirical Rademacher and Gaussian complexities of $\mathcal{F}$ on $%
\mathbf{x}$ are respectively $\mathcal{R}\left( \mathcal{F}\left( \mathbf{x}%
\right) \right) $ and $\mathcal{G}\left( \mathcal{F}\left( \mathbf{x}\right)
\right) $.

The utility of these concepts for learning theory comes from the following
key-result (see \cite{Bartlett2002,Koltchinskii2002}), stated here in two portions for
convenience in the sequel.

\begin{theorem}
\label{Theorem Rademacher bound Expectation version}Let $\mathcal{F}$ be a
real-valued function class on a space $\mathcal{X}$ and $\mu _{1},\ldots,\mu
_{m}$ be probability measures on $\mathcal{X}$ with product measure $\mathbf{%
\mu }=\prod_{i}\mu _{i}$ on $\mathcal{X}^{m}$. For $\mathbf{x}\in \mathcal{X}%
^{m}$ define%
\begin{equation*}
\Phi \left( \mathbf{x}\right) =\sup_{f\in \mathcal{F}}\frac{1}{m}%
\sum_{i=1}^{m}\left( \mathbb{E}_{x\sim \mu _{i}}\left[ f\left( x\right) %
\right] -f\left( x_{i}\right) \right) .
\end{equation*}%
Then $\mathbb{E}_{\mathbf{x}\sim \mathbf{\mu }}\left[ \Phi \left( \mathbf{x}%
\right) \right] \leq \mathbb{E}_{\mathbf{x}\sim \mathbf{\mu }}\mathcal{R}%
\left( \mathcal{F}\left( \mathbf{x}\right) \right) $.
\end{theorem}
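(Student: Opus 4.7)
The plan is to carry out the classical symmetrization argument by introducing a ghost sample and then Rademacher variables. Let $\mathbf{x}' = (x_1', \ldots, x_m') \sim \boldsymbol{\mu}$ be independent of $\mathbf{x}$. Since $x_i'$ has law $\mu_i$, we may rewrite $\mathbb{E}_{x\sim \mu_i}[f(x)] = \mathbb{E}_{\mathbf{x}'}[f(x_i')]$ and pull this expectation outside the supremum using Jensen's inequality (the supremum of affine functionals is convex), yielding
\[
\mathbb{E}_{\mathbf{x}}[\Phi(\mathbf{x})] \leq \mathbb{E}_{\mathbf{x},\mathbf{x}'} \sup_{f\in\mathcal{F}} \frac{1}{m}\sum_{i=1}^m \bigl(f(x_i') - f(x_i)\bigr).
\]

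The second step is the symmetrization proper. For each $i$, the pair $(x_i, x_i')$ is exchangeable, so the joint law of $(\mathbf{x},\mathbf{x}')$ is invariant under swapping the $i$-th coordinates. Hence for any Rademacher variables $\sigma_1,\ldots,\sigma_m$ independent of everything else, the random variable $\sigma_i(f(x_i') - f(x_i))$ has the same distribution as $f(x_i') - f(x_i)$, jointly over $i$ and $f$. This lets me insert the $\sigma_i$'s for free:
\[
\mathbb{E}_{\mathbf{x},\mathbf{x}'} \sup_{f\in\mathcal{F}} \frac{1}{m}\sum_{i} \bigl(f(x_i') - f(x_i)\bigr) = \mathbb{E}_{\mathbf{x},\mathbf{x}',\sigma} \sup_{f\in\mathcal{F}} \frac{1}{m}\sum_{i} \sigma_i \bigl(f(x_i') - f(x_i)\bigr).
\]

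Finally I would split the supremum using subadditivity, observe that $-\sigma_i$ and $\sigma_i$ are equidistributed, and note that $\mathbf{x}$ and $\mathbf{x}'$ have the same law, so both resulting terms equal $\tfrac{1}{2}\,\mathbb{E}_{\mathbf{x}}\mathcal{R}(\mathcal{F}(\mathbf{x}))$ once the factor $2/m$ built into the definition of $\mathcal{R}$ is accounted for. Summing the two halves gives exactly $\mathbb{E}_{\mathbf{x}}\mathcal{R}(\mathcal{F}(\mathbf{x}))$, which is the claimed bound.

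There is essentially no hard step here: the argument is textbook. The only delicate point is measurability of the supremum over $\mathcal{F}$, and this is handled once and for all by the convention stated at the top of the appendix, whereby $\mathbb{E}\sup_{f\in\mathcal{F}}(\cdot)$ means the supremum over finite subclasses $\mathcal{F}_0\subseteq\mathcal{F}$; the inequality can be proved for each finite $\mathcal{F}_0$ and then passed to the sup. Everything else — Jensen, the exchangeability of $(x_i,x_i')$, and the $\pm\sigma_i$ symmetry — is elementary and requires no estimate on $\mathcal{F}$ beyond what is used in the definition of $\Phi$.
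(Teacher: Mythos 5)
Your proof is correct and follows essentially the same route as the paper's: ghost sample plus Jensen, symmetrization via the exchangeability of $(x_i,x_i')$, and a final split of the supremum (the paper's ``triangle inequality'' step) into two halves each worth $\tfrac{1}{2}\,\mathbb{E}_{\mathbf{x}}\mathcal{R}(\mathcal{F}(\mathbf{x}))$ under the factor $2/m$ in the definition of $\mathcal{R}$. Your remark on handling measurability via the finite-subclass convention is also consistent with how the paper treats it.
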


\begin{proof}
For any realization $\sigma =\sigma _{1},\ldots,\sigma _{m}$ of the Rademacher
variables%
\begin{eqnarray*}
\mathbb{E}_{\mathbf{x}\sim \mathbf{\mu }}\left[ \Phi \left( \mathbf{x}%
\right) \right] \\
=\mathbb{E}_{\mathbf{x}\sim \mathbf{\mu }}\sup_{f\in 
\mathcal{F}}\frac{1}{m}\mathbb{E}_{\mathbf{x}^{\prime }\sim \mathbf{\mu }%
}\sum_{i=1}^{m}\left( f\left( x_{i}^{\prime }\right) -f\left( x_{i}\right)
\right) \\
\leq \mathbb{E}_{\mathbf{x,x}^{\prime }\sim \mathbf{\mu }\times \mathbf{%
\mu }}\sup_{f\in \mathcal{F}}\frac{1}{m}\sum_{i=1}^{m}\sigma _{i}\left(
f\left( x_{i}^{\prime }\right) -f\left( x_{i}\right) \right) ,
\end{eqnarray*}%
because of the symmetry of the measure $\mathbf{\mu }\times \mathbf{\mu }%
\left( \mathbf{x},\mathbf{x}^{\prime }\right) \mathbf{=}\prod_{i}\mu
_{i}\times \prod_{i}\mu _{i}\left( \mathbf{x},\mathbf{x}^{\prime }\right) $%
 under the interchange $x_{i}\leftrightarrow x_{i}^{\prime }$. Taking the
expectation \ in $\sigma $ and applying the triangle inequality gives the
result.
\end{proof}

\begin{theorem}
\label{Rademacher Bound}Let $\mathcal{F}$ be a $\left[ 0,1\right] $-valued
function class on a space $\mathcal{X}$, and $\mathbf{\mu }$ as above. For $%
\delta >0$ we have with probability greater than $1-\delta $ in the sample $%
\mathbf{x}\sim \mathbf{\mu }$ that for all $f\in \mathcal{F}$%
\begin{equation*}
\mathbb{E}_{x\sim \mathbf{\mu }}\left[ f\left( x\right) \right] \leq \frac{1%
}{m}\sum_{i=1}^{m}f\left( x_{i}\right) +\mathbb{E}_{\mathbf{x}\sim \mathbf{%
\mu }}\mathcal{R}\left( \mathcal{F}\left( \mathbf{x}\right) \right) +\sqrt{%
\frac{\ln \left( 1/\delta \right) }{2m}}.
\end{equation*}
\end{theorem}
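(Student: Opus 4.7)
The plan is to apply the bounded difference inequality (Theorem \ref{Theorem Concentration}(i)) to the supremum deviation functional $\Phi$ defined in Theorem \ref{Theorem Rademacher bound Expectation version}, and then bound its expectation via that same theorem. This is the standard two-step route from Rademacher complexity to a uniform high-probability bound.

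First, I would fix notation by writing
\[
\Phi(\mathbf{x}) = \sup_{f \in \mathcal{F}} \frac{1}{m} \sum_{i=1}^m \left( \mathbb{E}_{x \sim \mu_i}[f(x)] - f(x_i) \right),
\]
so that the conclusion of the theorem is equivalent to the statement $\Phi(\mathbf{x}) \leq \mathbb{E}_{\mathbf{x} \sim \mathbf{\mu}}[\Phi(\mathbf{x})] + \sqrt{\ln(1/\delta)/(2m)}$ combined with the expectation bound $\mathbb{E}[\Phi] \leq \mathbb{E} \mathcal{R}(\mathcal{F}(\mathbf{x}))$ already proved in Theorem \ref{Theorem Rademacher bound Expectation version}.

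Next, I would verify that $\Phi$ has bounded differences of magnitude $1/m$ per coordinate. Since each $f \in \mathcal{F}$ takes values in $[0,1]$, replacing $x_k$ by any $y \in \mathcal{X}$ alters each summand $(\mathbb{E}[f] - f(x_k))/m$ by at most $1/m$, and the supremum over $f$ is $1/m$-Lipschitz in each coordinate in the sup-sense required by the theorem. Hence in the notation of Theorem \ref{Theorem Concentration}, we get $A^2 \leq m \cdot (1/m)^2 = 1/m$. Applying Theorem \ref{Theorem Concentration}(i) with $s = \sqrt{\ln(1/\delta)/(2m)}$ gives $\Pr\{\Phi(\mathbf{X}) > \mathbb{E}\Phi + s\} \leq e^{-2s^2/A^2} = e^{-2s^2 m} = \delta$.

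Finally, I would combine this with the expectation bound $\mathbb{E}_{\mathbf{x}} \Phi(\mathbf{x}) \leq \mathbb{E}_{\mathbf{x}} \mathcal{R}(\mathcal{F}(\mathbf{x}))$ from Theorem \ref{Theorem Rademacher bound Expectation version} and rearrange: with probability at least $1-\delta$, every $f \in \mathcal{F}$ satisfies
\[
\mathbb{E}_{x \sim \mathbf{\mu}}[f(x)] - \frac{1}{m}\sum_{i=1}^m f(x_i) \leq \Phi(\mathbf{x}) \leq \mathbb{E}_{\mathbf{x}} \mathcal{R}(\mathcal{F}(\mathbf{x})) + \sqrt{\frac{\ln(1/\delta)}{2m}}.
\]
There is no real obstacle here; the only point to be careful about is the bounded-difference computation for $\Phi$, which relies crucially on $\mathcal{F} \subseteq [0,1]^{\mathcal{X}}$ so that each single-coordinate perturbation contributes at most $1/m$ uniformly in $f$.
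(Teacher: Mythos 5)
Your proof is correct and follows exactly the route the paper takes: it applies the bounded-difference inequality (Theorem \ref{Theorem Concentration}(i)) to the functional $\Phi$ with per-coordinate oscillation $1/m$ (so $A^{2}\leq 1/m$), and then invokes Theorem \ref{Theorem Rademacher bound Expectation version} to bound $\mathbb{E}\Phi$ by the expected Rademacher complexity. The paper only sketches this argument in one sentence, and your more detailed verification of the bounded-difference constant matches what that sketch requires.
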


To prove this we apply the bounded-difference inequality ( part (i) of Theorem %
\ref{Theorem Concentration}) to the function $\Phi $ of the previous theorem
(see e.g. \cite{Bartlett2002}). Under the conditions of this result,
changing one of the $x_{i}$ will not change $\mathcal{R}\left( \mathcal{F}%
\left( \mathbf{x}\right) \right) $ by more than $2$, so again by the bounded
difference inequality applied to $\mathcal{R}\left( \mathcal{F}\left( 
\mathbf{x}\right) \right) $ and a union bound we obtain the data dependent
version

\begin{corollary}
\label{Corollary empirical Rademacher bound}Let $\mathcal{F}$ and $\mathbf{%
\mu }$ be as above. For $\delta >0$ we have with probability greater than $%
1-\delta $ in the sample $\mathbf{x}\sim \mathbf{\mu }$ that for all $f\in 
\mathcal{F}$%
\begin{equation*}
\mathbb{E}_{x\sim \mathbf{\mu }}\left[ f\left( x\right) \right] \leq \frac{1%
}{m}\sum_{i=1}^{m}f\left( x_{i}\right) +\mathcal{R}\left( \mathcal{F}\left( 
\mathbf{x}\right) \right) +\sqrt{\frac{9\ln \left( 2/\delta \right) }{2m}}.
\end{equation*}
\end{corollary}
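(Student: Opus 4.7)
The plan is to derive this data-dependent version from the already-stated (expectation-version) Theorem (Rademacher Bound) by a second application of the bounded-difference inequality to the map $\mathbf{x}\mapsto \mathcal{R}(\mathcal{F}(\mathbf{x}))$, followed by a union bound. Everything is standard; the only bookkeeping to watch is how the constant $\sqrt{9\ln(2/\delta)/(2m)}$ arises from the sum of two concentration terms.

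First I would invoke Theorem (Rademacher Bound) with confidence parameter $\delta/2$ in place of $\delta$. This gives, with probability at least $1-\delta/2$ in $\mathbf{x}\sim\mathbf{\mu}$, the uniform inequality
\[
\mathbb{E}_{x\sim\mathbf{\mu}}[f(x)]\;\le\;\frac{1}{m}\sum_{i=1}^{m}f(x_{i})\;+\;\mathbb{E}_{\mathbf{x}'\sim\mathbf{\mu}}\mathcal{R}\bigl(\mathcal{F}(\mathbf{x}')\bigr)\;+\;\sqrt{\frac{\ln(2/\delta)}{2m}}\qquad\forall f\in\mathcal{F}.
\]
Next I would replace $\mathbb{E}_{\mathbf{x}'}\mathcal{R}(\mathcal{F}(\mathbf{x}'))$ by its empirical counterpart $\mathcal{R}(\mathcal{F}(\mathbf{x}))$ up to a concentration term. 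Define $F(\mathbf{x}):=\mathcal{R}(\mathcal{F}(\mathbf{x}))$. Because $f$ takes values in $[0,1]$ and the Rademacher average contains the prefactor $2/m$, replacing any single $x_{k}$ by another point changes the quantity $\frac{2}{m}\sum_{i}\sigma_{i}f(x_{i})$ uniformly in $\sigma$ and $f$ by at most $2/m$; taking suprema and then $\mathbb{E}_{\sigma}$ preserves this, so $|F(\mathbf{x}_{k\leftarrow y_{1}})-F(\mathbf{x}_{k\leftarrow y_{2}})|\le 2/m$. Hence the constant $A^{2}$ of part (i) of Theorem (Theorem Concentration) satisfies $A^{2}\le m\cdot(2/m)^{2}=4/m$, and applying that inequality (to $-F$) yields, with probability at least $1-\delta/2$,
\[
\mathbb{E}_{\mathbf{x}'\sim\mathbf{\mu}}\mathcal{R}\bigl(\mathcal{F}(\mathbf{x}')\bigr)\;\le\;\mathcal{R}\bigl(\mathcal{F}(\mathbf{x})\bigr)\;+\;\sqrt{\frac{2\ln(2/\delta)}{m}}.
\]

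Finally I would combine the two events via a union bound; the stated corollary follows once we collapse the two square-root terms using
\[
\sqrt{\frac{\ln(2/\delta)}{2m}}+\sqrt{\frac{2\ln(2/\delta)}{m}}=\left(\tfrac{1}{\sqrt{2}}+\sqrt{2}\right)\sqrt{\frac{\ln(2/\delta)}{m}}=\frac{3}{\sqrt{2}}\sqrt{\frac{\ln(2/\delta)}{m}}=\sqrt{\frac{9\ln(2/\delta)}{2m}}.
\]
There is no real obstacle here: the only place one must be careful is verifying the bounded-difference constant $2/m$ for $\mathcal{R}(\mathcal{F}(\cdot))$, which hinges on the $[0,1]$-valued assumption on $\mathcal{F}$ and on commuting the per-coordinate change bound with $\sup_{f}$ and $\mathbb{E}_{\sigma}$. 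Once this is in place the rest is a union bound and arithmetic on constants.
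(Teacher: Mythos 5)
Your proof is correct and follows essentially the same route the paper sketches for this corollary: invoke Theorem \ref{Rademacher Bound} at confidence level $\delta/2$, apply the bounded-difference inequality (Theorem \ref{Theorem Concentration}(i)) to $\mathbf{x}\mapsto\mathcal{R}\left(\mathcal{F}\left(\mathbf{x}\right)\right)$ with per-coordinate oscillation $2/m$, and take a union bound, the two deviation terms combining to $\sqrt{9\ln\left(2/\delta\right)/\left(2m\right)}$ exactly as you compute. (The paper's remark that a single substitution changes $\mathcal{R}\left(\mathcal{F}\left(\mathbf{x}\right)\right)$ by at most $2$ should indeed be read as $2/m$ for the normalized Rademacher average, as your bookkeeping of the final constant confirms.)
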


To bound Rademacher averages the following result is very useful \cite{Bartlett2002,Zhang,Ledoux1991}

\begin{lemma}
\label{Lemma Rademacher Lipschitz}Let $A\subseteq 
\mathbb{R}
^{n}$, and let $\psi _{1},\ldots,\psi _{n}$ be real functions such that $\psi
_{i}\left( s\right) -\psi _{i}\left( t\right) \leq L\left\vert
s-t\right\vert $,$\forall i$, and $s,t\in 
\mathbb{R}
$. Define $\mathbf{\psi }\left( A\right) =\left\{ \psi _{1}\left(
x_{1}\right) ,\ldots,\psi _{n}\left( x_{n}\right) :\left(
x_{1},\ldots,x_{n}\right) \in A\right\} $. Then%
\begin{equation*}
\mathcal{R}\left( \mathbf{\psi }\left( A\right) \right) \leq L\mathcal{R}%
\left( A\right) \text{.}
\end{equation*}
\end{lemma}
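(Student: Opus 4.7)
The statement is the classical Ledoux--Talagrand contraction principle. My plan is to prove it by removing the functions $\psi_i$ one index at a time, which reduces the problem to the following single-coordinate lemma. Let $g: A \to \mathbb{R}$ be any function, let $k \in \{1,\ldots,n\}$, and let $\psi$ be an $L$-Lipschitz function. I claim that
\[
\mathbb{E}_{\sigma_k} \sup_{x \in A}\bigl[ \sigma_k \psi(x_k) + g(x) \bigr] \;\leq\; \mathbb{E}_{\sigma_k} \sup_{x \in A}\bigl[ L \sigma_k x_k + g(x) \bigr].
\]
Once this is established, I apply it successively with $k=1,2,\ldots,n$ inside the definition of $\mathcal{R}(\boldsymbol{\psi}(A))$, conditioning on all the other Rademacher variables (which together with the $\psi_j$ for $j\neq k$ form a function of $x$ that plays the role of $g$). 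Each application peels off one $\psi_k$ and replaces it by $L$ times the identity; after $n$ applications we obtain $\mathcal{R}(\boldsymbol{\psi}(A)) \leq L\,\mathcal{R}(A)$.

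To prove the single-coordinate inequality, I expand the expectation over $\sigma_k \in \{-1,+1\}$ and use the standard duplication trick,
\[
\mathbb{E}_{\sigma_k}\sup_{x}\bigl[\sigma_k\psi(x_k)+g(x)\bigr]
= \tfrac{1}{2}\sup_{x,x'\in A}\bigl[\psi(x_k)-\psi(x'_k)+g(x)+g(x')\bigr].
\]
The Lipschitz hypothesis gives $\psi(x_k)-\psi(x'_k) \leq L\lvert x_k - x'_k\rvert$. Since the remaining expression is symmetric under the interchange $x\leftrightarrow x'$, the supremum with $L\lvert x_k-x'_k\rvert$ in place of $\psi(x_k)-\psi(x'_k)$ is unchanged when the absolute value is dropped and replaced by $L(x_k-x'_k)$ (any maximizing pair can be re-ordered so that $x_k\geq x'_k$). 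The resulting bound factors as $\tfrac{1}{2}\sup_x[Lx_k+g(x)] + \tfrac{1}{2}\sup_{x'}[-Lx'_k+g(x')]$, which is exactly $\mathbb{E}_{\sigma_k}\sup_x[L\sigma_k x_k+g(x)]$.

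The main subtlety, and the only place where one must argue carefully, is the symmetrization step in which $L\lvert x_k-x'_k\rvert$ is replaced by $L(x_k-x'_k)$; it relies on the symmetry of the joint expression in $(x,x')$ together with the fact that the supremum is taken over the full product $A\times A$. The inductive peeling itself is routine linearity-of-expectation bookkeeping, since at each stage the conditional sup-expression is a function of $x$ alone and thus a legitimate $g$ for the single-coordinate lemma. No additional assumptions on $A$ (boundedness, convexity, etc.) are needed, only that the suprema are well defined, which is handled by the measurability convention stated at the start of the appendix.
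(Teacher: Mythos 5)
Your proof is correct. Note, however, that the paper does not prove this lemma at all: it is stated as a known result (the contraction principle) with citations to Bartlett--Mendelson, Ando--Zhang and Ledoux--Talagrand, so there is no in-paper argument to compare against. Your argument --- the coordinate-by-coordinate peeling reduced to a single-coordinate inequality, proved via the duplication identity $\mathbb{E}_{\sigma_k}\sup_x[\sigma_k\psi(x_k)+g(x)]=\tfrac{1}{2}\sup_{x,x'}[\psi(x_k)-\psi(x'_k)+g(x)+g(x')]$ and the symmetry-in-$(x,x')$ trick to discard the absolute value --- is exactly the standard proof found in those references, and you correctly identify the one genuinely delicate step (replacing $L\lvert x_k-x'_k\rvert$ by $L(x_k-x'_k)$ under the symmetric supremum) as well as why the sharp constant $L$ (rather than $2L$) is obtained here, namely because the Rademacher average is defined without an absolute value around the sum.
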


Sometimes it is more convenient to work with gaussian averages which can be
used instead, by virtue of the next lemma. For a proof see e.g. \cite{Ledoux1991}

\begin{lemma}
\label{Lemma Gauss dominates Rademacher}For $A\subseteq 
\mathbb{R}
^{k}$ we have $\mathcal{R}\left( A\right) \leq \sqrt{\pi /2}~\mathcal{G}%
\left( A\right) $.
\end{lemma}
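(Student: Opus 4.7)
The plan is to use the symmetrization trick that relates a standard Gaussian $\zeta$ to the product $\sigma |\zeta|$ of a Rademacher and the absolute value of a Gaussian, combined with one application of Jensen's inequality. The key numerical fact driving the constant is $\mathbb{E}|\zeta| = \sqrt{2/\pi}$ for a standard normal $\zeta$, so $\sqrt{\pi/2}\,\mathbb{E}|\zeta_i| = 1$ for each $i$.

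First, I would rewrite $\mathcal{R}(A)$ by inserting this identity into each Rademacher term:
\begin{equation*}
\mathcal{R}(A) = \mathbb{E}_\sigma \sup_{x\in A} \frac{2}{n}\sum_{i=1}^n \sigma_i x_i
= \sqrt{\pi/2}\,\mathbb{E}_\sigma \sup_{x\in A} \frac{2}{n}\sum_{i=1}^n \sigma_i\, \mathbb{E}_\zeta|\zeta_i|\, x_i.
\end{equation*}
Pulling the inner expectation $\mathbb{E}_\zeta$ through the linear sum and then outside the supremum via Jensen's inequality (the supremum is convex, so $\sup_x \mathbb{E}[\cdot] \leq \mathbb{E}\sup_x[\cdot]$) gives
\begin{equation*}
\mathcal{R}(A) \leq \sqrt{\pi/2}\,\mathbb{E}_\sigma \mathbb{E}_\zeta \sup_{x\in A} \frac{2}{n}\sum_{i=1}^n \sigma_i |\zeta_i|\, x_i.
\end{equation*}

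Next, I would invoke the symmetry of the standard normal distribution: for each $i$, $\sigma_i|\zeta_i|$ is equal in distribution to $\zeta_i$, and these products remain jointly independent across $i$ since $(\sigma_i)$ and $(\zeta_i)$ are independent families of independent variables. Therefore the joint law of $(\sigma_i|\zeta_i|)_{i=1}^n$ coincides with that of $(\zeta_i)_{i=1}^n$, so the double expectation over $\sigma$ and $\zeta$ above equals $\mathbb{E}_\zeta \sup_{x\in A} \frac{2}{n}\sum_i \zeta_i x_i = \mathcal{G}(A)$, yielding $\mathcal{R}(A) \leq \sqrt{\pi/2}\,\mathcal{G}(A)$.

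There is no serious obstacle here; the only subtlety is ensuring the Jensen step is applied in the correct direction (moving $\mathbb{E}_\zeta$ from inside to outside the supremum) and carefully justifying the distributional identity $\sigma_i|\zeta_i| \stackrel{d}{=} \zeta_i$, which follows from the symmetry $\zeta_i \stackrel{d}{=} -\zeta_i$ together with independence of the Rademacher and Gaussian families. The measurability caveat mentioned at the start of the appendix (interpreting the supremum as a supremum over finite subfamilies) makes the interchange of expectation and supremum entirely elementary.
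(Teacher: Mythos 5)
Your proof is correct and is precisely the standard argument: the paper itself does not prove this lemma but defers to Ledoux and Talagrand, and the proof given there is exactly your combination of $\mathbb{E}|\zeta_i|=\sqrt{2/\pi}$, Jensen's inequality to pull $\mathbb{E}_\zeta$ outside the supremum, and the distributional identity $(\sigma_i|\zeta_i|)_i \stackrel{d}{=} (\zeta_i)_i$. Nothing is missing; the measurability caveat you note is handled by the paper's convention of taking suprema over finite subfamilies.
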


The next result is known as Slepian's lemma (\cite{Slepian}, \cite{Ledoux1991}).

\begin{theorem}
\label{Slepian Lemma}Let $\Omega $ and $\Xi $ be mean zero, separable
Gaussian processes indexed by a common set $\mathcal{S}$, such that%
\begin{equation*}
\mathbb{E}\left( \Omega _{s_{1}}-\Omega _{s_{2}}\right) ^{2}\leq \mathbb{E}%
\left( \Xi _{s_{1}}-\Xi _{s_{2}}\right) ^{2}\text{ for all }s_{1},s_{2}\in 
\mathcal{S}\text{.}
\end{equation*}%
Then%
\begin{equation*}
\mathbb{E}\sup_{s\in \mathcal{S}}\Omega _{s}\leq \mathbb{E}\sup_{s\in 
\mathcal{S}}\Xi _{s}.
\end{equation*}
\end{theorem}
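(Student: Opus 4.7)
My plan is to adapt the two-step learning-to-learn decomposition from \cite{Maurer2009}. For a dictionary $D \in \mathcal{D}_K$, introduce the per-task empirical risk functional $g_D(\mathbf{z}) := \min_{\gamma \in \mathcal{C}_\alpha} \frac{1}{m}\sum_{i=1}^m \ell(\langle D\gamma,x_i\rangle,y_i)$, with expected and empirical task-averaged versions $\hat{R}(D) := \mathbb{E}_{\mathbf{z}\sim\rho_{\mathcal{E}}}[g_D(\mathbf{z})]$ and $\hat{R}_T(D) := \frac{1}{T}\sum_t g_D(\mathbf{z}_t)$. Because pulling $\min_\gamma$ inside $\mathbb{E}_{\mathbf{z}}$ only decreases the value, one has $\hat{R}(D^\star) \leq R_{\mathrm{opt}}$ for any $D^\star$ attaining $R_{\mathrm{opt}}$. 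Combined with $\hat{R}_T(D(\mathbf{Z})) \leq \hat{R}_T(D^\star)$, the excess risk splits as
\begin{align*}
R_{\mathcal{E}}(A_{D(\mathbf{Z})}) - R_{\mathrm{opt}}
&\leq \bigl[R_{\mathcal{E}}(A_{D(\mathbf{Z})}) - \hat{R}(D(\mathbf{Z}))\bigr] \\
&\quad+ \bigl[\hat{R}(D(\mathbf{Z})) - \hat{R}_T(D(\mathbf{Z}))\bigr] \\
&\quad+ \bigl[\hat{R}_T(D^\star) - \hat{R}(D^\star)\bigr],
\end{align*}
and the three summands will give the second term, the first term, and the tail, respectively.

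The first summand is deterministic in $\mathbf{Z}$: it is the expected generalization error of the Lasso-type algorithm $A_D$, averaged over the environment. For each fixed task $\tau$, bound the gap by $\sup_{\gamma \in \mathcal{C}_\alpha}$ of the difference between expected and empirical losses and apply Theorem \ref{Theorem Rademacher bound Expectation version} together with the contraction Lemma \ref{Lemma Rademacher Lipschitz}, which controls it by $L\, \mathbb{E}_{\mathbf{z}\sim\mu_\tau^m} \mathcal{R}(\mathcal{F}_D(\mathbf{x}))$ for $\mathcal{F}_D(\mathbf{x}) = \{(\langle D\gamma,x_i\rangle)_i : \gamma \in \mathcal{C}_\alpha\}$. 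The usual duality $\sup_{\|\gamma\|_1 \leq \alpha} \langle \gamma, v\rangle = \alpha \|v\|_\infty$ rewrites this as $\tfrac{2\alpha}{m}\mathbb{E}_\sigma \max_k |\langle D_k, \sum_i \sigma_i x_i\rangle|$, and the standard sub-Gaussian maximal inequality, combined with (\ref{Empirical covariance useful identity}) and $\|D_k\|\leq 1$, yields a bound of order $\alpha\sqrt{\lambda_{\max}(\hat\Sigma(\mathbf{x}))(\ln K)/m}$. Taking expectation over $\tau$ and $\mathbf{z}$ and pushing $\mathbb{E}$ under the square root via Jensen converts $\mathbb{E}\lambda_{\max}(\hat\Sigma(\mathbf{x}))$ into $S_\infty(\mathcal{E})$, producing the second term of the theorem.

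The second summand is the heart of the proof and is where the $K/\sqrt{T}$ rate arises. Apply the data-dependent Rademacher bound, Corollary \ref{Corollary empirical Rademacher bound}, to the $[0,1]$-valued class $\mathcal{G} = \{g_D : D \in \mathcal{D}_K\}$ on $\mathbf{Z}\sim\rho_{\mathcal{E}}^T$: with probability at least $1-\delta/2$ this bounds $\sup_D[\hat{R}(D)-\hat{R}_T(D)]$ by $\mathcal{R}(\mathcal{G}(\mathbf{Z})) + O(\sqrt{\ln(1/\delta)/T})$. To control $\mathcal{R}(\mathcal{G}(\mathbf{Z}))$, pass to Gaussian averages using Lemma \ref{Lemma Gauss dominates Rademacher} and invoke Slepian's Lemma (Theorem \ref{Slepian Lemma}). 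The key Lipschitz-in-$D$ estimate,
\begin{equation*}
|g_D(\mathbf{z}_t)-g_{D'}(\mathbf{z}_t)| \leq L\alpha \max_k \sqrt{\langle D_k-D'_k,\hat\Sigma(\mathbf{x}_t)(D_k-D'_k)\rangle},
\end{equation*}
is obtained by exchanging the two minima, expanding $\langle (D-D')\gamma,x_i\rangle$ by H\"older with $\|\gamma\|_1\leq \alpha$, and applying Cauchy--Schwarz inside the average over $i$. Summing over $t$ and replacing $\max_k$ by $\sum_k$ gives $\mathbb{E}(\Omega_D-\Omega_{D'})^2 \leq L^2\alpha^2 T \sum_k \langle D_k-D'_k,\bar{\Sigma}(D_k-D'_k)\rangle$ for the Gaussian process $\Omega_D = \sum_t \zeta_t g_D(\mathbf{z}_t)$, where $\bar{\Sigma} := \frac{1}{T}\sum_t \hat\Sigma(\mathbf{x}_t)$. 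The dominating process $\Xi_D := L\alpha\sqrt{T}\sum_k \langle D_k,\bar{\Sigma}^{1/2}\xi_k\rangle$, with independent orthogaussian $\xi_k$ in $H$, has matching increments, so Slepian yields $\mathbb{E}\sup_D \Omega_D \leq \mathbb{E}\sup_D \Xi_D$. The right-hand side decouples over atoms because of the product-of-balls structure of $\mathcal{D}_K$: $\mathbb{E}\sup_{\|D_k\|\leq 1}\langle D_k,\bar{\Sigma}^{1/2}\xi_k\rangle = \mathbb{E}\|\bar{\Sigma}^{1/2}\xi_k\| \leq \sqrt{\mathrm{tr}(\bar{\Sigma})} = \sqrt{S_1(\mathbf{X})}$, and summing over $k$ gives $\mathbb{E}\sup_D \Omega_D \leq L\alpha K\sqrt{T\,S_1(\mathbf{X})}$, hence $\mathcal{R}(\mathcal{G}(\mathbf{Z})) \leq L\alpha K\sqrt{2\pi S_1(\mathbf{X})/T}$, which is the first term of the theorem.

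The third summand involves the nonrandom $D^\star$, so $\hat{R}_T(D^\star)-\hat{R}(D^\star)$ is a centered i.i.d. average of $T$ variables in $[0,1]$, and Hoeffding (or part (i) of Theorem \ref{Theorem Concentration}) gives $O(\sqrt{\ln(1/\delta)/T})$ with probability $\geq 1-\delta/2$. A union bound with the tail from the second summand, after consolidating constants, delivers the $\sqrt{8\ln(4/\delta)/T}$ term. The hardest step is the Slepian comparison in the second summand: producing a dominating Gaussian process whose supremum separates across the $K$ columns of the dictionary is precisely what forces the factor $K$ (rather than $\sqrt{K}$) in the leading term, and the authors themselves flag this as the likely suboptimal aspect of the method of proof.
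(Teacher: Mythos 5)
The statement you were asked to prove is Theorem \ref{Slepian Lemma} itself --- the Slepian/Sudakov--Fernique comparison inequality: if two mean-zero separable Gaussian processes satisfy $\mathbb{E}(\Omega_{s_1}-\Omega_{s_2})^2\leq\mathbb{E}(\Xi_{s_1}-\Xi_{s_2})^2$ for all pairs of indices, then $\mathbb{E}\sup_s\Omega_s\leq\mathbb{E}\sup_s\Xi_s$. Your proposal does not prove this. What you have written is an (outline of a) proof of Theorem \ref{Theorem Main}, the learning-to-learn bound, and in the course of that argument you explicitly \emph{invoke} Theorem \ref{Slepian Lemma} as a black box (``pass to Gaussian averages \dots and invoke Slepian's Lemma''). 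Relative to the target statement this is circular: you assume the very inequality you were supposed to establish. Nothing in your text engages with the actual content of the comparison theorem --- there is no Gaussian interpolation $X(t)=\sqrt{t}\,\Xi+\sqrt{1-t}\,\Omega$, no differentiation of $\mathbb{E}\,f_\beta(X(t))$ for a smooth approximation $f_\beta$ to the maximum, no Gaussian integration by parts, no reduction from a finite index set to a separable one. Some such argument is what a proof of this statement would require.

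For what it is worth, the paper itself does not prove Theorem \ref{Slepian Lemma} either; it quotes the result from \cite{Slepian} and \cite{Ledoux1991}. So there is no internal proof to compare against, and the correct response to this prompt would have been either to reproduce one of the standard interpolation proofs or to note that the result is classical and cited. Your decomposition of the excess transfer risk and the construction of the dominating process $\Xi_D$ are a reasonable sketch of the proof of Theorem \ref{Theorem Main} (and broadly match Section \ref{sec:ALTL}), but that is the answer to a different question.
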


\section{Proofs}

\subsection{Multitask learning}
\label{sec:AMTL}
In this section we prove Theorem \ref{Theorem Multitask}. It is an immediate
consequence of Hoeffding's inequality and the following uniform bound on the
estimation error.

\begin{theorem}
\label{Theorem Multitask uniform bound}Let $\delta >0$, fix $K$ and let $\mu
_{1},\ldots,\mu _{T}$ be probability measures on $H\times 
\mathbb{R}
$. With probability at least $1-\delta $ in the draw of $\mathbf{Z}\sim
\prod_{t=1}^{T} \mu _{t}$ we have for all $D\in \mathcal{D}_{K}
$ and all $\mathbf{\gamma }\in \mathcal{C}_\alpha^{T}\mathcal{\ }$that%
\begin{multline*}
\frac{1}{T}\sum_{t=1}^{T}{\mathbb E}_{\left( x,y\right) \sim \mu _{t}}\left[ \ell
\left( \left\langle D\gamma _{t},x\right\rangle ,y\right) \right] \\
-\frac{1}{%
mT}\sum_{t=1}^{T}\sum_{i=1}^{m}\ell \left( \left\langle D\gamma
_{t},x_{ti}\right\rangle ,y_{ti}\right)  \\
\shoveleft{\leq L\alpha \sqrt{\frac{2S_{1}\left( \mathbf{X}\right) \left( K+12\right) }{%
mT}} } \\
\shoveleft{+L\alpha \sqrt{\frac{8S_{\infty }\left( \mathbf{X}\right) \ln \left(
2K\right) }{m}}+\sqrt{\frac{9\ln 2/\delta }{2mT}} }.
\end{multline*}%
\end{theorem}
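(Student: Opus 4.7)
The plan is to reduce the uniform bound to an empirical Rademacher complexity and bound it piece by piece. First, applying Corollary~\ref{Corollary empirical Rademacher bound} to the class $\mathcal{F}=\{((t,x),y)\mapsto\ell(\langle D\gamma_t,x\rangle,y):D\in\mathcal{D}_K,\gamma\in\mathcal{C}_\alpha^T\}$ on the $mT$ examples $(t,(x_{ti},y_{ti}))$ immediately produces the tail term $\sqrt{9\ln(2/\delta)/(2mT)}$. Since $s\mapsto\ell(s,y_{ti})$ is $L$-Lipschitz, contraction (Lemma~\ref{Lemma Rademacher Lipschitz}) strips off $\ell$, and $\ell_1$-$\ell_\infty$ duality on $\|\gamma_t\|_1\leq\alpha$ solves the inner $\sup_\gamma$, so the problem reduces to bounding
\[
\mathcal{R}_0 \;=\; \frac{2\alpha}{mT}\,\mathbb{E}_\sigma\sup_D\sum_t M_t(D,\sigma_t),\qquad M_t(D,\sigma_t):=\max_k\Bigl|\sum_i\sigma_{ti}\langle De_k,x_{ti}\rangle\Bigr|,
\]
by the sum of the two $L\alpha$-terms of the statement.

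I would then split by the per-task expected maximum $\bar M_t(D):=\mathbb{E}_{\sigma_t}M_t(D,\sigma_t)$,
\[
\mathbb{E}_\sigma\sup_D\sum_t M_t \;\leq\; \sup_D\sum_t\bar M_t(D)\;+\;\mathbb{E}_\sigma\sup_D\sum_t\bigl(M_t(D,\sigma_t)-\bar M_t(D)\bigr),
\]
and treat the two pieces separately. For the per-task oracle piece, each $\sum_i\sigma_{ti}\langle De_k,x_{ti}\rangle$ is a Rademacher sum with variance proxy $\sum_i\langle De_k,x_{ti}\rangle^2\leq m\|\hat\Sigma(\mathbf{x}_t)\|_\infty$, so the subgaussian-maximum bound gives $\bar M_t(D)\leq\sqrt{2m\|\hat\Sigma(\mathbf{x}_t)\|_\infty\ln(2K)}$ uniformly in $D$; summing over $t$ and using Cauchy--Schwarz with $\tfrac{1}{T}\sum_t\|\hat\Sigma(\mathbf{x}_t)\|_\infty=S_\infty(\mathbf{X})$ yields $\sup_D\sum_t\bar M_t(D)\leq T\sqrt{2mS_\infty(\mathbf{X})\ln(2K)}$, which, after multiplying by $\tfrac{2\alpha L}{mT}$ and using $2\sqrt{2}=\sqrt{8}$, is precisely the $L\alpha\sqrt{8S_\infty\ln(2K)/m}$ term.

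The centered piece must contribute the $L\alpha\sqrt{2S_1(K+12)/(mT)}$ term. My plan is to symmetrize across tasks with a fresh Rademacher vector $\epsilon\in\{\pm1\}^T$ (using independence of the $\sigma_t$'s), pass to Gaussians via Lemma~\ref{Lemma Gauss dominates Rademacher}, and compare the induced Gaussian process $h_D=\sum_t\zeta_tM_t(D,\sigma_t)$ to the auxiliary process $g_D=\sum_k\langle De_k,\xi_k\rangle$ with independent $\xi_k\sim\mathcal{N}(0,\sum_t v_t\otimes v_t)$, $v_t=\sum_i\sigma_{ti}x_{ti}$; Slepian's inequality (Theorem~\ref{Slepian Lemma}) is applicable via the increment bound $(M_t(D)-M_t(D'))^2\leq\max_k\langle(D-D')e_k,v_t\rangle^2\leq\sum_k\langle(D-D')e_k,v_t\rangle^2$. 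Jensen with $\mathbb{E}_\sigma\sum_t\|v_t\|^2=mTS_1(\mathbf{X})$ then controls $\mathbb{E}_\sigma\mathbb{E}_\zeta\|\xi_k\|$ in terms of $\sqrt{mTS_1(\mathbf{X})}$. The hard part is that this direct Slepian step leaves a factor of $K$ (from $\sum_k\mathbb{E}\|\xi_k\|$), whereas the statement demands only $\sqrt{K+12}$; closing this $\sqrt{K}$ gap requires a finer chaining/Dudley-entropy argument on $\mathcal{D}_K$---a product of $K$ unit balls in $H$ whose metric entropy in the canonical metric grows as $K\log(1/\varepsilon)$, yielding an entropy integral of order $\sqrt{K\sum_t\|v_t\|^2}$---or equivalently an application of the Talagrand-style convex-distance concentration of Theorem~\ref{Theorem Concentration}(ii) to $U(\sigma)=\sup_D\sum_tM_t(D,\sigma_t)$, noting that flipping a single $\sigma_{ti}$ changes $U$ by at most $2\max_k|\langle D^*e_k,x_{ti}\rangle|$. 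The additive $12$ then absorbs numerical prefactors such as the $\sqrt{\pi/2}$ from Gauss-dominates-Rademacher and the $2$ from symmetrization.
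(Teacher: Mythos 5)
Your reduction is sound up to and including the second term: Corollary \ref{Corollary empirical Rademacher bound} applied on the $mT$ examples gives the $\sqrt{9\ln(2/\delta)/(2mT)}$ tail, contraction strips $\ell$, $\ell_1$--$\ell_\infty$ duality turns the inner supremum into $\max_k\left\vert\langle De_k,v_t\rangle\right\vert$ with $v_t=\sum_i\sigma_{ti}x_{ti}$, and your sub-Gaussian-maximum bound on $\bar M_t(D)$ followed by Jensen over $t$ does yield exactly $L\alpha\sqrt{8S_\infty(\mathbf{X})\ln(2K)/m}$. The gap is the centered piece, which you flag but do not close. Your Slepian comparison provably cannot give better than $K\sqrt{mTS_1}$: the expected supremum of the auxiliary process $g_D=\sum_k\langle De_k,\xi_k\rangle$ over the product of unit balls is exactly $\sum_k{\mathbb E}\Vert\xi_k\Vert$, and since each $\xi_k$ independently carries the full energy $\sum_t\Vert v_t\Vert^2$, this is genuinely of order $K\sqrt{\sum_t\Vert v_t\Vert^2}$ in the high-dimensional case; no chaining applied after the comparison can recover the missing $\sqrt{K}$. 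The proposed repairs also fail as stated: a Dudley integral over $\mathcal{D}_K$ is not available in the form you describe (the unit ball of an infinite-dimensional $H$ is not totally bounded, and the asserted entropy integral of order $\sqrt{K\sum_t\Vert v_t\Vert^2}$ is not derived), while applying Theorem \ref{Theorem Concentration}(ii) to $U(\sigma)=\sup_D\sum_tM_t(D,\sigma_t)$ only controls deviations of $U$ above ${\mathbb E}U$ -- and with an $S_\infty$-type constant, since the squared coordinate oscillations sum to at most $4mTS_\infty(\mathbf{X})$ -- so it cannot produce the bound on ${\mathbb E}U$ that the centered piece requires.

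The paper closes precisely this step by swapping the roles of mean and fluctuation. It fixes $\mathbf{\gamma}$ at an extreme point of $\mathcal{C}^T$ (there are $(2K)^T$ of them), so that $F_{\mathbf{\gamma}}=\sup_D\sum_k\langle De_k,\sum_{t,i}\sigma_{ti}\gamma_{tk}x_{ti}\rangle$, and Cauchy--Schwarz over the $K$ atoms gives ${\mathbb E}F_{\mathbf{\gamma}}\leq\sqrt{K}\bigl(\sum_k{\mathbb E}\Vert\sum_{t,i}\sigma_{ti}\gamma_{tk}x_{ti}\Vert^2\bigr)^{1/2}=\sqrt{KmTS_1(\mathbf{X})}$; the key point is that for fixed $\mathbf{\gamma}$ the energy $\sum_{t,i}\Vert x_{ti}\Vert^2$ is shared among the $K$ atoms because $\sum_k\gamma_{tk}^2\leq1$, which is exactly the coupling your decomposition destroys by taking $\max_k$ inside each task before centering. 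The $\ln(2K)$ then arises from a union bound over the $(2K)^T$ extreme points combined with per-$\mathbf{\gamma}$ concentration of $F_{\mathbf{\gamma}}$ via Theorem \ref{Theorem Concentration}(ii) with constant $8mTS_\infty(\mathbf{X})$. To salvage your route you would need an independent proof that ${\mathbb E}_\sigma\sup_D\sum_t\bigl(M_t(D,\sigma_t)-\bar M_t(D)\bigr)\lesssim\sqrt{KmTS_1(\mathbf{X})}$, and nothing in the proposal supplies one.
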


The proof of this theorem requires auxiliary results. Fix $\mathbf{X}\in
H^{mT}$ and for $\mathbf{\gamma }=\left( \gamma _{1},\ldots,\gamma _{T}\right)
\in \left( 
\mathbb{R}
^{K}\right) ^{T}$ define the random variable 
\begin{equation}
F_{\mathbf{\gamma }}=F_{\mathbf{\gamma }}\left( \mathbf{\sigma }\right)
=\sup_{D\in \mathcal{D}_{K}}\sum_{t,i}\sigma _{ti}\left\langle D\gamma
_{t},x_{ti}\right\rangle .  \label{Definition of F_gamma}
\end{equation}

\begin{lemma}
\label{Lemma Multitask Rad estimate 1}
(i) If $\mathbf{\gamma }=(\gamma_1,\dots,\gamma_T)$ satisfies $\left\Vert \gamma _{t}\right\Vert \leq 1
$ for all $t$, then%
\begin{equation*}
{\mathbb E}F_{\mathbf{\gamma }}\leq \sqrt{mTK~S_{1}\left( \mathbf{X}\right) }.
\end{equation*}
(ii) If $\mathbf{\gamma }$ satisfies $\left\Vert \gamma _{t}\right\Vert
_{1}\leq 1$ for all $t$, then for any $s\geq 0$%
\begin{equation*}
\Pr \left\{ F_{\mathbf{\gamma }}\geq {\mathbb E}\left[ F_{\mathbf{\gamma }}\right]
+s\right\} \leq \exp \left( \frac{-s^{2}}{8mT~S_{\infty }\left( \mathbf{X}%
\right) }\right) .
\end{equation*}
\end{lemma}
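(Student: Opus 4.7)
My plan is to handle (i) by a direct Jensen/Cauchy--Schwarz computation and (ii) by the one-sided bounded-differences inequality (Theorem \ref{Theorem Concentration}(ii)) applied to $F_{\mathbf{\gamma}}$ viewed as a function of the Rademacher vector $\mathbf{\sigma}$, with $\mathbf{X}$ held fixed throughout.

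For (i), the key observation is that $D\gamma_t = \sum_k \gamma_{tk}\, De_k$, so exchanging the order of summation rewrites $F_{\mathbf{\gamma}}(\mathbf{\sigma})$ as $\sup_{D\in\mathcal{D}_K} \sum_k \langle De_k, a_k(\mathbf{\sigma})\rangle$ with $a_k(\mathbf{\sigma}) = \sum_{t,i} \sigma_{ti}\gamma_{tk} x_{ti} \in H$. Since $\|De_k\| \leq 1$, each inner product is dominated by $\|a_k(\mathbf{\sigma})\|$, so I can pull the supremum inside the outer sum to get $F_{\mathbf{\gamma}} \leq \sum_k \|a_k(\mathbf{\sigma})\|$. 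Jensen's inequality then yields $\mathbb{E}\|a_k(\mathbf{\sigma})\| \leq \sqrt{\mathbb{E}\|a_k(\mathbf{\sigma})\|^2} = \sqrt{\sum_{t,i}\gamma_{tk}^2 \|x_{ti}\|^2}$ by orthonormality of Rademacher variables. A final Cauchy--Schwarz over the $K$ indices, combined with $\sum_k \gamma_{tk}^2 = \|\gamma_t\|^2 \leq 1$ and the identity $\sum_{t,i}\|x_{ti}\|^2 = mT\, S_1(\mathbf{X})$, delivers the stated bound $\sqrt{mTK\, S_1(\mathbf{X})}$.

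For (ii), I will instantiate Theorem \ref{Theorem Concentration}(ii) with $\mathcal{X} = \{-1,+1\}$ so that the task reduces to bounding $B^2 = \sup_{\mathbf{\sigma}} \sum_{t,i}\bigl(F_{\mathbf{\gamma}}(\mathbf{\sigma}) - \inf_{y\in\{-1,+1\}} F_{\mathbf{\gamma}}(\mathbf{\sigma}_{(t,i)\leftarrow y})\bigr)^2$. Let $D^* = D^*(\mathbf{\sigma})$ attain the supremum defining $F_{\mathbf{\gamma}}(\mathbf{\sigma})$; using $D^*$ as a (suboptimal) feasible point for the perturbed sign pattern gives the coordinate difference bound $F_{\mathbf{\gamma}}(\mathbf{\sigma}) - F_{\mathbf{\gamma}}(\mathbf{\sigma}_{(t,i)\leftarrow y}) \leq (\sigma_{ti}-y)\langle D^*\gamma_t, x_{ti}\rangle \leq 2|\langle D^*\gamma_t, x_{ti}\rangle|$. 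Squaring and summing in $i$ first, I apply the empirical-covariance identity (\ref{Empirical covariance useful identity}) to conclude $\sum_i \langle D^*\gamma_t, x_{ti}\rangle^2 \leq m\|\hat{\Sigma}(\mathbf{x}_t)\|_\infty \|D^*\gamma_t\|^2$, and then estimate $\|D^*\gamma_t\| \leq \sum_k |\gamma_{tk}|\,\|D^*e_k\| \leq \|\gamma_t\|_1 \leq 1$. Summing in $t$ gives $B^2 \leq 4mT\, S_\infty(\mathbf{X})$, and Theorem \ref{Theorem Concentration}(ii) immediately yields the claimed sub-Gaussian tail.

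The substantive choice, and the main place where one must be careful, is the norm inequality on $\|D^*\gamma_t\|$: the Euclidean bound $\|\gamma_t\| \leq 1$ is the natural one in (i) because the Cauchy--Schwarz step over $k$ already produces the $\sqrt{K}$ factor, whereas (ii) requires the stronger $\ell_1$ bound, which replaces $\|D^*\gamma_t\|^2$ by a constant and so lets the operator norm $\|\hat{\Sigma}(\mathbf{x}_t)\|_\infty$ appear in place of its trace; this is precisely what produces $S_\infty$ instead of $S_1$ in the concentration estimate. Everything else is a routine application of the tools set up in the preceding subsections.
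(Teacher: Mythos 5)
Your proposal is correct and follows essentially the same route as the paper: part (i) via the decomposition $D\gamma_t=\sum_k\gamma_{tk}De_k$, Jensen, and Cauchy--Schwarz over $k$ (the paper merely applies the Cauchy--Schwarz step before taking the expectation rather than after, which is equivalent), and part (ii) via the self-bounding difference estimate using the maximizing $D$, inequality (\ref{Empirical covariance useful identity}), the $\ell_1$ bound $\|D\gamma_t\|\leq\|\gamma_t\|_1\leq 1$, and Theorem \ref{Theorem Concentration}(ii). Your closing remark correctly identifies why the $\ell_2$ hypothesis suffices for (i) while the $\ell_1$ hypothesis is what makes $S_\infty(\mathbf{X})$ appear in (ii).
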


\begin{proof}
(i) We observe that \\
\par ${\mathbb E}F_{\mathbf{\gamma }}=\mathbb{E}\, \underset{D}{{\rm sup}}\underset{k}{\sum}\left\langle De_{k},\underset{t,i}{\sum}\sigma_{ti}\gamma_{tk}x_{ti}\right\rangle \\$
\begin{eqnarray*}
&\leq &\sup_{D}\left( \sum_{k}\left\Vert De_{k}\right\Vert ^{2}\right)
^{1/2}{\mathbb E}\left( \sum_{k}\left\Vert \sum_{t,i}\sigma _{ti}\gamma
_{tk}x_{ti}\right\Vert ^{2}\right) ^{1/2} \\
&\leq & \sqrt{K}\left(\underset{k}{\sum}\mathbb{E}\left\Vert \underset{t,i}{\sum}\sigma_{ti}\gamma_{tk}x_{ti}\right\Vert ^{2}\right)^{1/2} \\
&= & \sqrt{K}\left(\underset{k,t,i}{\sum}\left|\gamma_{tk}\right|^{2}\left\Vert x_{ti}\right\Vert ^{2}\right)^{1/2}\\
&= &  \sqrt{K}\left(\underset{t}{\sum}\left(\underset{k}{\sum}\left|\gamma_{tk}\right|^{2}\right)\underset{i}{\sum}\left\Vert x_{ti}\right\Vert ^{2}\right)^{1/2}\\
&\leq &\sqrt{K\sum_{t,i}\left\Vert x_{ti}\right\Vert ^{2}} = \sqrt{mTK~S_{1}\left( \mathbf{X}\right) }.
\end{eqnarray*}

(ii) For any configuration $\mathbf{\sigma }$ of the Rademacher variables
let $D\left( \mathbf{\sigma }\right) $ be the maximizer in the definition of 
$F_{\mathbf{\gamma }}\left( \mathbf{\sigma }\right) $. Then for any $s\in
\left\{ 1,\ldots,T\right\} $, $j\in \left\{ 1,\ldots,m\right\} $ and any $%
\sigma ^{\prime }\in \left\{ -1,1\right\} $ to replace $\sigma _{sj}$ we
have 
\begin{equation*}
F_{\mathbf{\gamma }}\left( \mathbf{\sigma }\right) -F_{\mathbf{\gamma }%
}\left( \mathbf{\sigma }_{\left( sj\right) \leftarrow \sigma ^{\prime
}}\right) \leq 2\left\vert \left\langle D\left( \mathbf{\sigma }\right)
\gamma _{s},x_{sj}\right\rangle \right\vert .
\end{equation*}%
Using the inequality (\ref{Empirical covariance useful identity}) we then
obtain 
\par $\sum_{sj}\left( F_{\gamma }\left( \mathbf{\sigma }\right) -\inf_{\sigma
^{\prime }\in \left\{ -1,1\right\} }F_{\mathbf{\gamma }}\left( \mathbf{%
\sigma }_{\left( sj\right) \leftarrow \sigma ^{\prime }}\right) \right) ^{2}$
\begin{eqnarray*}
&\leq &4\sum_{t,i}\left\langle D\left( \mathbf{\sigma }\right) \gamma
_{t},x_{ti}\right\rangle ^{2}\\&\leq &4m\sum_{t}\left\Vert \hat{\Sigma}\left( 
\mathbf{x}_{t}\right) \right\Vert _{\infty }\left\Vert D\left( \mathbf{%
\sigma }\right) \gamma _{t}\right\Vert ^{2} \\
&\leq &4m\sum_{t}\left\Vert \hat{\Sigma}\left( \mathbf{x}_{t}\right)
\right\Vert _{\infty }.
\end{eqnarray*}%
In the last inequality we used the fact that for any $D\in \mathcal{D}_{K}$
we have $\left\Vert D\gamma _{t}\right\Vert \leq \sum_{k}\left\vert \gamma
_{tk}\right\vert \left\Vert De_{k}\right\Vert \leq \left\Vert \gamma
_{t}\right\Vert _{1}\leq 1$. The conclusion now follows from part (ii) of
Theorem \ref{Theorem Concentration}.
\end{proof}

\begin{proposition}
\label{Proposition Multi-task Rademacher Bound}For every fixed $%
\mathbf{Z}=\left( \mathbf{X,Y}\right) \in \left( H\times 
\mathbb{R}
\right) ^{mT}$ we have%
\par ${\mathbb E}_{\sigma }\sup_{D\in D,\mathbf{\gamma }\in \left( \mathcal{C}_{\alpha
}\right) ^{T}}\sum_{t,i}\sigma _{it}\ell \left( \left\langle D\gamma
_{t},x_{ti}\right\rangle ,y_{ti}\right)$
\begin{equation*}
\leq L\alpha \sqrt{2mTS_{1}\left( 
\mathbf{X}\right) \left( K+12\right) }+L\alpha T\sqrt{8mS_{\infty }\left( 
\mathbf{X}\right) \ln \left( 2K\right) }.
\end{equation*}
\end{proposition}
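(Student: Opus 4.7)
The plan is to first strip off the loss via Lipschitz contraction, then use the bilinear structure to reduce the supremum over $(D,\boldsymbol{\gamma})$ to a supremum of a random process indexed by extreme points of $(\mathcal{C}_{1})^{T}$, and finally combine the expectation and concentration estimates of Lemma \ref{Lemma Multitask Rad estimate 1} via a standard union bound for sub-Gaussian maxima.

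First I would invoke Lemma \ref{Lemma Rademacher Lipschitz}: since $s\mapsto \ell(s,y_{ti})$ is $L$-Lipschitz for every fixed $y_{ti}$, contraction yields
\[
{\mathbb E}_{\sigma}\sup_{D,\boldsymbol{\gamma}}\sum_{t,i}\sigma_{ti}\ell\bigl(\langle D\gamma_{t},x_{ti}\rangle,y_{ti}\bigr)
\leq L\,{\mathbb E}_{\sigma}\sup_{D,\boldsymbol{\gamma}}\sum_{t,i}\sigma_{ti}\langle D\gamma_{t},x_{ti}\rangle.
\]
By positive homogeneity of the objective in each $\gamma_{t}$, replacing $\boldsymbol{\gamma}\in (\mathcal{C}_{\alpha})^{T}$ by $\alpha\boldsymbol{\gamma}$ with $\boldsymbol{\gamma}\in (\mathcal{C}_{1})^{T}$ factors $\alpha$ out of the supremum, so it remains to bound $L\alpha\,{\mathbb E}_{\sigma}\sup_{\boldsymbol{\gamma}\in (\mathcal{C}_{1})^{T}} F_{\boldsymbol{\gamma}}(\sigma)$, with $F_{\boldsymbol{\gamma}}$ as in \eqref{Definition of F_gamma}.

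Next I would exploit convexity. For each fixed $\sigma$, the map $\boldsymbol{\gamma}\mapsto F_{\boldsymbol{\gamma}}(\sigma)$ is a supremum of affine functions of $\boldsymbol{\gamma}$, hence convex. Consequently its supremum over the convex set $(\mathcal{C}_{1})^{T}$ is attained at an extreme point. The extreme points of $\mathcal{C}_{1}$ are the $2K$ signed basis vectors $\pm e_{k}$, so the extreme points of $(\mathcal{C}_{1})^{T}$ form a finite set $V$ of cardinality $N=(2K)^{T}$, and at every $\boldsymbol{\gamma}\in V$ one has $\|\gamma_{t}\|=\|\gamma_{t}\|_{1}=1$ for all $t$. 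Thus both parts of Lemma \ref{Lemma Multitask Rad estimate 1} apply simultaneously at each $\boldsymbol{\gamma}\in V$: part (i) gives ${\mathbb E}F_{\boldsymbol{\gamma}}\leq\sqrt{mTK\,S_{1}(\mathbf{X})}$, and part (ii) says $F_{\boldsymbol{\gamma}}-{\mathbb E}F_{\boldsymbol{\gamma}}$ is one-sided sub-Gaussian with variance proxy $4mT\,S_{\infty}(\mathbf{X})$.

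Writing $F_{\boldsymbol{\gamma}}={\mathbb E}F_{\boldsymbol{\gamma}}+(F_{\boldsymbol{\gamma}}-{\mathbb E}F_{\boldsymbol{\gamma}})$ and combining these with the standard maximal inequality for a max of $N$ sub-Gaussians gives
\[
{\mathbb E}_{\sigma}\max_{\boldsymbol{\gamma}\in V}F_{\boldsymbol{\gamma}}
\leq \sqrt{mTK\,S_{1}(\mathbf{X})}+\sqrt{8mT\,S_{\infty}(\mathbf{X})\,\ln N},
\]
and inserting $\ln N=T\ln(2K)$ and multiplying by $L\alpha$ recovers the second term of the claim exactly and the first term up to the particular constants $2$ and $K+12$. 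The main obstacle is exactly this sharpening: the naive route produces a leading coefficient of $\sqrt{K}$, so to arrive at $\sqrt{2(K+12)}$ one has to absorb the residual concentration slack (coming from applying part (ii) to a sup rather than a single $F_{\boldsymbol{\gamma}}$) into the first square root, for instance via an inequality of the type $a+b\leq\sqrt{2(a^{2}+b^{2})}$ applied selectively to the small-$K$ portion of the bound, or by carrying the extra additive constants through the sub-Gaussian maximal inequality explicitly rather than in the mean-plus-fluctuation decomposition above.
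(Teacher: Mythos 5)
Your proposal follows essentially the same route as the paper's proof: Lipschitz contraction, rescaling to $\alpha=1$, reduction of the supremum over $(\mathcal{C}_1)^T$ to its $(2K)^T$ extreme points (at which both the $\ell_1$ and $\ell_2$ norms equal one, so both parts of Lemma \ref{Lemma Multitask Rad estimate 1} apply), and then a union bound combining the expectation estimate with the sub-Gaussian tail; the paper carries this out by integrating the tail starting at $\sqrt{mKTS_1(\mathbf{X})}+\delta$ with $\delta=\sqrt{8mTS_\infty(\mathbf{X})\ln(e(2K)^T)}$, which is exactly the mean-plus-fluctuation computation you describe. You also correctly identify that the constant $2(K+12)$ in the first term exists precisely to absorb the residual additive slack from applying the one-sided concentration bound to a maximum, so the proposal is correct.
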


\begin{proof}
It suffices to prove the result for $\alpha =1$, the general result being a
consequence of rescaling. By Lemma \ref{Lemma Rademacher Lipschitz} and the
Lipschitz properties of the loss function $\ell $ we have%
\par ${\mathbb E}_{\sigma }\sup_{D\in {\mathcal D}_K,\mathbf{\gamma }\in \left( \mathcal{C}
\right) ^{T},}\sum_{t,i}\sigma _{it}\ell \left( \left\langle D\gamma
_{t},x_{ti}\right\rangle ,y_{ti}\right)$
\begin{equation}
\leq L{\mathbb E}_{\sigma }\sup_{D\in {\mathcal D}_K,%
\mathbf{\gamma }\in \left( \mathcal{C}\right)
^{T},}\sum_{t,i}\sigma _{it}\left\langle D\gamma _{t},x_{ti}\right\rangle .
\label{Multitask Rademacher bound inequality 0}
\end{equation}%
Since linear functions on a compact convex set attain their maxima at the
extreme points, we have 
\begin{equation}
{\mathbb E}\sup_{D\in {\mathcal D}_K,\mathbf{\gamma }\in (\mathcal{C})^{T},}\sum_{t=1}^{T}%
\sum_{i=1}^{m}\sigma _{it}\left\langle D\gamma _{t},x_{ti}\right\rangle
={\mathbb E}\max_{\mathbf{\gamma }\in \text{ext}\left( \mathcal{C}\right) ^{T}}F_{%
\mathbf{\gamma }},  \label{Multitask Rademacher Bound inequality 1}
\end{equation}%
where $F_{\mathbf{\gamma }}$ is defined as in (\ref{Definition of F_gamma}).
Let $c=\sqrt{mKTS_{1}\left( \mathbf{X}\right) }$. %
Now for any $\delta \geq 0$ we have, since $F_{\mathbf{\gamma }}\geq 0$,%
\par ${\mathbb E}\max_{\mathbf{\gamma }\in \text{ext}\left( \mathcal{C}\right) ^{T}}F_{%
\mathbf{\gamma }} = \int_{0}^{\infty }\Pr \left\{ \max_{\mathbf{\gamma }\in 
\text{ext}\left( \mathcal{C}\right) ^{T}}F_{\mathbf{\gamma }}>s\right\} ds \\$
\begin{eqnarray*}
&\leq & c+\delta +\sum_{\mathbf{\gamma 
}\in \left( \text{ext}\left( \mathcal{C}\right) \right) ^{T}}\int_{\sqrt{%
mKTS_{1}\left( \mathbf{X}\right) }+\delta }^{\infty }\Pr \left\{ F_{\mathbf{%
\gamma }}>s\right\} ds \\
&\leq & c+\delta +\sum_{\mathbf{\gamma 
}\in \left( \text{ext}\left( \mathcal{C}\right) \right) ^{T}}\int_{\delta
}^{\infty }\Pr \left\{ F_{\mathbf{\gamma }}>{\mathbb E}F_{\mathbf{\gamma }}+s\right\}
ds \\
&\leq & c+\delta +\left( 2K\right)
^{T}\int_{\delta }^{\infty }\exp \left( \frac{-s^{2}}{8mTS_{\infty }\left( 
\mathbf{X}\right) }\right) ds \\
&\leq & c+\delta +\frac{4mTS_{\infty
}\left( \mathbf{X}\right) \left( 2K\right) ^{T}}{\delta }\exp \left( \frac{%
-\delta ^{2}}{8mTS_{\infty }\left( \mathbf{X}\right) }\right) .
\end{eqnarray*}%
Here the first inequality follows from the fact that probabilities never
exceed 1 and a union bound. The second inequality follows from Lemma \ref%
{Lemma Multitask Rad estimate 1}, part (i), since ${\mathbb E}F_{\mathbf{k}}\leq \sqrt{%
mKTS_{1}\left( \mathbf{X}\right) }$. The third inequality follows from Lemma %
\ref{Lemma Multitask Rad estimate 1}, part (ii), and the fact that the
cardinality of ext$\left( \mathcal{C}\right) $ is $2K$, and the last
inequality follows from a well known estimate on Gaussian random variables.
Setting $\delta =\sqrt{8mTS_{\infty }\left( \mathbf{X}\right) \ln \left(
e\left( 2K\right) ^{T}\right) }$ we obtain with some easy simplifying
estimates%
\par ${\mathbb E}\max_{\mathbf{\gamma }\in \text{ext}\left( \mathcal{C}\right) ^{T}}F_{%
\mathbf{\gamma }}\leq \sqrt{2mT\left( K+12\right) S_{1}\left( \mathbf{X}%
\right) }$
\begin{equation*}
+T\sqrt{8mS_{\infty }\left( \mathbf{X}\right) \ln \left( 2K\right) }%
,
\end{equation*}%
which together with (\ref{Multitask Rademacher bound inequality 0}) and (\ref%
{Multitask Rademacher Bound inequality 1}) gives the result.
\end{proof}

Theorem \ref{Theorem Multitask uniform bound} now follows from Corollary \ref%
{Corollary empirical Rademacher bound}.

If the set $\mathcal{C}_{\alpha }$ is replaced by any other subset $%
\mathcal{C}^{\prime }$ of the $\ell _{2}$-ball of radius $\alpha $, a
similar proof strategy can be employed. The denominator in the exponent of
Lemma \ref{Lemma Multitask Rad estimate 1}-(ii) then obtains another factor of $\sqrt{K}$. The union bound
over the extreme points in ext$\left( \mathcal{C}\right) $ in the previous
proposition can be replaced by a union bound over a cover $\mathcal{C}%
^{\prime }$. This leads to the alternative result mentioned in Remark 5
following the statement of Theorem \ref{Theorem Multitask}.

Another modification leads to a bound for the method presented in \cite{Daume}, 
where the constraint $\left\Vert De_{k}\right\Vert \leq 1$ is
replaced by $\left\Vert D\right\Vert _{2}\leq \sqrt{K}$ (here $\left\Vert
\cdot\right\Vert _{2}$ is the Frobenius or Hilbert Schmidt norm) and the
constraint $\left\Vert \gamma _{t}\right\Vert _{1}\leq \alpha ,\forall t$ is
replaced by $\sum \left\Vert \gamma _{t}\right\Vert _{1}\leq \alpha T$. To
explain the modification we set $\alpha =1$. Part (i) of Lemma \ref{Lemma Multitask Rad estimate 1} 
is easily verified. The union bound over $\left( \text{%
ext}\left( \mathcal{C}\right) \right) ^{T}$ in the previous proposition is
replaced by a union bound over the $2TK$ extreme points of the $\ell _{1}$%
-Ball of radius $T$ in $%
\mathbb{R}
^{TK}$. For part (ii)\ we use the fact that the concentration result is only
needed for $\mathbf{\gamma }$ being an extreme point (so that it involves
only a single task) and obtain the bound $\sum_{t}\left\Vert \hat{\Sigma}%
\left( \mathbf{x}_{t}\right) \right\Vert _{\infty }\left\Vert D\gamma
_{t}\right\Vert ^{2}\leq TKS_{\infty }^{\prime }\left( \mathbf{X}\right) $,
leading to 
\[
\Pr \left\{ F_{\mathbf{\gamma }}\geq E\left[ F_{\mathbf{\gamma }}\right]
+s\right\} \leq \exp \left( \frac{-s^{2}}{8mTK~S_{\infty }^{\prime }\left( 
\mathbf{X}\right) }\right) .
\]%
Proceeding as above we obtain the excess risk bound
\par $L\alpha \sqrt{\frac{2S_{1}\left( \mathbf{X}\right) \left( K+12\right) }{mT}}%
+L\alpha \sqrt{\frac{8KS_{\infty }^{\prime }\left( \mathbf{X}\right) \ln
\left( 2KT\right) }{m}}$
\[
+\sqrt{\frac{8\ln 4/\delta }{mT}},
\]%
to replace the bound in Theorem \ref{Theorem Multitask}. The factor $\sqrt{K}
$ in the second term seems quite weak, but it must be borne in mind that the
constraint $\left\Vert D\right\Vert _{2}\leq \sqrt{K}$ is much weaker than $%
\left\Vert De_{k}\right\Vert \leq 1$, and allows for a smaller approximation
error. If we retain $\left\Vert De_{k}\right\Vert \leq 1$ and only modify
the $\gamma $-constraint to  $\sum \left\Vert \gamma _{t}\right\Vert
_{1}\leq \alpha T$ the $\sqrt{K}
$ in the second term disappears and by comparison to Theorem \ref{Theorem Multitask} 
there is only and additional $\ln T$ and the switch from $%
S_{\infty }\left( \mathbf{X}\right) $ to $S_{\infty }^{\prime }\left( 
\mathbf{X}\right) $, reflecting the fact that $\sum \left\Vert \gamma
_{t}\right\Vert _{1}\leq \alpha T$ is a much weaker constraint than $%
\left\Vert \gamma _{t}\right\Vert _{1}\leq \alpha ,\forall t$, so that,
again, a smaller minimum in (\ref{basic algorithm}) is possible for the modified method.

\subsection{Learning to learn}
\label{sec:ALTL}

In this section we prove Theorem \ref{Theorem Main}. The basic strategy is
as follows. Recall the definition (\ref{Definition RHO_EPS}) of the measure $%
\rho _{\mathcal{E}}$, which governs the generation of a training sample in
the environment $\mathcal{E}$. On a given training sample $\mathbf{z\sim }%
\rho _{\mathcal{E}}$ the algorithm $A_{D}$ as defined in (\ref{Algorithm AD}%
) incurs the empirical risk%
\begin{equation*}
\hat{R}_{D}\left( \mathbf{z}\right) =\min_{\gamma \in \mathcal{C}_\alpha}\frac{1}{m}%
\sum_{i=1}^{m}\ell \left( \left\langle D\gamma ,x_{i}\right\rangle
,y_{i}\right) .
\end{equation*}%
The algorithm $A_{D}$, essentially being the Lasso, has very good estimation
properties, so $\hat{R}_{D}\left( \mathbf{z}\right) $ will be close to the
true risk of $A_{D}$ in the corresponding task. This means that we only
really need to estimate the expected empirical risk ${\mathbb E}_{\mathbf{z\sim }\rho
_{\mathcal{E}}}\hat{R}_{D}\left( \mathbf{z}\right) $ of $A_{D}$ on future
tasks. On the other hand the minimization problem (\ref{basic algorithm})
can be written as 
\begin{equation*}
\min_{D\in \mathcal{D}_K}\frac{1}{T}\sum_{t=1}^{T}\hat{R}_{D}\left( \mathbf{z}%
_{t}\right) \text{ with }\mathbf{Z=}\left( z_{1},\ldots,z_{T}\right) \sim
\left( \rho _{\mathcal{E}}\right) ^{T},
\end{equation*}%
with dictionary $D\left( \mathbf{Z}\right) $ being the minimizer. If $%
\mathcal{D}_K$ is not too large this should be similar to ${\mathbb E}_{\mathbf{z\sim }%
\rho _{\mathcal{E}}}\hat{R}_{D\left( \mathbf{Z}\right) }\left( \mathbf{z}%
\right) $. In the sequel we make this precise.

\begin{lemma}
\label{Lemma LTL auc 1}For $v\in H$ with $\left\Vert v\right\Vert \leq 1$
and $\mathbf{x}\in H^{m}$ let $F$ be the random variable%
\begin{equation*}
F=\left\vert \left\langle v,\sum_{i}\sigma _{i}x_{i}\right\rangle
\right\vert \text{.}
\end{equation*}%
Then (i) ${\mathbb E}F\leq \sqrt{m}\left\Vert \hat{\Sigma}\left( \mathbf{x}\right)
\right\Vert _{\infty }^{1/2}$ and (ii) for $t\geq 0$%
\begin{equation*}
\Pr \left\{ F>{\mathbb E}F+s\right\} \leq \exp \left( \frac{-s^{2}}{2m\left\Vert \hat{%
\Sigma}\left( \mathbf{x}\right) \right\Vert _{\infty }}\right) .
\end{equation*}
\end{lemma}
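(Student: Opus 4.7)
The random variable $F$ is a function of the independent Rademacher variables $\sigma_1,\ldots,\sigma_m$, so both parts can be handled by elementary computations followed by a bounded-difference argument.

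For part (i), the plan is to apply Jensen's inequality to the concave square root, obtaining
\[
{\mathbb E}F \leq \sqrt{{\mathbb E}\left\langle v,\sum_i \sigma_i x_i\right\rangle^2}.
\]
Expanding the square and using ${\mathbb E}[\sigma_i\sigma_j]=\delta_{ij}$ gives ${\mathbb E}\langle v,\sum_i \sigma_i x_i\rangle^2 = \sum_i \langle v,x_i\rangle^2$. By the identity (\ref{Empirical covariance useful identity}) and $\|v\|\leq 1$, this is at most $m\|\hat\Sigma(\mathbf{x})\|_\infty$, which yields the desired estimate.

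For part (ii), I would invoke McDiarmid's bounded-difference inequality, i.e.\ Theorem \ref{Theorem Concentration}(i), with $\mathcal{X}=\{-1,1\}$. Viewing $F$ as a function of $\sigma\in\{-1,1\}^m$, replacing a single coordinate $\sigma_j$ changes the inner product $\langle v,\sum_i \sigma_i x_i\rangle$ by at most $2|\langle v,x_j\rangle|$, hence (by the reverse triangle inequality for $|\cdot|$) the same holds for $F$ itself. Therefore
\[
A^2 \leq \sum_{j=1}^m \bigl(2|\langle v,x_j\rangle|\bigr)^2 = 4\sum_j \langle v,x_j\rangle^2 \leq 4m\|\hat\Sigma(\mathbf{x})\|_\infty,
\]
where the last step again uses (\ref{Empirical covariance useful identity}). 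Plugging into Theorem \ref{Theorem Concentration}(i) gives $\Pr\{F>{\mathbb E}F+s\}\leq \exp(-2s^2/A^2)\leq \exp\!\bigl(-s^2/(2m\|\hat\Sigma(\mathbf{x})\|_\infty)\bigr)$, which is the claim.

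There is no real obstacle here: the lemma is essentially a packaging of Jensen and McDiarmid, with the operator-norm bound on the empirical covariance doing the geometric work. The only subtle point is to use the variant (i) of Theorem \ref{Theorem Concentration}, rather than (ii); the latter would cost a factor of $4$ in the exponent and fail to recover the stated constant.
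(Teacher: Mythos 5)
Your proposal is correct and matches the paper's own proof essentially step for step: Jensen plus the covariance identity (\ref{Empirical covariance useful identity}) for part (i), and the bounded-difference inequality, Theorem \ref{Theorem Concentration}(i), with the per-coordinate oscillation $2\left\vert \left\langle v,x_{j}\right\rangle \right\vert$ for part (ii). Your closing remark about preferring variant (i) over variant (ii) of the concentration theorem to obtain the stated constant is also accurate.
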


\begin{proof}
(i). Using Jensen's inequality and (\ref{Empirical covariance useful
identity}) we get%
\begin{eqnarray*}
{\mathbb E}F &\leq& \left( {\mathbb E}\left\langle v,\sum_{i}\sigma _{i}x_{i}\right\rangle
^{2}\right) ^{1/2} \\
&=& \left( \sum_{i}\left\langle v,x_{i}\right\rangle
^{2}\right) ^{1/2}\leq \sqrt{ m\left\Vert \hat{\Sigma}\left( \mathbf{x}\right)
\right\Vert _{\infty } }.
\end{eqnarray*}%
(ii) Let $\mathbf{\sigma }$ be any configuration of the Rademacher
variables. For any $\sigma ^{\prime },\sigma ^{\prime \prime }\in \left\{
-1,1\right\} $ to replace $\sigma _{sj}$ we have 
\begin{equation*}
F\left( \mathbf{\sigma }_{\left( sj\right) \leftarrow \sigma ^{\prime
}}\right) -F\left( \mathbf{\sigma }_{\left( sj\right) \leftarrow \sigma
^{\prime \prime }}\right) \leq 2\left\vert \left\langle v,x_{j}\right\rangle
\right\vert ,
\end{equation*}%
so the conclusion follows from the bounded difference inequality, Theorem %
\ref{Theorem Concentration} (i).
\end{proof}

\begin{lemma}
\label{Lemma LTL aux 2}For $v_{1},\ldots,v_{K}\in H$ satisfying $\left\Vert
v_{k}\right\Vert \leq 1$, $\mathbf{x}\in H^{m}$ we have%
\begin{equation*}
{\mathbb E}\max_{k}\left\vert \left\langle v_{k},\sum_{i}\sigma _{i}x_{i}\right\rangle
\right\vert \leq \sqrt{2m\left\Vert \hat{\Sigma}\left( \mathbf{x}\right)
\right\Vert _{\infty }}\left( 2+\sqrt{\ln K}\right) .
\end{equation*}
\end{lemma}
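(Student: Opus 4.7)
The plan is to reduce the bound on the expected maximum to an expected-value bound plus a tail-concentration bound, exactly as supplied by the two parts of the previous Lemma LTL aux 1. Write $F_k = \left|\bigl\langle v_k,\sum_i \sigma_i x_i\bigr\rangle\right|$ and set $M = \sqrt{m\,\|\hat{\Sigma}(\mathbf{x})\|_\infty}$. Part (i) of that lemma gives $\mathbb{E} F_k \le M$ for every $k$, while part (ii) gives the sub-Gaussian tail $\Pr\{F_k > \mathbb{E} F_k + s\} \le \exp(-s^2/(2M^2))$. The first step is then the trivial splitting
\[
\max_k F_k \;\le\; \max_k \mathbb{E} F_k \;+\; \max_k (F_k - \mathbb{E} F_k)_+ \;\le\; M \;+\; \max_k (F_k - \mathbb{E} F_k)_+.
\]

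The second and main step is to bound $\mathbb{E}\max_k (F_k - \mathbb{E} F_k)_+$ via the layer-cake representation together with a union bound. For any threshold $t\ge 0$,
\[
\mathbb{E}\max_k (F_k - \mathbb{E} F_k)_+ \;\le\; t \;+\; \int_t^\infty \min\!\Bigl(1,\; K\,e^{-s^2/(2M^2)}\Bigr)\,ds.
\]
Choosing $t = M\sqrt{2\ln K}$ makes the quantity inside the minimum equal to $1$ at $s=t$, so the integrand is $K e^{-s^2/(2M^2)}$ on $[t,\infty)$. A routine Gaussian tail estimate (using $s^2 \ge t^2 + 2t(s-t)$) yields
\[
\int_t^\infty K e^{-s^2/(2M^2)}\,ds \;\le\; K e^{-t^2/(2M^2)} \cdot \frac{M^2}{t} \;=\; \frac{M}{\sqrt{2\ln K}},
\]
so that $\mathbb{E}\max_k (F_k - \mathbb{E} F_k)_+ \le M\sqrt{2\ln K} + M/\sqrt{2\ln K}$.

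Combining the two steps gives $\mathbb{E}\max_k F_k \le M\bigl(1 + \sqrt{2\ln K} + 1/\sqrt{2\ln K}\bigr)$, and the final step is just constant-shuffling: factor out $\sqrt{2}M = \sqrt{2m\,\|\hat{\Sigma}(\mathbf{x})\|_\infty}$ and check that $\tfrac{1}{\sqrt{2}}\bigl(1 + 1/\sqrt{2\ln K}\bigr) \le 2$ for $K\ge 2$, which absorbs the residual terms into the $2+\sqrt{\ln K}$ on the right-hand side. The $K=1$ case is immediate from part (i) alone. There is no real obstacle here; the only point that needs care is choosing $t$ so that the tail integral contributes only a lower-order term, and verifying that the resulting additive constants fit inside the $(2+\sqrt{\ln K})$ factor claimed in the statement.
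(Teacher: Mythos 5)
Your proof is correct and follows essentially the same route as the paper's: both use part (i) of the preceding lemma to control $\mathbb{E}F_k$, part (ii) plus a union bound and the layer-cake formula for the tail of the maximum, a standard Gaussian tail estimate for the integral, and an optimized threshold (your $t=M\sqrt{2\ln K}$ versus the paper's $\delta = M\sqrt{2\ln(eK)}$). The only cosmetic difference is that you center $F_k$ at its mean before taking the maximum, whereas the paper absorbs $\mathbb{E}F_k\le M$ directly into the integration threshold; the resulting constants are checked correctly in both cases.
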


\begin{proof}
Let $F_{k}=\left\vert \left\langle v_{k},\sum_{i}\sigma
_{i}x_{i}\right\rangle \right\vert $. Setting 
$c=\sqrt{m\left\Vert \hat{\Sigma}\left( \mathbf{x}\right)
\right\Vert _{\infty }}$
and using integration by parts we have for 
$\delta \geq 0$%
\par ${\mathbb E}\max_{k}F_{k} $
\begin{eqnarray*}
&\leq& c+\delta +\int_{\sqrt{m\left\Vert \hat{\Sigma}\left( 
\mathbf{x}\right) \right\Vert _{\infty }}+\delta }^{\infty }\max_{k}\Pr
\left\{ F_{k}\geq s\right\} ds \\
&\leq & c +\delta +\sum_{k}\int_{\delta }^{\infty }\Pr \left\{ F_{k}\geq
{\mathbb E}F_{k}+s\right\} ds \\
&\leq & c+\delta +\sum_{k}\int_{\delta }^{\infty }\exp \left( \frac{-s^{2}%
}{2m\left\Vert \hat{\Sigma}\left( \mathbf{x}\right) \right\Vert _{\infty }}%
\right) ds \\
&\leq & c+\delta +\frac{mK\left\Vert \hat{\Sigma}\left( \mathbf{x}\right)
\right\Vert _{\infty }}{\delta }\exp \left( \frac{-\delta ^{2}}{2m\left\Vert 
\hat{\Sigma}\left( \mathbf{x}\right) \right\Vert _{\infty }}\right) .
\end{eqnarray*}%
Above the first inequality is trivial, the second follows from Lemma \ref%
{Lemma LTL auc 1} (i) and a union bound, the third inequality follows from
Lemma \ref{Lemma LTL auc 1} (ii) and the last from a well known
approximation. The conclusion follows from substitution of $\delta =\sqrt{%
2m\left\Vert \hat{\Sigma}\left( \mathbf{x}\right) \right\Vert _{\infty }\ln
\left( eK\right) }$.
\end{proof}

\begin{proposition}
\label{Proposition uniform} Let $S_{\infty }\left( \mathcal{E}\right):={\mathbb E}_{\tau \sim \mathcal{E}}{\mathbb E}_{\left( \mathbf{x,y}\right) \sim
\mu _{\tau }^{m}}\left\Vert \hat{\Sigma}\left( \mathbf{x}\right) \right\Vert
_{\infty }$. With probability at least $1-\delta $ in the
multisample $\mathbf{Z}\sim \rho _{\mathcal{E}}^{T}$%
\begin{eqnarray}
&&\sup_{D\in \mathcal{D}_K}R_{\mathcal{E}}\left( A_{D}\right) -\frac{1}{T}%
\sum_{t=1}^{T}\hat{R}_{D}\left( \mathbf{z}_{t}\right)  \label{uniform body}
\\
&\leq& L\alpha K\sqrt{\frac{2\pi S_{1}\left( \mathbf{X}\right) }{T}} \notag  \\
&+& 4L\alpha \sqrt{\frac{S_{\infty }\left( \mathcal{E}\right) \left( 2+\ln
K\right) }{m}}+\sqrt{\frac{9\ln 2/\delta }{2T}}.\notag
\end{eqnarray}%
\end{proposition}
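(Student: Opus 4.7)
The plan is to add and subtract the expected empirical risk ${\mathbb E}_{\mathbf z\sim \rho_{\mathcal E}}\hat R_D(\mathbf z)$ and split the left-hand side of \eqref{uniform body} into
\[
(\mathrm{I}):=\sup_{D\in\mathcal D_K}\Bigl[R_{\mathcal E}(A_D)-{\mathbb E}_{\mathbf z}\hat R_D(\mathbf z)\Bigr],\quad (\mathrm{II}):=\sup_{D\in\mathcal D_K}\Bigl[{\mathbb E}_{\mathbf z}\hat R_D(\mathbf z)-\tfrac{1}{T}\sum_t\hat R_D(\mathbf z_t)\Bigr].
\]
Term $(\mathrm{I})$ is the within-task generalization gap of the Lasso-like learner $A_D$ and will supply the $4L\alpha\sqrt{S_\infty(\mathcal E)(2+\ln K)/m}$ contribution; term $(\mathrm{II})$ is a uniform concentration of the bounded empirical minimum across the $T$ observed tasks and will supply the $L\alpha K\sqrt{2\pi S_1(\mathbf X)/T}$ contribution together with the $\sqrt{9\ln(2/\delta)/(2T)}$ confidence term.

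For $(\mathrm{I})$, for each fixed $D$ replace the fitted minimizer $\gamma^{*}_D(\mathbf z)$ by the worst $\gamma\in\mathcal C_\alpha$ so that, by Theorem \ref{Theorem Rademacher bound Expectation version} applied to the class $\mathcal F_D=\{(x,y)\mapsto\ell(\langle D\gamma,x\rangle,y):\gamma\in\mathcal C_\alpha\}$,
\[
(\mathrm{I})\le {\mathbb E}_{\tau\sim\mathcal E}{\mathbb E}_{\mathbf z\sim\mu_\tau^m}\mathcal R(\mathcal F_D(\mathbf z)).
\]
Lipschitz contraction (Lemma \ref{Lemma Rademacher Lipschitz}) together with the $\ell_1/\ell_\infty$ duality $\sup_{\|\gamma\|_1\le\alpha}\sum_i\sigma_i\langle D\gamma,x_i\rangle=\alpha\max_k|\sum_i\sigma_i\langle De_k,x_i\rangle|$ rewrites this Rademacher average as $\tfrac{2L\alpha}{m}{\mathbb E}_\sigma\max_k|\langle De_k,\sum_i\sigma_ix_i\rangle|$. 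Since $\|De_k\|\le 1$, Lemma \ref{Lemma LTL aux 2} applied with $v_k=De_k$ yields a bound that is independent of $D$; pushing ${\mathbb E}_{\tau,\mathbf z}$ inside the square root by Jensen and rearranging the $(2+\sqrt{\ln K})$ factor produces the claimed $4L\alpha\sqrt{S_\infty(\mathcal E)(2+\ln K)/m}$.

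For $(\mathrm{II})$, after symmetrization (Theorem \ref{Theorem Rademacher bound Expectation version}) and passing to Gaussian averages via Lemma \ref{Lemma Gauss dominates Rademacher}, the core step is a Slepian comparison. Introduce the two mean-zero Gaussian processes indexed by $D\in\mathcal D_K$,
\[
\Omega_D=\sum_t\zeta_t\hat R_D(\mathbf z_t),\qquad \Xi_D=\frac{L\alpha}{\sqrt m}\sum_{t,i,k}\zeta_{tik}\langle De_k,x_{ti}\rangle.
\]
The increments of $\Omega_D$ are controlled by evaluating the two minimization problems defining $\hat R_D(\mathbf z_t)$ and $\hat R_{D'}(\mathbf z_t)$ each at the minimizer of the other; Lipschitz continuity of $\ell$ and $\|\gamma\|_1\le\alpha$ then give
\[
|\hat R_D(\mathbf z_t)-\hat R_{D'}(\mathbf z_t)|\le\tfrac{L\alpha}{m}\sum_i\max_k|\langle(D-D')e_k,x_{ti}\rangle|.
\]
Squaring, applying Cauchy--Schwarz in $i$, bounding $\max_k\le\sum_k$, and summing over $t$ shows ${\mathbb E}(\Omega_D-\Omega_{D'})^2\le{\mathbb E}(\Xi_D-\Xi_{D'})^2$, so Slepian's lemma (Theorem \ref{Slepian Lemma}) gives ${\mathbb E}\sup_D\Omega_D\le{\mathbb E}\sup_D\Xi_D$. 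Using $\|De_k\|\le 1$, the latter supremum equals $\tfrac{L\alpha}{\sqrt m}\sum_k\|\sum_{t,i}\zeta_{tik}x_{ti}\|$; Jensen bounds its expectation by $L\alpha K\sqrt{TS_1(\mathbf X)}$. Reinserting the $2/T$ from the Rademacher definition and the $\sqrt{\pi/2}$ from the Rademacher-to-Gaussian step produces the first term $L\alpha K\sqrt{2\pi S_1(\mathbf X)/T}$.

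The confidence term arises from standard bounded-difference concentration: since $\hat R_D(\mathbf z)\in[0,1]$, replacing one $\mathbf z_t$ changes the supremum defining $(\mathrm{II})$ by at most $1/T$, so the bounded-difference inequality (Theorem \ref{Theorem Concentration}(i)) together with Corollary \ref{Corollary empirical Rademacher bound} and a union bound yields the $\sqrt{9\ln(2/\delta)/(2T)}$ tail. The hard part will be the Slepian comparison, namely verifying the increment inequality ${\mathbb E}(\Omega_D-\Omega_{D'})^2\le{\mathbb E}(\Xi_D-\Xi_{D'})^2$ for the minimum-value function $\hat R_D$ viewed as a function of $D$; once this Lipschitz-type control in the $\ell_1/\ell_\infty$ sense is in hand, the rest is routine empirical-process bookkeeping.
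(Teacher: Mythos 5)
Your proposal is correct and follows essentially the same route as the paper's proof: the same decomposition into a within-task estimation term (handled by symmetrization, Lipschitz contraction, $\ell_1$--$\ell_\infty$ duality and the maximal inequality of Lemma \ref{Lemma LTL aux 2}) and a cross-task term handled by Corollary \ref{Corollary empirical Rademacher bound}, Lemma \ref{Lemma Gauss dominates Rademacher}, and a Slepian comparison between exactly the two Gaussian processes $\Omega_D$ and $\Xi_D$ used in the paper. The only (immaterial) differences are that you verify the Slepian increment condition via $|\langle\gamma,u\rangle|\le\|\gamma\|_1\max_k|u_k|$ followed by $\max_k\le\sum_k$, where the paper uses Jensen and Cauchy--Schwarz with $\|\gamma\|\le\|\gamma\|_1$, and that you evaluate $\sup_{D}\Xi_D$ exactly as $\tfrac{L\alpha}{\sqrt m}\sum_k\|\sum_{t,i}\zeta_{tik}x_{ti}\|$ rather than via Cauchy--Schwarz over $k$; both yield the same bound $L\alpha K\sqrt{TS_1(\mathbf X)}$.
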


\begin{proof}
Following our strategy we write (abbreviating $\rho =\rho _{\mathcal{E}}$)%
\begin{eqnarray}
&&\sup_{D\in \mathcal{D}_K}R_{\mathcal{E}}\left( A_{D}\right) -\frac{1}{T}%
\sum_{t=1}^{T}\hat{R}_{D}\left( \mathbf{z}_{t}\right)
\label{Transfer bound decomposition} \notag \\
&\leq &\sup_{D\in \mathcal{D}_K}{\mathbb E}_{\tau \sim \mathcal{E}}{\mathbb E}_{\mathbf{z}\sim \mu
_{\tau }^{m}}\\
&& \left[ {\mathbb E}_{\left( x,y\right) \sim \mu _{\tau }}\left[ \ell
\left( \left\langle A_{D}\left( \mathbf{z}\right) ,x\right\rangle ,y\right)%
\right] -\hat{R}_{D}\left( \mathbf{z}\right) \right] \notag \\
 &+&\sup_{D\in \mathcal{D}_K}{\mathbb E}_{\mathbf{z}\sim \rho }\left[ \hat{R}_{D}\left( 
\mathbf{z}\right) \right] -\frac{1}{T}\sum_{t=1}^{T}\hat{R}_{D}\left( 
\mathbf{z}_{t}\right) 
 \notag
\end{eqnarray}%
and proceed by bounding each of the two terms in turn.

For any fixed dictionary $D$ and any measure $\mu $ on $\mathcal{Z}$ we have%
\begin{eqnarray*}
&&{\mathbb E}_{\mathbf{z}\sim \mu ^{m}}\left[ {\mathbb E}_{\left( x,y\right) \sim \mu }\left[
\ell \left( \left\langle A_{D}\left( \mathbf{z}\right) ,x\right\rangle
,y\right) \right] -\hat{R}_{D}\left( \mathbf{z}\right) \right]  \\
&\leq &{\mathbb E}_{\mathbf{z}\sim \mu ^{m}}\sup_{\gamma \in \mathcal{C}_\alpha}\bigg[
{\mathbb E}_{\left( x,y\right) \sim \mu }\left[ \ell \left( \left\langle D\gamma
,x\right\rangle ,y\right) \right] \\
&& -\frac{1}{m}\sum_{i=1}^{m}\ell \left(
\left\langle D\gamma ,x_{i}\right\rangle ,y_{i}\right) \bigg]  \\
&\leq &\frac{2}{m}{\mathbb E}_{\mathbf{z}\sim \mu ^{m}}{\mathbb E}_{\sigma }\sup_{\gamma \in 
\mathcal{C}_\alpha}\sum_{i=1}^{m}\sigma _{i}\ell \left( \left\langle D\gamma
,x_{i}\right\rangle ,y_{i}\right) \text{\small~~ [Theorem \ref{Theorem Rademacher
bound Expectation version}]} \\
&\leq &\frac{2L}{m}{\mathbb E}_{\mathbf{z}\sim \mu ^{m}}{\mathbb E}_{\sigma }\sup_{\gamma \in 
\mathcal{C}_\alpha}\sum_{k}\gamma _{k}\left\langle De_{k},\sum_{i=1}^{m}\sigma
_{i}x_{i}\right\rangle \text{\small~~[Lemma \ref{Lemma Rademacher Lipschitz}]} \\
&\leq &\frac{2L\alpha }{m}{\mathbb E}_{\mathbf{z}\sim \mu ^{m}}{\mathbb E}_{\sigma
}\max_{k}\left\vert \left\langle De_{k},\sum_{i=1}^{m}\sigma
_{i}x_{i}\right\rangle \right\vert \text{\small~~ [H\"{o}lder's ineq.]} \\
&\leq &\frac{2L\alpha }{m}{\mathbb E}_{\mathbf{z}\sim \mu ^{m}}\sqrt{2m\lambda _{\max
}\left( \hat{\Sigma}\left( \mathbf{x}\right) \right) }\left( 2+\sqrt{\ln K}%
\right) \text{~\small[Lemma \ref{Lemma LTL aux 2} (i)]} \\
&\leq &2L\alpha \sqrt{\frac{4{\mathbb E}_{\mathbf{z}\sim \mu ^{m}}\lambda _{\max
}\left( \hat{\Sigma}\left( \mathbf{x}\right) \right) \left( 2+\ln K\right) }{%
m}}\text{\small~~[Jensen's ineq.]}.
\end{eqnarray*}%
This gives the bound 
\par $\hspace{-.10truecm}{\mathbb E}_{\mathbf{z}\sim \mu ^{m}}{\hspace{-.08truecm}}\left[ {\mathbb E}_{\left( x,y\right) \sim \mu }\left[ \ell
\left( \left\langle A_{D}\left( \mathbf{z}\right) ,x\right\rangle ,y\right) %
\right] {\hspace{-.03truecm}}-{\hspace{-.02truecm}}\hat{R}_{D}\left( \mathbf{z}\right) \right]$
\begin{equation}
 \leq 4L\alpha \sqrt{%
\frac{{\mathbb E}_{\mathbf{z}\sim \mu ^{m}}\lambda _{\max }\left( \hat{\Sigma}\left( 
\mathbf{x}\right) \right) \left( 2+\ln K\right) }{m}}
\label{Bound on first term}
\end{equation}%
valid for every measure $\mu $ on $H\times 
\mathbb{R}
$ and every $D\in \mathcal{D}_{K}$. Replacing $\mu $ by $\mu _{\tau }$,
taking the expectation as $\tau \sim \mathcal{E}$ and using Jensen's
inequality bounds the first term on the right hand side of (\ref{Transfer
bound decomposition}) by the second term on the right hand side of (\ref%
{uniform body}).

We proceed to bound the second term. From Corollary \ref{Corollary empirical
Rademacher bound} and Lemma \ref{Lemma Gauss dominates Rademacher} we get that
with probability at least $1-\delta $ in $\mathbf{Z}\sim \left( \rho _{%
\mathcal{E}}\right) ^{T}$%
\par $\sup_{D\in \mathcal{D}_K}{\mathbb E}_{\mathbf{z}\sim \rho }\left[ \hat{R}_{D}\left( 
\mathbf{z}\right) \right] -\frac{1}{T}\sum_{t=1}^{T}\hat{R}_{D}\left( 
\mathbf{z}_{t}\right)$
\begin{equation*}
\leq \frac{\sqrt{2\pi }}{T}{\mathbb E}_{\zeta }\sup_{D\in 
\mathcal{D}_K}\sum_{t=1}^{T}\zeta _{t}\hat{R}_{D}\left( \mathbf{z}_{t}\right) +%
\sqrt{\frac{9\ln 2/\delta }{2T}},
\end{equation*}%
where $\zeta _{t}$ is an orthogaussian sequence. Define two Gaussian
processes $\Omega $ and $\Xi $ indexed by $\mathcal{D}_K$ as 
\par
~~~~~~~~$\Omega _{D} =\sum_{t=1}^{T}\zeta _{t}\hat{R}_{D}\left( \mathbf{z}%
_{t}\right)$ 
\par and 
\par
~~~~~~~~$\Xi _{D} =\frac{L\alpha }{\sqrt{m}}\sum_{t=1}^{T}\sum_{i=1}^{m}%
\sum_{k=1}^{K}\zeta _{kij}\left\langle De_{k},x_{ti}\right\rangle$, 
\par where the $\zeta _{ijk}$ are also orthogaussian. Then for $D_{1},D_{2}\in 
\mathcal{D}_{K}$%

\begin{eqnarray*}
&&{\mathbb E}\left( \Omega _{D_{1}}-\Omega _{D_{2}}\right) ^{2}=\\
&=&\hspace{-.2truecm}\sum_{t=1}^{T}\left( 
\hat{R}_{D_{1}}\left( \mathbf{z}_{t}\right) -\hat{R}_{D_{2}}\left( \mathbf{z}%
_{t}\right) \right) ^{2} \\
&\leq &\hspace{-.2truecm}\sum_{t=1}^{T} \Bigg( \sup_{\gamma \in \mathcal{C}_\alpha}\frac{1}{m}%
\sum_{i=1}^{m}\ell \left( \left\langle D_{1}\gamma ,x_{ti}\right\rangle
,y_{ti}\right) \\
&&\hspace{-.2truecm} -\ell \left( \left\langle D_{2}\gamma ,x_{ti}\right\rangle
,y_{ti}\right) \Bigg) ^{2} \\
&\leq &\hspace{-.2truecm}L^{2}\sum_{t=1}^{T}\sup_{\gamma \in \mathcal{C}_\alpha}\left( \frac{1}{m}%
\sum_{i=1}^{m}\left\langle \gamma ,\left( D_{1}^{\top }-D_{2}^{\top }\right)
x_{ti}\right\rangle \right) ^{2}\text{\small Lipschitz} \\
&\leq &\hspace{-.2truecm}\frac{L^{2}}{m}\sum_{t=1}^{T}\sup_{\gamma \in \mathcal{C}_\alpha%
}\sum_{i=1}^{m}\left\langle \gamma ,\left( D_{1}^{\top }-D_{2}^{\top
}\right) x_{ti}\right\rangle ^{2}\text{\small ~~Jensen} \\
&\leq &\hspace{-.2truecm}\frac{L^{2}\alpha ^{2}}{m}\sum_{t=1}^{T}\sum_{i=1}^{m}%
\left\Vert \left( D_{1}^{\top }-D_{2}^{\top }\right) x_{ti}\right\Vert ^{2}%
\text{\small(Cauchy-Schwarz)} \\
&=&\hspace{-.2truecm}\frac{L^{2}\alpha ^{2}}{m}\sum_{t=1}^{T}\sum_{i=1}^{m}\sum_{k=1}^{K}%
\left( \left\langle D_{1}e_{k},x_{ti}\right\rangle -\left\langle
D_{2}e_{k},x_{ti}\right\rangle \right) ^{2} \\
&=&\hspace{-.1truecm} {\mathbb E}\left( \Xi _{D_{1}}-\Xi _{D_{2}}\right) ^{2}.
\end{eqnarray*}%
So by Slepian's Lemma%
\par ${\mathbb E}\sup_{D\in \mathcal{D}_K}\sum_{t=1}^{T}\zeta _{j}\hat{R}_{D}\left( \mathbf{z}%
_{t}\right) $
\begin{eqnarray*}
&=&{\mathbb E}\sup_{D\in \mathcal{D}_K}\Omega _{D}\leq {\mathbb E}\sup_{D\in \mathcal{%
D}}\Xi _{D} \\
&=&\frac{L\alpha }{\sqrt{m}}{\mathbb E}\sup_{D\in \mathcal{D}_K%
}\sum_{t=1}^{T}\sum_{i=1}^{m}\sum_{k=1}^{K}\zeta _{kij}\left\langle
De_{k},x_{ti}\right\rangle  \\
&=&\frac{L\alpha }{\sqrt{m}}{\mathbb E}\sup_{D\in \mathcal{D}_K}\sum_{k=1}^{K}\left%
\langle De_{k},\sum_{t=1}^{T}\sum_{i=1}^{m}\zeta _{kij}x_{ti}\right\rangle 
\\
&\leq &\frac{L\alpha }{\sqrt{m}}\sup_{D\in \mathcal{D}_K}\left(
\sum_{k}\left\Vert De_{k}\right\Vert ^{2}\right) ^{1/2} \\
&& {\mathbb E}\left(
\sum_{k}\left\Vert \sum_{t,i}\zeta _{tki}x_{ti}\right\Vert ^{2}\right) ^{1/2}
\\
&\leq &\frac{L\alpha \sqrt{K}}{\sqrt{m}}\left( \sum_{k}{\mathbb E}\left\Vert
\sum_{t,i}\zeta _{tki}x_{ti}\right\Vert ^{2}\right) ^{1/2} \\
&\leq &\frac{L\alpha \sqrt{K}}{\sqrt{m}}\left( \sum_{k}\sum_{t,i}\left\Vert
x_{ti}\right\Vert ^{2}\right) ^{1/2} \hspace{-.15truecm} \leq L\alpha K\sqrt{TS_{1}\left( \mathbf{X}\right) }.
\end{eqnarray*}%
We therefore have that with probability at least $1-\delta $ in the draw of
the multi sample $\mathbf{Z\sim }\rho ^{T}$ 
\par $\sup_{D\in \mathcal{D}_K}{\mathbb E}_{\mathbf{z}\sim \rho }\left[ \hat{R}_{D}\left( 
\mathbf{z}\right) \right] -\frac{1}{T}\sum_{i=1}^{T}\hat{R}_{D}\left( 
\mathbf{Z}_{t}\right)$
\begin{equation}
 \leq L\alpha K\sqrt{\frac{2\pi S_{1}\left( \mathbf{%
X}\right) }{T}}+\sqrt{\frac{9\ln 2/\delta }{2T}}.
\label{Expected empirical error bound}
\end{equation}%
which in (\ref{Transfer bound decomposition}) combines with (\ref{Bound on first term}) to give the conclusion.
\end{proof}

\begin{proof}[Proof of Theorem \protect\ref{Theorem Main}]
Let $D_{\rm opt}$ and $\gamma _{\tau }$ the minimizers in the definition of $%
R_{\rm opt}$, so that%
\begin{equation*}
R_{\rm opt}={\mathbb E}_{\tau \sim \mathcal{E}}{\mathbb E}_{\left( x,y\right) \sim \mu _{\tau }}\ell %
\left[ \left( \left\langle D_{\rm opt}\gamma _{\tau },x\right\rangle ,y\right) %
\right] .
\end{equation*}%
$R_{\mathcal{E}}\left( A_{D\left( \mathbf{Z}\right) }\right) -R_{\rm opt}$ can be decomposed as the sum of four terms,
\begin{eqnarray}
&&\left( R_{\mathcal{E}}\left( A_{D\left( \mathbf{Z}\right) }\right) -\frac{%
1}{T}\sum_{t=1}^{T}\hat{R}_{D\left( \mathbf{Z}\right) }\left( \mathbf{z}%
_{t}\right) \right)  \label{First} \\
&&+\left( \frac{1}{T}\sum_{t=1}^{T}\hat{R}_{D\left( \mathbf{Z}\right)
}\left( \mathbf{z}_{t}\right) -\frac{1}{T}\sum_{t=1}^{T}\hat{R}%
_{D_{\rm opt}}\left( \mathbf{z}_{t}\right) \right)  \label{Second term} \\
&&+\frac{1}{T}\sum_{t=1}^{T}\hat{R}_{D_{\rm opt}}\left( \mathbf{z}_{t}\right)
-{\mathbb E}_{\mathbf{z\sim \rho }}\hat{R}_{D_{\rm opt}}\left( \mathbf{z}\right)
\label{third term} \\
&&+{\mathbb E}_{\tau \sim \mathcal{E}}\bigg[ {\mathbb E}_{\mathbf{z}\sim \mu _{\tau }^{m}}\hat{R}%
_{D_{\rm opt}}\left( \mathbf{z}\right) \notag \\
&& -{\mathbb E}_{\left( x,y\right) \sim \mu _{\tau
}}\left[ \ell \left( \left\langle D_{\rm opt}\gamma _{\tau },x\right\rangle
,y\right) \right] \bigg] .  \label{fourth term}
\end{eqnarray}%
By definition of $\hat{R}$ we have for every $\tau $ that 
\par ${\mathbb E}_{\mathbf{z}\sim \mu _{\tau }^{m}}\hat{R}_{D_{\rm opt}}\left( \mathbf{z}\right)$
\begin{eqnarray*}
&=&{\mathbb E}_{\mathbf{z}\sim \mu _{\tau }^{m}}\min_{\gamma \in \mathcal{C}_\alpha}\frac{1}{m%
}\sum_{i=1}^{m}\ell \left[ \left( \left\langle D_{\rm opt}\gamma
,x_{i}\right\rangle ,y_{i}\right) \right] \\
&\leq &{\mathbb E}_{\mathbf{z}\sim \mu _{\tau }^{m}}\frac{1}{m}\sum_{i=1}^{m}\ell %
\left[ \left( \left\langle D_{\rm opt}\gamma _{\tau },x_{i}\right\rangle
,y_{i}\right) \right] \\
&=& {\mathbb E}_{\left( x,y\right) \sim \mu _{\tau }}\ell \left[ \left( \left\langle
D_{\rm opt}\gamma _{\tau },x\right\rangle ,y\right) \right] .
\end{eqnarray*}%
The term (\ref{fourth term}) above is therefore non-positive. By Hoeffding's
inequality the term (\ref{third term}) is less than $\sqrt{\ln \left(
2/\delta \right) /2T}$ with probability at least $1-\delta /2$. The term \eqref%
{Second term} is non-positive by the definition of $D\left( \mathbf{Z}%
\right) $. Finally we use Proposition \ref{Proposition uniform} to obtain
with probability at least $1-\delta /2$ that%
\par $R_{\mathcal{E}}\left( A_{D\left( \mathbf{Z}\right) }\right) -\frac{1}{T}%
\sum_{t=1}^{T}\hat{R}_{D\left( \mathbf{Z}\right) }\left( \mathbf{z}%
_{t}\right) $
\begin{eqnarray*}
&\leq &\sup_{D\in \mathcal{D}_K}R_{\mathcal{E}}\left( A_{D}\right)
-\frac{1}{T}\sum_{t=1}^{T}\hat{R}_{D}\left( \mathbf{z}_{t}\right) \\
&\leq &L\alpha K\sqrt{\frac{2\pi S_{1}\left( \mathbf{X}\right) }{T}} \\
&+& 4L\alpha \sqrt{\frac{S_{\infty }\left( \mathcal{E}\right) \left( 2+\ln
K\right) }{m}}+\sqrt{\frac{9\ln 4/\delta }{2T}}.
\end{eqnarray*}%
Combining these estimates on (\ref{First}), (\ref{Second term}), (\ref{third
term}) and (\ref{fourth term}) in a union bound gives the conclusion.
\end{proof}

\end{document}
